\documentclass{article}

\PassOptionsToPackage{numbers,compress}{natbib}
\usepackage[preprint]{neurips_2026}

\usepackage{hyperref}
\usepackage{url}
\usepackage{enumitem}
\usepackage{tcolorbox}

\usepackage{amsmath,amssymb,amsthm}

\usepackage{amsmath,amsfonts,bm}

\newtcolorbox{takeawaybox}{
    enhanced,
    breakable,
    colback=gray!3,
    colframe=black!60,
    boxrule=0.5pt,
    arc=1mm,
    left=2mm,
    right=2mm,
    top=2mm,
    bottom=2mm,
    overlay={
        \node[
            anchor=west,
            fill=white,
            inner xsep=1mm,
            font=\bfseries\small
        ] at ([xshift=2mm]frame.north west) {Takeaway};
    },
}

\def\secref#1{section~\ref{#1}}
\def\Secref#1{Section~\ref{#1}}

\def\eqref#1{equation~\ref{#1}}

\def\algref#1{algorithm~\ref{#1}}

\def\1{\bm{1}}

\DeclareMathAlphabet{\mathsfit}{\encodingdefault}{\sfdefault}{m}{sl}
\SetMathAlphabet{\mathsfit}{bold}{\encodingdefault}{\sfdefault}{bx}{n}

\newcommand{\E}{\mathbb{E}}

\newcommand{\R}{\mathbb{R}}

\providecommand{\R}{\mathbb{R}} %

\providecommand{\cN}{\mathcal{N}}

\newenvironment{talign*}
{\csname align*\endcsname}
{\endalign}

\usepackage[utf8]{inputenc}         %
\usepackage[T1]{fontenc}            %
\usepackage{url}                    %
\usepackage{booktabs}               %
\usepackage{tabularx}               %
\usepackage{amsfonts}               %
\usepackage{nicefrac}               %
\usepackage{microtype}              %
\usepackage{xcolor}                 %
\usepackage{algorithm}
\usepackage{algpseudocode}
\usepackage{graphicx}
\usepackage{subcaption}
\usepackage[flushleft]{threeparttable}
\usepackage{float}
\usepackage{multirow}
\usepackage{makecell}
\usepackage{xspace}
\usepackage{enumitem}
\usepackage[font=small]{caption}
\usepackage{autobreak}
\usepackage{sidecap}
\usepackage{wrapfig}
\usepackage{bbding}
\usepackage[toc, page, header]{appendix}
\usepackage{tikz}
\usetikzlibrary{calc,trees,positioning,arrows,chains,shapes.geometric,%
    decorations.pathreplacing,decorations.pathmorphing,shapes,%
    matrix,shapes.symbols}
\usepackage{xcolor}
\usepackage{pifont}
\usepackage{mdframed}
\usepackage{colortbl}

\providecommand{\fullcheck}{\mbox{\ding{52}}}
\providecommand{\fullcross}{\mbox{\ding{55}}}
\providecommand{\halfcheck}{\mbox{\ding{52}\rotatebox[origin=c]{-9.2}{\kern-0.7em\ding{55}}}}

\usepackage{tcolorbox}
\tcbuselibrary{skins, breakable, theorems}
\usepackage{empheq}
\usepackage{arydshln}
\usepackage{bm}
\usepackage[capitalize]{cleveref}
\usepackage{listings}

\hypersetup{
    colorlinks=true,
    linkcolor=blue,
    citecolor=blue,
    urlcolor=blue
}

\definecolor{coral}{RGB}{255,127,80}
\definecolor{darkgreen}{RGB}{0,100,0}
\definecolor{darkyellow}{RGB}{204,153,0}
\definecolor{salmon}{RGB}{250,128,114}
\definecolor{darkred}{RGB}{150,0,0}
\newcommand{\darkredtext}[1]{{\color{darkred}#1}}

\definecolor{eqbg}{gray}{0.95}
\definecolor{indomaincolor}{rgb}{0.9, 0.95, 1.0}
\definecolor{oodcolor}{rgb}{1.0, 0.9, 0.9}

\newcommand{\transparentgray}[1]{%
    \tikz[baseline=(X.base)] \node[fill=gray, fill opacity=0.1, text opacity=1, inner sep=2pt, outer sep=0pt] (X) {#1};%
}

\newcommand{\transparentyellow}[1]{%
    \tikz[baseline=(X.base)] \node[fill=yellow, fill opacity=0.1, text opacity=1, inner sep=2pt, outer sep=0pt] (X) {#1};%
}

\renewcommand{\secref}[1]{\hyperref[#1]{\darkredtext{Sec.~\ref*{#1}}}}
\renewcommand{\Secref}[1]{\hyperref[#1]{\darkredtext{Sec.~\ref*{#1}}}}
\providecommand{\thmref}[1]{\hyperref[#1]{\darkredtext{Thm.~\ref*{#1}}}}
\providecommand{\defref}[1]{\hyperref[#1]{\transparentgray{Definition~\ref*{#1}}}}
\providecommand{\propref}[1]{\hyperref[#1]{\darkredtext{Prop.~\ref*{#1}}}}
\providecommand{\assumpref}[1]{\hyperref[#1]{\darkredtext{Assump.~\ref*{#1}}}}
\providecommand{\remarkref}[1]{\hyperref[#1]{\transparentyellow{Remark~\ref*{#1}}}}
\providecommand{\conjref}[1]{\hyperref[#1]{\darkredtext{Conj.~\ref*{#1}}}}
\providecommand{\lemref}[1]{\hyperref[#1]{\darkredtext{Lem.~\ref*{#1}}}}
\providecommand{\corref}[1]{\hyperref[#1]{\darkredtext{Cor.~\ref*{#1}}}}
\providecommand{\noteref}[1]{\hyperref[#1]{\darkredtext{Nota.~\ref*{#1}}}}
\providecommand{\claimref}[1]{\hyperref[#1]{\darkredtext{Clm.~\ref*{#1}}}}
\providecommand{\algref}[1]{\hyperref[#1]{\darkredtext{Alg.~\ref*{#1}}}}
\providecommand{\algmref}[1]{\hyperref[#1]{\darkredtext{Alg.~\ref*{#1}}}}
\providecommand{\figref}[1]{\hyperref[#1]{\darkredtext{Fig.~\ref*{#1}}}}
\providecommand{\tabref}[1]{\hyperref[#1]{\darkredtext{Tab.~\ref*{#1}}}}
\providecommand{\appref}[1]{\hyperref[#1]{\darkredtext{App.~\ref*{#1}}}}

\newtheoremstyle{professional}
{10pt} 
{10pt} 
{\itshape} 
{} 
{\bfseries} 
{.} 
{.5em} 
{} 

\theoremstyle{professional}
\newtheorem{myth}{Theorem}[section]
\newtheorem{myprop}[myth]{Proposition}
\newtheorem{mylem}[myth]{Lemma}
\newtheorem{mycor}[myth]{Corollary}
\newtheorem{mydef}[myth]{Definition}
\newtheorem{myassump}[myth]{Assumption}
\newtheorem{myrem}[myth]{Remark}
\newtheorem{myhyp}[myth]{Hypothesis}
\newtheorem{myconj}[myth]{Conjecture}
\newtheorem{mynota}[myth]{Notation}
\newtheorem{myclaim}[myth]{Claim}
\newtheorem{myprob}[myth]{Problem}
\newtheorem{myobs}[myth]{Observation}

\tcbset{
    thmbox/.style={
            enhanced,
            breakable,
            sharp corners,
            boxrule=0pt,
            leftrule=3pt,
            top=0pt,
            bottom=0pt,
            left=5pt,
            right=5pt,
            before skip=10pt,
            after skip=10pt,
        }
}

\newenvironment{theorem}{\begin{tcolorbox}[thmbox, colback=red!5!white, colframe=red!75!black]\begin{myth}}{\end{myth}\end{tcolorbox}}
\newenvironment{proposition}{\begin{tcolorbox}[thmbox, colback=blue!5!white, colframe=blue!75!black]\begin{myprop}}{\end{myprop}\end{tcolorbox}}
\newenvironment{lemma}{\begin{tcolorbox}[thmbox, colback=cyan!5!white, colframe=cyan!75!black]\begin{mylem}}{\end{mylem}\end{tcolorbox}}

\newenvironment{assumption}{\begin{tcolorbox}[thmbox, colback=green!5!white, colframe=green!75!black]\begin{myassump}}{\end{myassump}\end{tcolorbox}}

\newtheorem{innernote}{Note}

\newtheorem{innerexercise}{Exercise}

\definecolor{codegreen}{rgb}{0,0.6,0}
\definecolor{codegray}{rgb}{0.5,0.5,0.5}
\definecolor{codepurple}{rgb}{0.58,0,0.82}
\definecolor{backcolour}{rgb}{0.95,0.95,0.92}

\lstdefinestyle{mystyle}{
    backgroundcolor=\color{backcolour},
    commentstyle=\color{codegreen},
    keywordstyle=\color{magenta},
    numberstyle=\tiny\color{codegray},
    stringstyle=\color{codepurple},
    basicstyle=\ttfamily\footnotesize,
    breakatwhitespace=false,
    breaklines=true,
    captionpos=b,
    keepspaces=true,
    numbers=left,
    numbersep=5pt,
    showspaces=false,
    showstringspaces=false,
    showtabs=false,
    tabsize=2
}
\hypersetup{
  pdftitle={When Can Conditional Flow Matching Replace Pointwise Negative Log-Likelihood?},
  pdfauthor={Yansen Han, Hongxin Sun, Tao Lin}
}

\title{When Can Conditional Flow Matching Replace Pointwise Negative Log-Likelihood?}

\author{
Yansen Han$^{1,3}$ \quad
Hongxin Sun$^{2,3}$ \quad
Tao Lin$^{3}$\thanks{Corresponding author}
\\[0.5em]
$^1$Zhejiang University \quad
$^2$Fudan University \quad
$^3$Westlake University
}

\date{}

\begin{document}

\raggedbottom

\maketitle

\begin{abstract}
    Flow matching enables likelihood-free training, yet alignment methods increasingly reuse conditional flow matching (CFM) losses as endpoint negative log-likelihoods (NLLs) and their old/new differences as log-likelihood ratios. We characterize when these substitutions are valid.  For linear Gaussian paths, we exactly decompose endpoint NLL into entropy, a weighted CFM objective, an interior velocity--score residual, and a boundary residual.  Thus CFM-only estimates and differences are exact only when the corresponding residuals cancel.  At the off-policy population optimum, ordinary CFM is not generally a pointwise NLL estimator, whereas \(w_{\mathrm{sc}}(t)=(1-t)/t\) removes the interior residual; this positive result does not extend generally to training or on-policy alignment.  On-policy log-ratios can remain biased even for identical endpoint laws or after surrogate optimization.  Experiments across dimensions, distributions, and geometries support these conclusions and the mechanisms that make inexact ratios useful.  \textbf{More broadly, the decomposition provides a theoretical basis for adapting likelihood-based LLM methods to flow matching, while distinguishing exact substitutions from controlled surrogates.}
\end{abstract}

\begin{table*}[htbp]
    \centering
    \scriptsize
    \setlength{\tabcolsep}{3.2pt}
    \renewcommand{\arraystretch}{1.18}
    
    \caption{
    Summary of when CFM-only objectives or ratios can replace clean NLL or clean likelihood ratios;
    \secref{sec:experiments} summarizes the empirical support. \fullcheck{}: identity-level replacement up to the endpoint boundary term
    \(\mathcal B_\varepsilon\to0\);
    \halfcheck{}: task-level or local surrogate validity;
    \fullcross{}: no general guarantee.
    }
    \label{tab:cfm-nll-answer}
    
    \begin{tabularx}{\textwidth}{
    >{\raggedright\arraybackslash}p{0.12\textwidth}
    >{\raggedright\arraybackslash}X
    >{\raggedright\arraybackslash}X
    >{\raggedright\arraybackslash}X
    >{\raggedright\arraybackslash}X
    }
    \toprule
    & \multicolumn{2}{c}{\textbf{Off-policy: fixed-target CFM}}
    & \multicolumn{2}{c}{\textbf{On-policy: CFM-ratio surrogate}} \\
    \cmidrule(lr){2-3}\cmidrule(lr){4-5}
    
    \textbf{Stage}
    & \textbf{Ordinary CFM}
    & \textbf{Score-calibrated CFM}
    & \textbf{CFM-only ratio}
    & \textbf{Score-calibrated ratio} \\
    \midrule
    
    \textbf{During optimization}
    &
    \fullcross{} No identity guarantee.
    Away from the fixed-target CFM optimum, \(\mathcal G\) need not vanish.
    \newline
    \emph{See} \propref{prop:pw-linear-training-counterexample}.
    &
    \fullcross{} No identity guarantee.
    The score-calibrated weight does not remove the interior gap \(\mathcal G\) during optimization.
    \newline
    \emph{See} \propref{prop:pw-linear-training-counterexample}.
    &
    \halfcheck{} Empirical controlled-update surrogate.
    Useful in the tested settings, but biased in general.
    \newline
    \emph{See} \propref{prop:onpolicy-cfm-ratio-bias}.
    &
    \halfcheck{} Empirical controlled-update surrogate.
    Useful in the tested settings, but biased in general.
    \newline
    \emph{See} \propref{prop:onpolicy-rotational-mismatch}.
    \\
    
    \addlinespace[0.35em]
    
    \textbf{After optimization}
    &
    \fullcross{} No pointwise identity.
    Ordinary CFM recovers the population-optimal velocity, but its pointwise objective is not likelihood-calibrated.
    \newline
    \emph{See} \propref{prop:pw-linear-optimum-calibration}.
    &
    \fullcheck{} Identity-level replacement.
    At the score-calibrated CFM optimum,
    interior gap \(\mathcal G=0\) with \(w=w_{\mathrm{sc}}\).
    \newline
    \emph{See} \propref{prop:pw-linear-optimum-calibration}.
    &
    \fullcross{} No general guarantee.
    Optimizing CFM-only ratio does not enforce the relative calibration condition, i.e.,
    \(\Delta\mathcal G+\Delta\mathcal B=0\).
    \newline
    \emph{See} \propref{prop:onpolicy-after-cfm-ratio-gap-nonzero}.
    &
    \fullcross{} No general guarantee.
    Even score calibration is not enough unless the learned ratio satisfies the relative calibration condition.
    \newline
    \emph{See} \propref{prop:onpolicy-after-cfm-ratio-gap-nonzero}.
    \\
    
    \bottomrule
    \end{tabularx}
    
    \vspace{0.25em}
    \begin{minipage}{0.96\textwidth}
    \scriptsize
    \textbf{Note.}
    ``After optimization'' means different things in the two settings.
    For off-policy, it refers to convergence to the fixed-target CFM population optimum.
    For on-policy, it refers to optimizing the CFM-only ratio surrogate, which does not by itself guarantee exact clean-ratio calibration.
    Ordinary CFM uses \(w=1\), while score-calibrated CFM uses \(w=w_{\mathrm{sc}}\).
    \end{minipage}
\end{table*}    

\clearpage
\section{Introduction}
\label{sec:introduction}

Flow matching is attractive partly because it replaces likelihood-based training with supervised regression along prescribed probability paths~\citep{lipman2022flow,albergo2022building,liu2022flow,sun2025unified}. However, its likelihood-free treatment becomes bottlenecks in reward-based post-training: PPO- and GRPO-style updates require per-sample likelihood ratios~\citep{schulman2017proximal,shao2024deepseekmath}, while evaluating the endpoint density of a continuous generator is expensive. Recent forward-process methods avoid estimating likelihoods by using samplewise CFM to construct likelihood surrogates~\citep{xue2025advantage,mcallister2025fmpg,zheng2025diffusionnft,tang2026vgrpo,choi2026rethinking}. These designs are computationally appealing and stay close to pretraining, but the likelihood-surrogate case raises a basic question:

\begin{center}
\begin{tcolorbox}[
    enhanced,
    colback=gray!3,
    colframe=black!55,
    boxrule=0.45pt,
    arc=1mm,
    width=0.92\linewidth,
    left=1.8mm,
    right=1.8mm,
    top=1.2mm,
    bottom=1.2mm
]
\centering
\emph{When is replacing pointwise likelihoods by CFM-only quantities exact, and what makes the replacement useful when it is not exact?}
\end{tcolorbox}
\end{center}

The difficulty is a mismatch between likelihoods and CFM-based surrogates.  For a fixed endpoint \(x_0\), the pointwise CFM objective averages a squared velocity-regression error along noisy conditional paths, whereas the pointwise NLL \(-\log p_0^\theta(x_0)\) is determined by the endpoint marginal density. Expectation-level equivalence does not guarantee pointwise equivalence. Thus, a model may recover the population-optimal velocity while its pointwise CFM objective differs from the NLL for individual endpoints. The requirement is stronger in on-policy alignment: the difference between old and new pointwise CFM objectives must track the \emph{change} in endpoint log-density for every endpoint whose contribution is reweighted or clipped. What is missing is therefore a pointwise criterion that distinguishes an exact likelihood identity from a reward-effective but biased surrogate.

Continuous normalizing flows permit endpoint likelihood evaluation, and diffusion objectives admit ELBO or score-matching likelihood interpretations~\citep{chen2018neural,song2021maximum,lu2022maximum,kingma2023understanding,zheng2023improved}. Neither result makes a samplewise CFM objective the clean endpoint NLL. Reverse-process RL instead optimizes ratios over discretized denoising trajectories~\citep{black2023training,liu2025flow,li2025mixgrpo,he2025tempflow}, which are not generally the corresponding marginalized endpoint ratios. Forward-process methods construct CFM-based likelihood surrogates to replace the clean endpoint log-ratio~\citep{xue2025advantage,zheng2025diffusionnft,bergmeister2026reinforce,mcallister2025fmpg,tang2026vgrpo,choi2026rethinking}. Our analysis is to identify when the CFM-based surrogates are exact and when they are useful local surrogates in post-training.

We theoretically characterize this exactness of substitution by exposing the exact pointwise gap.  For the linear Gaussian path and a fixed clean endpoint \(x_0\), we prove the decomposition
\begin{align*}
    -\log p_0^\theta(x_0)
    &=
    H(q_\varepsilon^{x_0})
    +
    \E_{t \in U[\varepsilon,1], X_t\sim q_t^{x_0}}
    \bigl[ w(t)\, \|v_t^\theta(X_t;x_0) - u_t^{x_0}(X_t)\|^2 \bigr]\,\\
    &\quad
    +
    \mathcal G_{\varepsilon,w}(\theta;x_0)
    +
    \mathcal B_\varepsilon(\theta;x_0).
\end{align*}
Here \(\E\bigl[ w(t)\, \|v_t^\theta(X_t;x_0) - u_t^{x_0}(X_t)\|^2 \bigr]\) is the weighted pointwise CFM objective, \(\mathcal G_{\varepsilon,w}\) is an interior residual coupling the velocity and score gaps, and \(\mathcal B_\varepsilon\) is the boundary residual introduced by positive-time smoothing. The identity is both a diagnosis and an exactness criterion: a CFM-only estimate equals endpoint NLL precisely when \(\mathcal G_{\varepsilon,w}+\mathcal B_\varepsilon=0\), and an old/new CFM difference is a clean endpoint log-ratio precisely when the corresponding relative residual cancels. The criterion produces sharply different conclusions for off-policy and on-policy uses, summarized in \tabref{tab:cfm-nll-answer}.

Our experiments examine the theory at three levels: numerical examination of the decomposition identity, its off-policy and on-policy consequences, and the stable use of an inexact ratio.  First, across 1D and 2D distributions and high-dimensional Gaussian mixtures with tractable endpoint densities, independently estimated terms demonstrate that the decomposition identity is more accurate than the CFM-only estimate in every off-policy experiment. Using \(w_{\mathrm{sc}}\) reduces fixed-target CFM-only NLL MAE from \(0.800\) to \(0.223\) in 1D, from \(1.294\) to \(0.377\) in 2D, and from \(19.402\) to \(3.012\) in 32D. Second, the on-policy experiments show that optimization success does not imply global ratio accuracy: CFM-ratio training reaches rewards of \(0.995\), \(0.981\), and \(0.910\) in representative 1D, 2D, and 32D experiments, while the corresponding log-ratio MAE reaches \(7.900\), \(1.570\), and \(1382.955\).  Third, EMA and clipping sweeps measure associations between controlled reference and reward stability. MNIST and CIFAR-10 experiments test only whether the results for successful optimization using inexact ratios persist for real-image datasets.

Our contributions are threefold:
\begin{itemize}[leftmargin=1.35em,itemsep=0.15em,topsep=0.25em]
\item \textbf{An exact pointwise NLL decomposition.}
For Gaussian paths, we decompose endpoint NLL into entropy, CFM objective, an interior velocity--score residual, and a boundary residual.  This turns pointwise likelihood substitution from an experimental usefulness into an analytical quantity.
\item \textbf{A clear separation between off-policy and on-policy regimes.}
At the off-policy population optimum, we show that ordinary CFM is not generally a pointwise NLL estimator, whereas \(w_{\mathrm{sc}}\) can narrow the gap between CFM and NLL. For on-policy use, we derive the exact bias and gap-tilted reward of CFM-derived ratios and construct counterexamples showing that an exact endpoint log-ratio is not guaranteed.
\item \textbf{A cross-scale numerical and controlled empirical study.}
Experiments from 1D to 32D verify the decomposition identity and test its off-policy and on-policy consequences. EMA, clipping, and regularization sweeps study which mechanisms are associated with stable optimization using biased CFM-derived ratios. Raw-image experiments test whether these stablization effects can be transferred to real-image datasets.
\end{itemize}

\section{Related Work}
\label{sec:related-work}

\paragraph{Likelihood and distribution-level analyses.}
Continuous normalizing flows evaluate likelihoods by integrating vector field divergence~\citep{chen2018neural,song2020score}. Related analyses connect weighted score matching to maximum likelihood, identify limitations of first-order score objectives for ODE likelihood, develop higher-order objectives for likelihood fine-tuning, and interpret monotone-weighted diffusion objectives as ELBOs on noise-augmented data~\citep{song2021maximum,lu2022maximum,zheng2023improved,kingma2023understanding}. Other work bounds distribution or probability-path error through log-mass conservation, divergence-augmented CFM, or end-to-end Wasserstein analysis~\citep{benhamu2022matching,huang2025divergence,zhou2025error}. These results characterize ODE likelihoods, variational bounds, or distribution-level errors, but do not establish the relationship between the pointwise CFM objective and the NLL. We instead derive a pointwise identity whose residuals determine when a CFM objective represents an endpoint NLL and when an old/new difference represents an endpoint log-ratio.

\paragraph{Likelihood ratios over reverse processes.}
PPO and preference objectives rely on old/new likelihood ratios or related log-density differences~\citep{schulman2017proximal,shao2024deepseekmath,rafailov2023direct,wallace2024diffusion}. Diffusion RL commonly treats a discretized denoising process as the policy, yielding a tractable but sampler-dependent trajectory ratio~\citep{black2023training,liu2025flow,li2025mixgrpo,he2025tempflow}. This trajectory ratio need not equal the clean endpoint ratio. We take the endpoint ratio as the target and study when a CFM-only estimate can recover the endpoint ratio.

\paragraph{Likelihood ratios over forward processes.}
Closest to our setting, FPO constructs a PPO-style ratio from differences of conditional flow-matching losses~\citep{mcallister2025fmpg}, whereas other recent methods use ELBO-based likelihood surrogates~\citep{tang2026vgrpo,choi2026rethinking}. A separate line of work directly reweights matching losses, adapts preference or reward weighting, contrasts positive and negative generations, or learns reward-corrected consistency targets~\citep{xue2025advantage,liu2025improving,zheng2025diffusionnft,bergmeister2026reinforce}. These objectives do not require the matching loss to equal an endpoint ratio. Our decomposition identity specifically characterizes CFM-derived ratios and identifies when they are exact endpoint ratios.

\section{Preliminaries}
\label{sec:preliminaries}

We give the notation needed for the replacement question. Let $p_1=\mathcal N(0,I_d)$ be the base distribution, $p_0$ denote clean data, and a flow model be defined by a vector field $v_\theta:\mathbb R^d\times[0,1]\to\mathbb R^d$ whose marginals satisfy the continuity equation
\begin{equation}
\partial_t p_t^\theta+\nabla\cdot(p_t^\theta v_\theta)=0,
\qquad p_1^\theta=p_1.
\label{eq:prelim-model-continuity}
\end{equation}
The endpoint likelihood is
\begin{equation}
\ell_{\mathrm{NLL}}(\theta;x_0):=-\log p_0^\theta(x_0),
\label{eq:prelim-pointwise-nll}
\end{equation}
which can be accurately computed, but is usually avoided in training due to its expensive cost.

\subsection{Gaussian Paths and Conditional Flow Matching}
\label{subsec:gaussian-paths-cfm}

For a linear Gaussian path, $X_t= (1-t) X_0 + t X_1, X_0\sim p_0, X_1\sim p_1$, with $X_1$ independent of $X_0$, the conditional law is $q_t(\cdot\mid X_0)=\mathcal N((1-t) X_0,t^2I_d)$ and the conditional velocity is
$u_t(x\mid X_0) = \frac{1}{t}(x - X_0)$. The weighted conditional flow matching objective is
\begin{equation}
\mathcal L_{\mathrm{CFM}}(\theta)
:=
\int_0^1 w(t)\,
\mathbb E\!\left[
\|v_\theta(X_t,t)-u_t(X_t\mid X_0)\|^2
\right]dt .
\label{eq:prelim-fm-objective}
\end{equation}
Throughout, we refer to CFM with $w(t)=1$ as \textbf{ordinary CFM}, and to CFM with $w_{sc}(t)=\frac{1-t}{t}$ as \textbf{score-calibrated CFM}.

\subsection{Clean Ratios for Preference Alignment}
\label{subsec:diffusion-rl-background}

For a condition \(c\), a reward \(R(x_0,c)\), and an old model \(\theta_{\mathrm{old}}\), the ideal update increases \(\mathbb E_{c, x_0\sim p_0^\theta(\cdot\mid c)}[R(x_0,c)]\) while keeping the new endpoint law close to \(p_0^{\theta_{\mathrm{old}}}(\cdot\mid c)\). PPO/GRPO-style updates use samples from the old model, so the reward objective is reweighted by an old/new likelihood ratio.  The clean endpoint ratio is
\begin{equation}
\rho_\theta(x_0,c)
:=
\frac{p_0^\theta(x_0\mid c)}
{p_0^{\theta_{\mathrm{old}}}(x_0\mid c)} .
\label{eq:prelim-clean-ratio}
\end{equation}
Equivalently, \(\rho_\theta\) is the clean old/new endpoint ratio used for importance weighting, clipping, or trust region control.  This is the ratio that preference alignment would use if likelihoods were available.
 

\section{A Pointwise Gap Decomposition Between NLL and CFM}
\label{sec:pointwise-gap-decomposition}

This section gives the decomposition identity used by all later cases. It explains what must be true for a pointwise CFM, or a difference of two CFM, to serve as a clean endpoint NLL or clean log-ratio.

\paragraph{Setup.}
Fix a clean endpoint $x_0\in\R^d$ and specialize the Gaussian path in \secref{subsec:gaussian-paths-cfm} to
\[
X_t=(1-t)x_0+tX_1 \sim q_t^{x_0}=\cN((1-t)x_0,t^2I_d), \qquad X_1\sim\cN(0,I_d).
\]
Let $u_t^{x_0}(x)=\frac{x-x_0}{t}$ be the velocity field corresponding to $q_t^{x_0}$.  We compare this conditional path $(q_t^{x_0},u_t^{x_0})$ to the model path $(p_t^\theta,v_\theta)$ through the velocity and score gaps:
\begin{equation*}
\Delta v_t^\theta(x;x_0)
:=
v_\theta(x,t)-u_t^{x_0}(x),
\qquad
\Delta s_t^\theta(x;x_0)
:=
\nabla\log q_t^{x_0}(x) - \nabla\log p_t^\theta(x)
\end{equation*}

Because $q_t^{x_0}$ becomes a Dirac mass at $t=0$, we work at positive time
$\varepsilon\in(0,1)$ and define
\begin{equation*}
\ell_\varepsilon(\theta;x_0)
:=
\E_{X_\varepsilon\sim q_\varepsilon^{x_0}}
\bigl[-\log p_\varepsilon^\theta(X_\varepsilon)\bigr],
\quad
\mathcal J_w^{[\varepsilon,1]}(\theta;x_0)
:=
\int_\varepsilon^1
w(t)\,
\E_{X_t\sim q_t^{x_0}}
\bigl[
\|\Delta v_t^\theta(X_t;x_0)\|^2
\bigr]\,dt
\end{equation*}
We use the following regularity assumptions to guarantee the validity of differentiation under the integral sign, integration by parts, and the
endpoint limit.

\begin{assumption}[Pathwise regularity]
\label{ass:pw-pathwise-regularity}
For the fixed $x_0$, the densities $q_t^{x_0}$ and $p_t^\theta$ are strictly positive and $C^1$ in $x$ for $t\in(0,1)$.  Their continuity equations hold in strong form, $t\mapsto\mathrm{KL}(q_t^{x_0}\|p_t^\theta)$ is differentiable, and the integrations by parts used below have no boundary terms.
\end{assumption}

\begin{assumption}[Endpoint regularity]
\label{ass:pw-endpoint-regularity}
The map $f_\theta(x,t):=-\log p_t^\theta(x)$ is jointly continuous at
$(x_0,0)$.  Moreover, for some $\varepsilon_0>0$, $C>0$, and $m\ge1$,
\begin{equation*}
|f_\theta(x,t)|\le C(1+\|x\|^m),
\qquad
x\in\R^d,\quad t\in[0,\varepsilon_0].
\end{equation*}
\end{assumption}

The \eqref{eq:pw-exact-identity} shows why the CFM objective is not a likelihood. Instead, the endpoint NLL is governed by a mixed velocity--score term.

\begin{theorem}[Pointwise NLL is a CFM term plus residuals]
\label{thm:pw-practical-decomposition}
Suppose \assumpref{ass:pw-pathwise-regularity} and \assumpref{ass:pw-endpoint-regularity} hold. Then, for every positive weight $w$ and every $\varepsilon\in(0,1)$,
\begin{align}
\ell_\varepsilon(\theta;x_0)
&=
H(q_\varepsilon^{x_0})
+
\int_\varepsilon^1
\E_{q_t^{x_0}}
\bigl[
\langle \Delta v_t^\theta(X_t;x_0), \Delta s_t^\theta(X_t;x_0)\rangle
\bigr]\,dt,
\label{eq:pw-exact-identity}\\
-\log p_0^\theta(x_0)
&=
H(q_\varepsilon^{x_0})
+
\mathcal J_w^{[\varepsilon,1]}(\theta;x_0)
+
\mathcal G_{\varepsilon,w}(\theta;x_0)
+
\mathcal B_\varepsilon(\theta;x_0),
\label{eq:pw-practical-decomposition}
\end{align}
where
\begin{align*}
&\mathcal G_{\varepsilon,w}(\theta;x_0) :=\int_\varepsilon^1\E_{q_t^{x_0}}\bigl[\langle \Delta v_t^\theta(X_t;x_0), \Delta s_t^\theta(X_t;x_0)-w(t)\Delta v_t^\theta(X_t;x_0)\rangle\bigr]\,dt,\\
H(q_\varepsilon^{x_0})&=\frac d2\log(2\pi e\,\varepsilon^2), \quad 
\mathcal B_\varepsilon(\theta;x_0) :=-\log p_0^\theta(x_0)-\ell_\varepsilon(\theta;x_0), \quad
\lim_{\varepsilon\downarrow0}\mathcal B_\varepsilon(\theta;x_0)=0.
\end{align*}
\end{theorem}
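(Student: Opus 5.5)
The plan is to derive the time derivative of a KL divergence between the two paths, integrate from $\varepsilon$ to $1$, and then algebraically rearrange. First write
\[
\ell_\varepsilon(\theta;x_0)=H(q_\varepsilon^{x_0})+\mathrm{KL}(q_\varepsilon^{x_0}\|p_\varepsilon^\theta),
\]
using $\E_q[-\log p]=\E_q[-\log q]+\E_q[\log(q/p)]$. The Gaussian entropy $\frac d2\log(2\pi e\,\varepsilon^2)$ follows from $q_\varepsilon^{x_0}=\cN((1-\varepsilon)x_0,\varepsilon^2 I_d)$. At $t=1$ both laws are $\cN(0,I_d)$: the model satisfies $p_1^\theta=p_1$, and $q_1^{x_0}=\cN(0,I_d)$. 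Hence $\mathrm{KL}(q_1^{x_0}\|p_1^\theta)=0$ and
\[
\mathrm{KL}(q_\varepsilon^{x_0}\|p_\varepsilon^\theta)=-\int_\varepsilon^1\frac{d}{dt}\mathrm{KL}(q_t^{x_0}\|p_t^\theta)\,dt .
\]
Differentiability of the KL on $(0,1)$ is granted by \assumpref{ass:pw-pathwise-regularity}. I would handle the endpoint $t=1$ either by the same assumption or by taking the limit $t\uparrow1$.

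The core computation is the standard KL-derivative formula along two continuity equations, $\partial_t q+\nabla\cdot(q u)=0$ and $\partial_t p+\nabla\cdot(p v)=0$. Differentiate $\int q\log(q/p)$ under the integral sign. The term $\int\partial_t q$ vanishes because mass is conserved. Substitute the continuity equations and integrate by parts, with no boundary terms by assumption. This gives
\[
\frac{d}{dt}\mathrm{KL}=\int q\,\langle u,\nabla\log q-\nabla\log p\rangle-\int q\,\langle v,\nabla\log q-\nabla\log p\rangle \cdot\frac{p}{p}\,,
\]
which simplifies to
\[
\frac{d}{dt}\mathrm{KL}=-\E_q\bigl[\langle v-u,\nabla\log q-\nabla\log p\rangle\bigr].
\]
Concretely, the second piece arises from $-\int \frac{q}{p}\partial_t p=\int \frac{q}{p}\nabla\cdot(pv)=-\int p\,\langle v,\nabla(q/p)\rangle=-\int q\,\langle v,\nabla\log(q/p)\rangle$. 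The first piece comes from $\int \partial_t q\,\log(q/p)=\int q\,\langle u,\nabla\log(q/p)\rangle$. The result is exactly $-\E_{q_t^{x_0}}[\langle\Delta v_t^\theta,\Delta s_t^\theta\rangle]$. Inserting it into the integral above yields \eqref{eq:pw-exact-identity}.

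For \eqref{eq:pw-practical-decomposition}, I would add and subtract $w(t)\|\Delta v\|^2$ inside the integrand. The $w\|\Delta v\|^2$ part integrates to $\mathcal J_w^{[\varepsilon,1]}$ and the remainder is $\mathcal G_{\varepsilon,w}$ by definition. Adding $\mathcal B_\varepsilon:=-\log p_0^\theta(x_0)-\ell_\varepsilon$ then makes the identity tautological. The only caveat is integrability: I would note that if $\mathcal J_w$ is finite then $\mathcal G$ is finite, because the total integral equals a finite KL. Positivity of $w$ is only needed so that $\mathcal J_w$ is a genuine weighted CFM loss.

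The remaining substantive claim, and the main obstacle, is $\mathcal B_\varepsilon\to0$, that is, $\E_{q_\varepsilon^{x_0}}[f_\theta(X_\varepsilon,\varepsilon)]\to f_\theta(x_0,0)$. Write $X_\varepsilon=(1-\varepsilon)x_0+\varepsilon Z$ with $Z\sim\cN(0,I_d)$. Then $(X_\varepsilon,\varepsilon)\to(x_0,0)$ almost surely, so joint continuity from \assumpref{ass:pw-endpoint-regularity} gives pointwise convergence of the integrand. For $\varepsilon\le\min(\varepsilon_0,1/2)$, the polynomial bound gives
\[
|f_\theta(X_\varepsilon,\varepsilon)|\le C\bigl(1+(\|x_0\|+\|Z\|)^m\bigr),
\]
which is integrable under the Gaussian and independent of $\varepsilon$. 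Dominated convergence then closes the argument.
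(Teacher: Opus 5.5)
Your proposal is correct and follows the paper's proof essentially step for step: the cross-entropy split $\ell_\varepsilon=H(q_\varepsilon^{x_0})+\mathrm{KL}(q_\varepsilon^{x_0}\|p_\varepsilon^\theta)$, the KL-derivative lemma from the two continuity equations and integration by parts, integration over $[\varepsilon,1]$ using $q_1^{x_0}=p_1^\theta=\cN(0,I_d)$, adding and subtracting $w(t)\|\Delta v_t^\theta\|^2$ to define $\mathcal G_{\varepsilon,w}$, and dominated convergence with the polynomial-growth bound to get $\mathcal B_\varepsilon\to0$. Your extra remarks on continuity of the KL at $t=1$ and on finiteness of $\mathcal J_w^{[\varepsilon,1]}$ (needed so that splitting off $\mathcal G_{\varepsilon,w}$ is not an $\infty-\infty$ expression) address points the paper leaves implicit; the stray factor $p/p$ in your intermediate display is harmless but should be dropped.
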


The proof is in \appref{sec:proof-of-theorem-pw-practical-decomposition}.

\section{Off-Policy Setting: Fixed-Target CFM Estimates}
\label{sec:offpolicy-cfm-nll}

This section justifies the two off-policy columns of \tabref{tab:cfm-nll-answer}.  The target endpoint distribution is fixed while the velocity model is optimized.  This covers ordinary CFM pretraining and reward-weighted fixed-target variants, where the reward only changes the target endpoint law.  We omit the reward notation and write the fixed target as \(p_0\).

\paragraph{During optimization: no CFM weight gives a general identity.}
\thmref{thm:pw-practical-decomposition} shows that a CFM estimate is exact only when the interior and boundary residuals are negligible.  During optimization, the model is not constrained to satisfy the pointwise calibration relation $\Delta s_t^\theta(x;x_0)=w(t)\Delta v_t^\theta(x;x_0)$, so neither ordinary CFM nor score-calibrated CFM is a training-time pointwise NLL estimate.

\begin{proposition}[Training-time score calibration can fail]
\label{prop:pw-linear-training-counterexample}Under the one-dimensional linear Gaussian path $X_t=(1-t)X_0+tZ$ with $X_0,Z\sim\mathcal N(0,1)$ independent and $w_{\mathrm{sc}}(t)=(1-t)/t$, pointwise score calibration can fail at a simple intermediate checkpoint. If \(v_\theta(x,t)=a_\theta(x,t)x\), and \(a_\theta = 0\) at some checkpoint $\theta$, then, for $t\in(0,1)$ and $x_0 = 0$,
\[
\Delta s_t^\theta(x;0)-w_{\mathrm{sc}}(t)\Delta v_t^\theta(x;0)
=
\left(1-\frac1t\right)x
\neq 0
\]
Moreover, for every $\varepsilon\in(0,1)$,
\[
\mathcal G_{\varepsilon,w_{\mathrm{sc}}}(\theta;0)
=\frac{(1-\varepsilon)^2}{2}>0.
\]
\end{proposition}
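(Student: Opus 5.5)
With \(a_\theta\equiv0\) the model velocity vanishes, so everything reduces to Gaussian scores and an explicit second-moment integral. I will first pin down the model marginals, then compute the two gaps in the definitions preceding \thmref{thm:pw-practical-decomposition}, then substitute them into the definition of \(\mathcal G_{\varepsilon,w}\).

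\textbf{Model marginals.} Since \(v_\theta(x,t)=a_\theta(x,t)x=0\), the continuity equation \eqref{eq:prelim-model-continuity} reduces to \(\partial_t p_t^\theta=0\). With the terminal condition \(p_1^\theta=p_1=\mathcal N(0,1)\), this gives \(p_t^\theta=\mathcal N(0,1)\) for all \(t\in[0,1]\). Hence \(\nabla\log p_t^\theta(x)=-x\).

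\textbf{The two gaps.} For the conditional path with \(x_0=0\) we have \(q_t^{0}=\mathcal N(0,t^2)\). Its score is \(-x/t^2\) and \(u_t^0(x)=x/t\). Therefore
\[
\Delta v_t^\theta(x;0)=-\frac{x}{t},
\qquad
\Delta s_t^\theta(x;0)=-\frac{x}{t^2}+x=\Bigl(1-\frac1{t^2}\Bigr)x .
\]
Multiplying the first by \(w_{\mathrm{sc}}(t)=(1-t)/t\) gives \(-(1-t)x/t^2\). Subtracting,
\[
\Delta s_t^\theta-w_{\mathrm{sc}}\Delta v_t^\theta
=\Bigl(1-\frac1{t^2}+\frac{1-t}{t^2}\Bigr)x=\Bigl(1-\frac1t\Bigr)x .
\]
This is nonzero for every \(x\neq0\) and every \(t\in(0,1)\), which is the first claim.

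\textbf{The interior residual.} The integrand of \(\mathcal G_{\varepsilon,w_{\mathrm{sc}}}\) is the inner product of \(\Delta v_t^\theta\) with the residual above:
\[
\Bigl(-\frac{x}{t}\Bigr)\Bigl(1-\frac1t\Bigr)x=\frac{1-t}{t^2}\,x^2 .
\]
Under \(X_t\sim q_t^0\) we have \(\E[X_t^2]=t^2\), so the expectation equals \(1-t\). Integrating over \([\varepsilon,1]\),
\[
\mathcal G_{\varepsilon,w_{\mathrm{sc}}}(\theta;0)=\int_\varepsilon^1(1-t)\,dt=\frac{(1-\varepsilon)^2}{2}>0
\]
for every \(\varepsilon\in(0,1)\).

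The only step needing care is the first. One must note that the checkpoint's marginals are fixed by the terminal condition \(p_1^\theta=p_1\), not by the data law. This holds here because the data law and the base both equal \(\mathcal N(0,1)\), although the computation does not depend on that coincidence. The \(1/t^2\) terms near \(t=0\) cancel after taking the expectation, so the integral is finite. The regularity assumptions hold trivially for these Gaussians.
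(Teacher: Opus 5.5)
Your proof is correct and matches the paper's argument step for step. Zero velocity and the terminal condition give \(p_t^\theta=\mathcal N(0,1)\); the Gaussian scores give \(\Delta v_t^\theta=-x/t\) and \(\Delta s_t^\theta=(1-1/t^2)x\); and \(\E[X_t^2]=t^2\) reduces \(\mathcal G_{\varepsilon,w_{\mathrm{sc}}}\) to \(\int_\varepsilon^1(1-t)\,dt\). Your note that the residual \((1-1/t)x\) vanishes at \(x=0\), so it is nonzero only for \(x\neq0\), is a slightly more precise reading of the paper's ``\(\neq0\)''.
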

The proof is in \appref{sec:proof-of-proposition-pw-linear-training-counterexample}. This proposition gives a simple example of why the two \fullcross{} entries in the off-policy ``during optimization'' row of \tabref{tab:cfm-nll-answer} are not identities.

\paragraph{After optimization: score calibration removes the interior gap.}
At a population optimum for the linear Gaussian interpolation path, the score-calibrated weight \(w_{\mathrm{sc}}(t)=(1-t)/t\) removes the interior velocity--score residual for any fixed endpoint distribution. The remaining difference from endpoint NLL is the positive-time boundary term and finite numerical error.

\begin{proposition}[Fixed-target population score calibration]
\label{prop:pw-linear-optimum-calibration}
Let \(X_0\sim p_0\) have finite second moment, let
\(Z\sim\mathcal N(0,I_d)\) be independent, and
\(X_t=(1-t)X_0+tX_1\). Let \(\theta^\star\) realize the population CFM optimum
\[
v_{\theta^\star}(x,t)=\mathbb E[X_1-X_0\mid X_t=x].
\]
Then, for every \(x_0,x\in\mathbb R^d\) and \(t\in(0,1)\),
\[
\Delta s_t^{\theta^\star}(x;x_0)
=\frac{1-t}{t}\Delta v_t^{\theta^\star}(x;x_0).
\]
Consequently, for \(w_{\mathrm{sc}}(t)=(1-t)/t\) and every
\(\varepsilon\in(0,1)\),
\[
\mathcal G_{\varepsilon,w_{\mathrm{sc}}}(\theta^\star;x_0)=0,
\qquad
\ell_\varepsilon(\theta^\star;x_0)
=
H(q_\varepsilon^{x_0})
+
\mathcal J_{w_{\mathrm{sc}}}^{[\varepsilon,1]}(\theta^\star;x_0).
\]
\end{proposition}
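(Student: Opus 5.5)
The plan is to compute both gaps explicitly through Tweedie-type formulas for the linear Gaussian path, and then check that the identity holds pointwise. The target \(p_0\) is fixed. Under the population optimum the model marginal \(p_t^{\theta^\star}\) coincides with the true interpolation marginal \(p_t\), the law of \(X_t=(1-t)X_0+tX_1\). This holds because the conditional-expectation velocity generates \(p_t\) through the continuity equation (the standard marginalization argument of flow matching), and \(p_1^{\theta^\star}=p_1\) makes the solution unique under \assumpref{ass:pw-pathwise-regularity}. So \(\nabla\log p_t^{\theta^\star}=\nabla\log p_t\), and both \(v_{\theta^\star}\) and this score can be written through the single posterior mean \(m_t(x):=\E[X_0\mid X_t=x]\).

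\textbf{Step 1 (velocity).} From \(X_1=(X_t-(1-t)X_0)/t\) we get
\[
v_{\theta^\star}(x,t)=\E[X_1-X_0\mid X_t=x]=\frac{x-m_t(x)}{t}.
\]
Subtracting \(u_t^{x_0}(x)=(x-x_0)/t\) gives \(\Delta v_t^{\theta^\star}(x;x_0)=(x_0-m_t(x))/t\).

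\textbf{Step 2 (score).} Given \(X_0\), the law of \(X_t\) is \(\mathcal N((1-t)X_0,t^2 I_d)\). Tweedie's formula (differentiate \(p_t(x)=\E[q_t(x\mid X_0)]\) under the integral, which is justified by the finite second moment) gives
\[
\nabla\log p_t(x)=-\frac{x-(1-t)m_t(x)}{t^2}.
\]
The conditional score is \(\nabla\log q_t^{x_0}(x)=-(x-(1-t)x_0)/t^2\). Hence
\[
\Delta s_t^{\theta^\star}(x;x_0)=-\frac{(1-t)(x_0-m_t(x))}{t^2}.
\]

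\textbf{Step 3 (comparison).} Comparing the two expressions gives \(\Delta s=\pm\frac{1-t}{t}\Delta v\) pointwise for every \(x_0\), \(x\) and \(t\in(0,1)\). The main obstacle is fixing the sign. Under the convention \(\Delta s=\nabla\log q-\nabla\log p\) as displayed, the sign appears to come out negative. I would recheck the convention against the sign in \eqref{eq:pw-exact-identity}, namely that \(\frac{d}{dt}\mathrm{KL}(q_t\|p_t)\) equals \(\E_q\langle \Delta v, \nabla\log q-\nabla\log p\rangle\) up to sign. I would also check the time orientation: data sits at \(t=0\), while the velocity \(X_1-X_0\) points toward noise. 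The goal is to align the conventions so that \(\Delta s=\frac{1-t}{t}\Delta v\), as the statement asserts. If a global sign flip is needed, it applies consistently to both the identity and the definition of \(\mathcal G\), so the substantive claim \(\mathcal G=0\) is unaffected.

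\textbf{Step 4 (consequences).} With \(w=w_{\mathrm{sc}}\), the integrand \(\langle\Delta v,\Delta s-w_{\mathrm{sc}}\Delta v\rangle\) vanishes identically, so \(\mathcal G_{\varepsilon,w_{\mathrm{sc}}}(\theta^\star;x_0)=0\). Substituting into \eqref{eq:pw-exact-identity} of \thmref{thm:pw-practical-decomposition} then turns the mixed integrand into \(w_{\mathrm{sc}}\|\Delta v\|^2\). This yields
\[
\ell_\varepsilon(\theta^\star;x_0)=H(q_\varepsilon^{x_0})+\mathcal J^{[\varepsilon,1]}_{w_{\mathrm{sc}}}(\theta^\star;x_0).
\]
For this last step I would verify that \(\mathcal J\) is finite, using \(\E\|x_0-m_t\|^2<\infty\) from the second-moment assumption.
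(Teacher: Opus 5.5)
Your route matches the paper's: write $v_{\theta^\star}$ and $\nabla\log p_t^{\theta^\star}$ through the posterior mean $m_t(x)=\E[X_0\mid X_t=x]$, then compare. The problem is that Step~2 contains a sign slip, and Step~3's way of absorbing it is invalid, so as written the central identity is not proven. Your two score formulas are correct. Subtracting them in the stated order gives
\[
\Delta s_t^{\theta^\star}(x;x_0)
=\frac{-\bigl(x-(1-t)x_0\bigr)+\bigl(x-(1-t)m_t(x)\bigr)}{t^2}
=\frac{(1-t)\bigl(x_0-m_t(x)\bigr)}{t^2}
=\frac{1-t}{t}\,\Delta v_t^{\theta^\star}(x;x_0),
\]
with a plus sign. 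The expression you wrote is $\nabla\log p_t-\nabla\log q_t^{x_0}$. No reconciliation of conventions or time orientation is needed.

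The hedge in Step~3, that a global sign flip would leave $\mathcal G=0$ intact, does not work. The definitions of $\Delta s$ and $\mathcal G_{\varepsilon,w}$ and \eqref{eq:pw-exact-identity} are fixed and already mutually consistent: the identity is derived from \lemref{lem:pw-kl-derivative} with exactly this $\Delta s$, so there is no convention left to flip. Suppose you really had $\Delta s=-w_{\mathrm{sc}}\Delta v$ under these definitions. Then the integrand of $\mathcal G$ would be $-2w_{\mathrm{sc}}\|\Delta v\|^2$, so $\mathcal G_{\varepsilon,w_{\mathrm{sc}}}=-2\mathcal J^{[\varepsilon,1]}_{w_{\mathrm{sc}}}\neq0$. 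Also \eqref{eq:pw-exact-identity} would give $\ell_\varepsilon=H(q_\varepsilon^{x_0})-\mathcal J^{[\varepsilon,1]}_{w_{\mathrm{sc}}}$. That contradicts $\ell_\varepsilon=H(q_\varepsilon^{x_0})+\mathrm{KL}(q_\varepsilon^{x_0}\|p_\varepsilon^{\theta^\star})\ge H(q_\varepsilon^{x_0})$ whenever $\mathcal J^{[\varepsilon,1]}_{w_{\mathrm{sc}}}>0$, which also serves as a quick check that the sign must be $+$. With the subtraction corrected, Step~4 goes through.

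One minor point concerns finiteness of $\mathcal J$. The second-moment bound controls $m_t(X_t)$ for $X_t\sim p_t$ (via Jensen), not for $X_t\sim q_t^{x_0}$, so it does not by itself make $\mathcal J$ finite. Finiteness instead follows from the identity: $\mathcal J^{[\varepsilon,1]}_{w_{\mathrm{sc}}}=\mathrm{KL}(q_\varepsilon^{x_0}\|p_\varepsilon^{\theta^\star})$, and $-\log p_\varepsilon^{\theta^\star}$ grows at most quadratically for a Gaussian convolution.
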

The proof is in
\appref{sec:proof-of-proposition-pw-linear-optimum-calibration}.


\section{On-Policy Setting: CFM-Ratio Surrogates}
\label{sec:onpolicy-cfm-nll}

This section justifies the two on-policy columns of \tabref{tab:cfm-nll-answer}. For condition \(c\), reward \(R(x_0,c)\), and old endpoint density \(p_0^{\theta_{\mathrm{old}}}(\cdot\mid c)\), the log-ratio used in PPO/GRPO is
\[
\log
\frac{p_0^\theta(x_0\mid c)}
{p_0^{\theta_{\mathrm{old}}}(x_0\mid c)}
\]
Forward-process alignment methods use matching losses in different ways. AWM~\cite{xue2025advantage}, for example, directly advantage-weights the pretraining loss and does not require it to be an endpoint ratio. FPO~\cite{mcallister2025fmpg} is a direct instance of the class studied here: it replaces the log-ratio by a difference of CFM estimates,
\begin{equation}
\widehat\Delta_{\theta,\theta_{\mathrm{old}}}^{\mathrm{CFM}}(x_0,c)
:=
\mathcal J_w^{[\varepsilon,1]}(\theta_{\mathrm{old}};x_0,c)
-
\mathcal J_w^{[\varepsilon,1]}(\theta;x_0,c).
\label{eq:onpolicy-cfm-log-ratio}
\end{equation}
The question is whether \(\widehat\Delta^{\mathrm{CFM}}\) is the exact log-ratio, and whether it can be used as a surrogate for the exact log-ratio.

\subsection{The Exact Bias of a CFM-Only Ratio}
\label{subsec:onpolicy-ratio-bias}

Subtracting the pointwise decomposition for \(\theta\) and \(\theta_{\mathrm{old}}\) gives the exact bias term.

\begin{proposition}[Exact bias of the CFM log-ratio surrogate]
\label{prop:onpolicy-cfm-ratio-bias}
For every \(x_0\) and \(c\), we can decompose the log-ratio into a CFM-only ratio and residual terms:
\begin{equation}
\log \frac{p_0^\theta(x_0\mid c)}{p_0^{\theta_{\mathrm{old}}}(x_0\mid c)}
=
\widehat\Delta_{\theta,\theta_{\mathrm{old}}}^{\mathrm{CFM}}(x_0,c)
-
\Delta\mathcal G_{\varepsilon,w}(x_0,c)
-
\Delta\mathcal B_{\varepsilon}(x_0,c),
\label{eq:onpolicy-clean-cfm-ratio-decomp}
\end{equation}
where $\Delta\mathcal G_{\varepsilon,w} := \mathcal G_{\varepsilon,w}(\theta;x_0,c) - \mathcal G_{\varepsilon,w}(\theta_{\mathrm{old}};x_0,c)$ and $\Delta\mathcal B_{\varepsilon} := \mathcal B_{\varepsilon}(\theta;x_0,c) - \mathcal B_{\varepsilon}(\theta_{\mathrm{old}};x_0,c)$.
Thus,
\begin{equation}
\exp\!\left(
\widehat\Delta_{\theta,\theta_{\mathrm{old}}}^{\mathrm{CFM}}(x_0,c)
\right)
=
\frac{p_0^\theta(x_0\mid c)}
{p_0^{\theta_{\mathrm{old}}}(x_0\mid c)}
\exp\!\left(\Delta\mathcal G_{\varepsilon,w}
+
\Delta\mathcal B_{\varepsilon}\right).
\label{eq:onpolicy-multiplicative-bias}
\end{equation}
\end{proposition}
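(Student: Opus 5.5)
The plan is to apply \thmref{thm:pw-practical-decomposition} twice, once for the new parameter \(\theta\) and once for \(\theta_{\mathrm{old}}\), and subtract. The conditioning on \(c\) plays no structural role: for each fixed condition \(c\) the conditional model \(v_\theta(\cdot,\cdot\mid c)\) is an ordinary flow model with marginals \(p_t^\theta(\cdot\mid c)\). We therefore assume \assumpref{ass:pw-pathwise-regularity} and \assumpref{ass:pw-endpoint-regularity} hold for both parameter values at the pair \((x_0,c)\), so that \eqref{eq:pw-practical-decomposition} is available for each. We use the same positive weight \(w\) and the same \(\varepsilon\in(0,1)\) in both applications, matching the definition \eqref{eq:onpolicy-cfm-log-ratio}.

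Writing the two identities gives
\[
-\log p_0^\theta(x_0\mid c)=H(q_\varepsilon^{x_0})+\mathcal J_w^{[\varepsilon,1]}(\theta;x_0,c)+\mathcal G_{\varepsilon,w}(\theta;x_0,c)+\mathcal B_\varepsilon(\theta;x_0,c),
\]
together with the analogous identity for \(\theta_{\mathrm{old}}\). The key observation is that the entropy term \(H(q_\varepsilon^{x_0})=\tfrac d2\log(2\pi e\,\varepsilon^2)\) depends only on the conditional path, not on the model parameters. It therefore cancels exactly in the difference. Subtracting the \(\theta\)-identity from the \(\theta_{\mathrm{old}}\)-identity yields
\[
\log p_0^\theta-\log p_0^{\theta_{\mathrm{old}}}=\bigl(\mathcal J_w(\theta_{\mathrm{old}})-\mathcal J_w(\theta)\bigr)-\bigl(\mathcal G(\theta)-\mathcal G(\theta_{\mathrm{old}})\bigr)-\bigl(\mathcal B(\theta)-\mathcal B(\theta_{\mathrm{old}})\bigr).
\]
By the definitions of \(\widehat\Delta^{\mathrm{CFM}}\), \(\Delta\mathcal G_{\varepsilon,w}\) and \(\Delta\mathcal B_\varepsilon\), this is exactly \eqref{eq:onpolicy-clean-cfm-ratio-decomp}.

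To obtain \eqref{eq:onpolicy-multiplicative-bias}, we rearrange to
\[
\widehat\Delta^{\mathrm{CFM}}=\log\rho_\theta+\Delta\mathcal G_{\varepsilon,w}+\Delta\mathcal B_\varepsilon
\]
and exponentiate, using that both endpoint densities are strictly positive so that the logarithms are finite.

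The main point needing care is finiteness of every term, so that the subtraction is not an \(\infty-\infty\) manipulation. We need \(\mathcal J_w^{[\varepsilon,1]}\) and \(\mathcal G_{\varepsilon,w}\) finite for both models. This follows because \(\ell_\varepsilon\) is finite under \assumpref{ass:pw-endpoint-regularity}, via the polynomial bound and the Gaussian moments of \(q_\varepsilon^{x_0}\). We then regard \(\mathcal G_{\varepsilon,w}\) as \emph{defined} through the finite mixed integral in \eqref{eq:pw-exact-identity} minus a finite \(\mathcal J_w\). If \(\mathcal J_w\) could be infinite, for instance because \(w\) blows up near \(\varepsilon\), we would instead restrict to weights for which \(\mathcal J_w^{[\varepsilon,1]}<\infty\); this is automatic on \([\varepsilon,1]\) for bounded \(w\) and square-integrable velocity gaps. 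Beyond this bookkeeping, the proof is pure algebra.
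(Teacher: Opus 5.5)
Your proposal is correct and follows the paper's proof: fix $c$, apply \thmref{thm:pw-practical-decomposition} to $\theta$ and $\theta_{\mathrm{old}}$, cancel the parameter-independent entropy $H(q_\varepsilon^{x_0})$, read off $\widehat\Delta^{\mathrm{CFM}}$, $\Delta\mathcal G_{\varepsilon,w}$ and $\Delta\mathcal B_\varepsilon$, then rearrange and exponentiate. Your finiteness bookkeeping goes beyond what the paper writes, with one small correction: the polynomial bound in \assumpref{ass:pw-endpoint-regularity} only covers $t\le\varepsilon_0$, so for general $\varepsilon\in(0,1)$ the finiteness of $\ell_\varepsilon=H(q_\varepsilon^{x_0})+\mathrm{KL}(q_\varepsilon^{x_0}\|p_\varepsilon^\theta)$ should come from the differentiability, and hence finiteness, of the KL map in \assumpref{ass:pw-pathwise-regularity}.
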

The proof is in \appref{sec:proof-of-proposition-onpolicy-cfm-ratio-bias}.
\propref{prop:onpolicy-cfm-ratio-bias} implies that an unclipped CFM-ratio
objective of the form above optimizes a gap-tilted reward:
\begin{align*}
\mathbb E_{X_0\sim p_0^{\theta_{\mathrm{old}}}}
\left[
\exp\!\left(
\widehat\Delta_{\theta,\theta_{\mathrm{old}}}^{\mathrm{CFM}}(X_0,c)
\right)R(X_0,c)
\right]
&=
\mathbb E_{X_0\sim p_0^\theta}
\left[
R(X_0,c)
\exp\!\left(
\delta_{\theta,\theta_{\mathrm{old}}}(X_0,c)
\right)
\right].
\end{align*}
Therefore, a CFM-only ratio is a clean endpoint ratio if and only if the
relative residual \(\Delta\mathcal G_{\varepsilon,w} + \Delta\mathcal B_{\varepsilon}\) vanishes on the
samples that matter to the update.

\subsection{Why the On-Policy Table Has No Global Identity Guarantee}
\label{subsec:onpolicy-no-global-guarantee}

The residual condition can fail where the endpoint distribution itself does not change.

\begin{proposition}[Mismatch during optimization]
\label{prop:onpolicy-rotational-mismatch}
Let \(X_0, X_1 \sim\mathcal N(0,I_2)\), \(X_t=(1-t)X_0+tX_1\), \(s_t^2=(1-t)^2+t^2\), and $v_{\mathrm{old}}^\star(x,t)=\mathbb E[X_1-X_0\mid X_t=x] =\frac{2t-1}{s_t^2}x$. Let \(R_0=\begin{psmallmatrix}0&-1\\1&0\end{psmallmatrix}\) and \(v_a(x,t)=v_{\mathrm{old}}^\star(x,t)+aR_0x$. Then, every \(v_a\) induces the same Gaussian marginal path and the same endpoint density, so
\[
\log \frac{p_0^a(x_0)}{p_0^{\theta_{\mathrm{old}}}(x_0)} = 0
\qquad\text{for all }x_0.
\]
However, for \(x_0=0\), \(w_{\mathrm{sc}}(t)=(1-t)/t\), and
\(C_\varepsilon:=\frac13-\varepsilon^2+\frac23\varepsilon^3>0\),
\[
\widehat\Delta_{a,\mathrm{old}}^{\mathrm{CFM}}(0,0)
=-C_\varepsilon a^2 \neq 0,
\qquad \varepsilon \in (0,1).
\]

Thus, the CFM log-ratio can be nonzero while the log-ratio is zero.
\end{proposition}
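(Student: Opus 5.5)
The plan has two independent parts. First, show that adding the rotational field \(aR_0x\) leaves the marginal path, and hence the endpoint density, unchanged. Second, compute the two pointwise CFM objectives at \(x_0=0\) in closed form and subtract them.

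\textbf{Step 1 (invariance of the marginals).} Let \(p_t=\mathcal N(0,s_t^2I_2)\) be the marginal path of the interpolation, which solves the continuity equation \eqref{eq:prelim-model-continuity} with velocity \(v^\star_{\mathrm{old}}\) and satisfies \(p_1=\mathcal N(0,I_2)\). I will check that \(p_t\) also solves the continuity equation with velocity \(v_a\). The extra flux term is
\[
\nabla\cdot(p_t\,aR_0x)=a\bigl(\langle\nabla p_t,R_0x\rangle+p_t\,\mathrm{tr}R_0\bigr).
\]
Both pieces vanish. Since \(\nabla p_t=-p_t x/s_t^2\) and \(R_0\) is skew-symmetric, \(\langle x,R_0x\rangle=0\). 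Also \(\mathrm{tr}R_0=0\). Hence \(\partial_tp_t+\nabla\cdot(p_tv_a)=0\) holds with the same terminal condition \(p_1\).

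The field \(v_a\) is linear in \(x\) with coefficients continuous in \(t\) on \((0,1]\); note \(s_t^2\ge 1/2\). So the ODE flow is well defined, and uniqueness of the continuity equation for such Lipschitz fields gives \(p_t^a=p_t\) for all \(t\). In particular \(p_0^a=p_0^{\theta_{\mathrm{old}}}=\mathcal N(0,I_2)\), so the clean log-ratio is identically zero.

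The one point needing care is the behaviour as \(t\to0\). Here \(s_0=1\), so the coefficient stays bounded and nothing degenerates. I expect this uniqueness/limit argument to be the only genuinely delicate step. If needed, I would make it explicit by solving the linear ODE: its flow map is a scalar dilation composed with a rotation, and it pushes rotation-invariant Gaussians to rotation-invariant Gaussians with the same variance.

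\textbf{Step 2 (the CFM difference).} At \(x_0=0\) we have \(u_t^{0}(x)=x/t\) and \(X_t\sim\mathcal N(0,t^2I_2)\). Write \(\Delta v^a_t=\Delta v^{\mathrm{old}}_t+aR_0x\), where \(\Delta v^{\mathrm{old}}_t=\bigl(\tfrac{2t-1}{s_t^2}-\tfrac1t\bigr)x\) is a scalar multiple of \(x\). Expanding the square gives
\[
\|\Delta v^a_t\|^2=\|\Delta v^{\mathrm{old}}_t\|^2+2a\langle\Delta v^{\mathrm{old}}_t,R_0x\rangle+a^2\|R_0x\|^2 .
\]
The cross term vanishes pointwise because \(\langle x,R_0x\rangle=0\). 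Since \(R_0\) is orthogonal, \(\|R_0x\|^2=\|x\|^2\), and \(\E\|X_t\|^2=2t^2\). Therefore
\[
\mathcal J^{[\varepsilon,1]}_{w_{\mathrm{sc}}}(a;0)-\mathcal J^{[\varepsilon,1]}_{w_{\mathrm{sc}}}(\mathrm{old};0)
=2a^2\int_\varepsilon^1\frac{1-t}{t}\,t^2\,dt
=2a^2\Bigl[\tfrac16-\tfrac{\varepsilon^2}{2}+\tfrac{\varepsilon^3}{3}\Bigr]
=C_\varepsilon a^2 .
\]
By definition \eqref{eq:onpolicy-cfm-log-ratio}, this gives \(\widehat\Delta^{\mathrm{CFM}}_{a,\mathrm{old}}(0,0)=-C_\varepsilon a^2\).

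\textbf{Step 3 (positivity of \(C_\varepsilon\)).} Let \(g(\varepsilon)=\tfrac13-\varepsilon^2+\tfrac23\varepsilon^3\). Then \(g(1)=0\) and \(g'(\varepsilon)=-2\varepsilon(1-\varepsilon)<0\) on \((0,1)\). Hence \(g\) is strictly decreasing there, so \(C_\varepsilon=g(\varepsilon)>0\) for all \(\varepsilon\in(0,1)\).

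Consequently the CFM log-ratio is nonzero whenever \(a\neq0\), while the clean log-ratio is zero. By \propref{prop:onpolicy-cfm-ratio-bias}, the entire discrepancy is carried by \(\Delta\mathcal G_{\varepsilon,w}+\Delta\mathcal B_\varepsilon\).
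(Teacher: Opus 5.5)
Your proposal is correct and follows the paper's proof essentially step for step: the same computation $\nabla\cdot(p_t R_0x)=0$ from skew-symmetry and $\mathrm{tr}\,R_0=0$, and the same expansion of $\|c(t)X_t+aR_0X_t\|^2$, with the cross term killed by $x^\top R_0x=0$ and with $\|R_0x\|=\|x\|$. Your extra remarks make explicit two points the paper only asserts: uniqueness of the continuity equation for the linear field (using $s_t^2\ge 1/2$), and the derivative argument for $C_\varepsilon>0$.
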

The proof is in
\appref{sec:proof-of-proposition-onpolicy-rotational-mismatch}.

\begin{proposition}[After optimizing a CFM-ratio surrogate, the gap may not vanish]
\label{prop:onpolicy-after-cfm-ratio-gap-nonzero}
In the setting of \propref{prop:onpolicy-rotational-mismatch}, consider a
one-sample regularized CFM-ratio objective with \(A_0=-r<0\) at \(x_0=0\):
\[
\mathcal L(a)
=
-r\exp\!\left(
\widehat\Delta_{a,\mathrm{old}}^{\mathrm{CFM}}(0)
\right)
-
\lambda a^2 .
\]
If $0<\lambda<rC_\varepsilon$, then $\mathcal L(a)$ is maximized at some
\(a^\star\neq0\).  At this optimizer,
\[
\log \frac{p_0^{a^\star}(0)}{p_0^{\theta_{\mathrm{old}}}(0)}=0,
\qquad
\widehat\Delta_{a^\star,\mathrm{old}}^{\mathrm{CFM}}(0)
=-C_\varepsilon(a^\star)^2\neq0 .
\]
\end{proposition}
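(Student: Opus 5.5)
The plan is to reduce $\mathcal L$ to an explicit one-variable function using \propref{prop:onpolicy-rotational-mismatch}, and then do elementary calculus. By that proposition, for every $a$ we have $\widehat\Delta^{\mathrm{CFM}}_{a,\mathrm{old}}(0)=-C_\varepsilon a^2$, with $C_\varepsilon>0$ for $\varepsilon\in(0,1)$. Therefore
\[
\mathcal L(a) = -r\,e^{-C_\varepsilon a^2}-\lambda a^2 =: \phi(a^2),
\qquad
\phi(u):=-r\,e^{-C_\varepsilon u}-\lambda u,\quad u\ge 0 .
\]
So maximizing $\mathcal L$ over $a\in\mathbb R$ is the same as maximizing $\phi$ over $u\ge0$.

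Next I would study $\phi$ directly. Its derivatives are $\phi'(u)=rC_\varepsilon e^{-C_\varepsilon u}-\lambda$ and $\phi''(u)=-rC_\varepsilon^2e^{-C_\varepsilon u}<0$, so $\phi$ is strictly concave. At $u=0$ we get $\phi'(0)=rC_\varepsilon-\lambda>0$, by the hypothesis $\lambda<rC_\varepsilon$. As $u\to\infty$, $\phi'(u)\to-\lambda<0$. Hence $\phi$ has a unique maximizer
\[
u^\star=\frac{1}{C_\varepsilon}\log\frac{rC_\varepsilon}{\lambda}>0 ,
\]
and $\phi(u^\star)>\phi(0)$. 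It follows that $\mathcal L$ attains its maximum exactly at $a^\star=\pm\sqrt{u^\star}\neq0$, and $a=0$ is not a maximizer.

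At any such $a^\star$, \propref{prop:onpolicy-rotational-mismatch} gives two facts. First, the endpoint density is unchanged for every $a$, so $\log\bigl(p_0^{a^\star}(0)/p_0^{\theta_{\mathrm{old}}}(0)\bigr)=0$. Second, $\widehat\Delta^{\mathrm{CFM}}_{a^\star,\mathrm{old}}(0)=-C_\varepsilon (a^\star)^2=-C_\varepsilon u^\star$, which is nonzero because $C_\varepsilon>0$ and $u^\star>0$.

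There is no real obstacle, since everything follows from the previous proposition. The only points to be careful about are these. The maximizer must be global, which the concavity of $\phi$ in $u$ and the evenness of $\mathcal L$ in $a$ ensure. The condition $0<\lambda$ is needed so that $\phi'(u)\to-\lambda<0$ and the maximum is attained at a finite point.
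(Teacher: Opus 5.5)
Your proposal is correct and follows the paper's proof essentially step for step. You substitute $y=a^2$ using $\widehat\Delta^{\mathrm{CFM}}_{a,\mathrm{old}}(0)=-C_\varepsilon a^2$, solve $\frac{d}{dy}\bigl(-re^{-C_\varepsilon y}-\lambda y\bigr)=0$ to get the unique maximizer $y^\star=C_\varepsilon^{-1}\log(rC_\varepsilon/\lambda)>0$, and conclude $a^\star\neq0$ with zero clean log-ratio but nonzero CFM ratio. Your explicit strict-concavity check ($\phi''<0$, $\phi'(0)>0$, $\phi'\to-\lambda<0$) adds rigor, since the paper only asserts that the critical point is the unique global maximizer.
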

The proof is in
\appref{sec:proof-of-proposition-onpolicy-after-cfm-ratio-gap-nonzero}.

\subsection{Why a Biased CFM Ratio May Still Be Useful}
\label{subsec:onpolicy-benignness}

The decomposition establishes bias, but by itself it does not establish a
stable optimization guarantee.  Three observations motivate the
controlled-variance experiments evaluated in \secref{sec:experiments}.  First, the CFM
ratio equals the clean ratio at the reference parameters because
\(\theta=\theta_{\mathrm{old}}\) implies
\(\Delta\mathcal G_{\varepsilon,w}+\Delta\mathcal B_\varepsilon=0\).
This is only a zero-order equality. Second, ratio clipping
bounds the proxy sample weight:
\[
\bar r_\theta^{\mathrm{CFM}}
=
\mathrm{clip}\!\left(
\exp(\widehat\Delta_{\theta,\theta_{\mathrm{old}}}^{\mathrm{CFM}}),
1-\epsilon_{\mathrm{clip}},1+\epsilon_{\mathrm{clip}}
\right),
\]
so an inexact CFM log-ratio cannot create an arbitrarily large weight on a single
sample.  Third, EMA mechanism and KL regularization control the
distance from the reference.  Closeness may reduce the relative residual under
additional continuity assumptions, but we do not prove such a bound here.
Accordingly, the experiments test the relationship between these controlled variables and
optimization stability in the experiment part.

\section{Experiments}
\label{sec:experiments}

The experiments numerically examine the decomposition identity in
\thmref{thm:pw-practical-decomposition} and test the claims in
\tabref{tab:cfm-nll-answer}.  We organize the evidence around three research
questions:
\begin{enumerate}[label={RQ\arabic*:}]
    \item Do independently estimated terms numerically demonstrate the decomposition identity in different settings?
    \item Do the claims in \tabref{tab:cfm-nll-answer} hold empirically in different settings?
    \item When the CFM-ratio surrogate is not an exact likelihood ratio, which mechanisms are associated with stable reward optimization?
\end{enumerate}
RQ1 and RQ2 use three synthetic experiments: 1D toy distributions, 2D
geometries, and high-dimensional Gaussian mixtures.  Their endpoint densities
are known, enabling direct evaluation of clean NLL, the decomposition, and
likelihood ratios.  RQ3 adds raw-pixel MNIST and CIFAR-10 as a fourth experiment to
test whether the effective mechanisms in the synthetic experiments appear in real-image models. Full protocols, seeds, numerical budgets, and detailed diagnostics are in \appref{app:experiment-details}.

For RQ1, the boundary term is defined as
\(\mathcal B_\varepsilon=-\log p_0^\theta-\ell_\varepsilon\).  Hence the
reported endpoint closure error satisfies, sample by sample,
\[
\left| -\log p_0^\theta-(H+\mathcal J_w+\mathcal G+\mathcal B_\varepsilon)\right|
=
\left|\ell_\varepsilon-(H+\mathcal J_w+\mathcal G)\right|.
\]
The endpoint NLL therefore cancels from this diagnostic.  RQ1 is a numerical
examination of the decomposition identity in \eqref{eq:pw-exact-identity}.

\subsection{RQ1: Do the independently estimated terms demonstrate the decomposition identity?}

\begin{figure}[ht]
\centering
\includegraphics[width=1\textwidth]{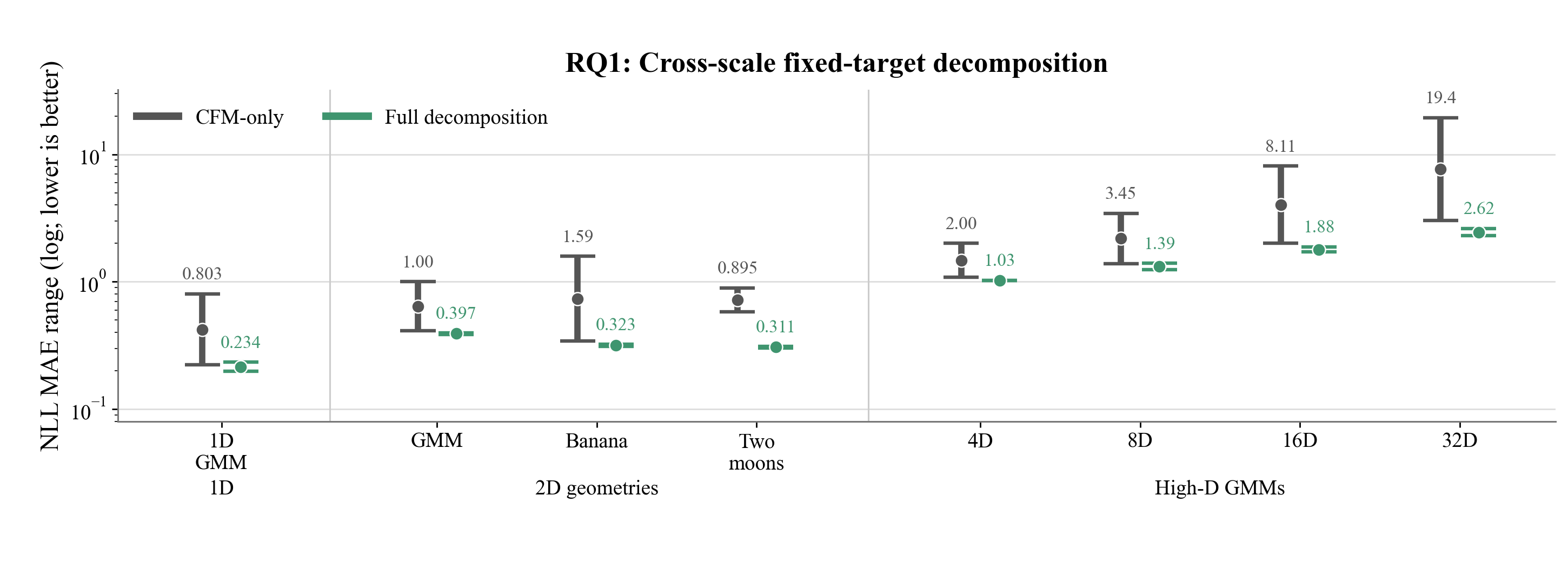}
\caption{
RQ1 cross-scale fixed-target numerical examination of the full decomposition estimation and CFM-only estimation over the three synthetic experiments.  Each vertical interval is the range over the reported fixed-target rows for the displayed setting. Lower is better, and the printed value is the interval maximum. "CFM-only" is the endpoint error of \(\mathcal J_w+H\). "Full decomposition" is computed as the endpoint error of \(\mathcal J_w+H+\mathcal G+\mathcal B\). The range construction and source rows are detailed in \appref{app:rq1-four-layer-figure-construction}.
}
\label{fig:rq1-four-layer-closure}
\end{figure}

\figref{fig:rq1-four-layer-closure} shows that \textbf{the independently estimated terms \(\mathcal J_w+H+\mathcal G\) numerically demonstrate the decomposition identity more accurately than \(\mathcal J_w+H\) estimates endpoint NLL throughout the fixed-target synthetic evaluations.}  
In 1D, the decomposition identity MAE is \(0.198\)--\(0.234\), compared with \(0.222\)--\(0.803\) for CFM-only endpoint estimates.  Across GMM, banana, and two moons in 2D, the corresponding ranges are \(0.302\)--\(0.397\) and \(0.342\)--\(1.587\).  In high-dimensional GMMs, decomposition identity error increases from about \(1.02\) in 4D to \(2.62\) in 32D.  This experiment checks the numerical evaluation of \eqref{eq:pw-exact-identity}, while the nonzero error reflects ODE, quadrature, score, and Monte Carlo approximation.

The on-policy experiments reveal a sharper numerical boundary.  The decomposition identity improves over the CFM-only endpoint estimate in 1D, 2D, and the moderate-dimensional GMM runs, but the estimated residual terms reach the \(10^9\) scale in the 32D setting.  We therefore report the 32D on-policy result as a failure of the available numerical estimator, not as evidence about the analytic identity.

\subsection{RQ2: Do the table conclusions hold empirically in different settings?}

\begin{figure}[ht]
\centering
\includegraphics[width=0.92\textwidth]{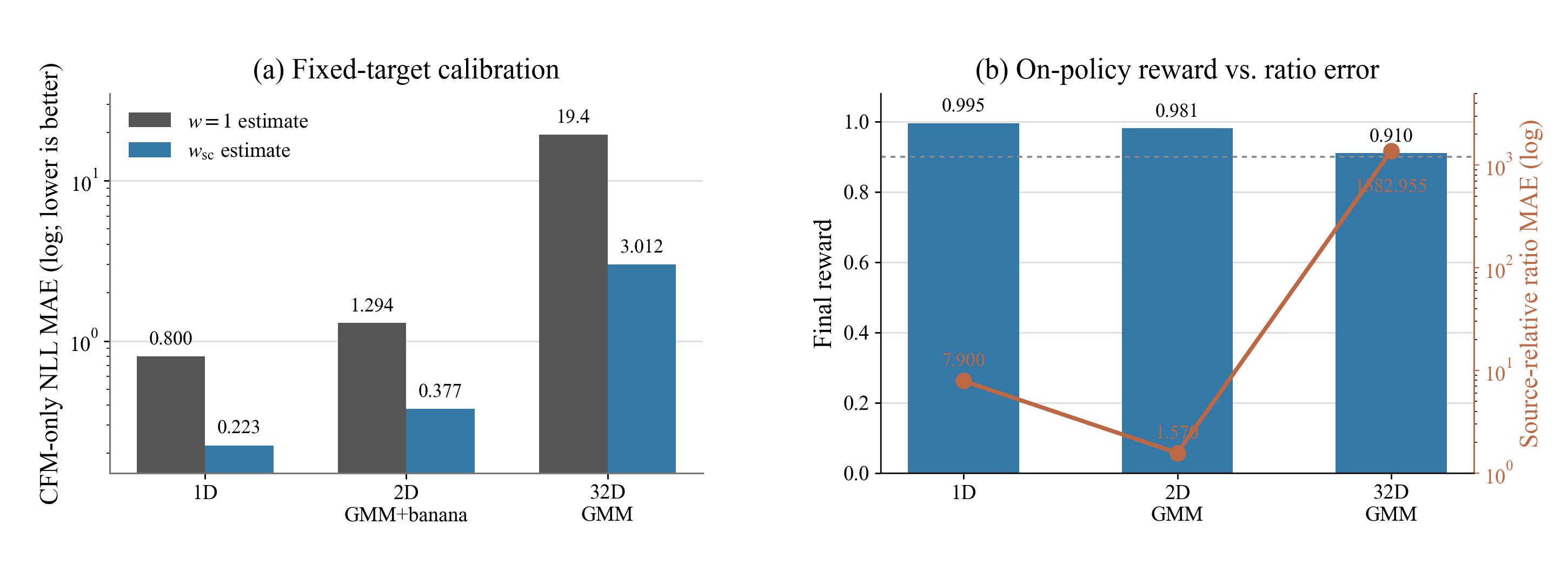}
\caption{
RQ2 cross-scale evidence for the off-policy and on-policy conclusions in \tabref{tab:cfm-nll-answer}.  Panel (a) summarizes off-policy NLL replacement: score calibration reduces the CFM-only error when boundary terms are small.  Panel (b) summarizes on-policy CFM-ratio training: high reward can coexist with large clean-ratio errors. The construction and source rows are detailed in \appref{app:rq2-four-layer-figure-construction}.
}
\vspace{-1em}
\label{fig:rq2-four-layer-table-claims}
\end{figure}

\textbf{Off-policy: fixed-target CFM.}
Panel (a) of \figref{fig:rq2-four-layer-table-claims} supports the off-policy entries of \tabref{tab:cfm-nll-answer}: ordinary CFM is not a pointwise NLL replacement, whereas the score-calibrated estimate is much closer after fixed-target optimization when the boundary term is controlled.  The displayed ordinary/score-calibrated MAEs are \(0.800/0.223\) in 1D, \(1.294/0.377\) for the 2D GMM and banana datasets, and \(19.402/3.012\) for the 32D GMM. However, on two moons, the finite-\(\varepsilon\) boundary term is large, so the CFM-only score-calibrated estimate should not be considered an NLL estimate even though the decomposition identity estimation remains numerically accurate.

\textbf{On-policy: CFM-ratio surrogate.}
Panel (b) of \figref{fig:rq2-four-layer-table-claims} supports the on-policy entries of \tabref{tab:cfm-nll-answer}: CFM-ratio training can improve reward in the tested controlled-update settings, but it is not a clean endpoint likelihood-ratio identity.  The displayed ordinary-CFM reward/source-relative-MAE pairs are \(0.995/7.900\) in 1D, \(0.981/1.570\) for the 2D GMM, and \(0.910/1382.955\) for the 32D GMM after 1,000 rounds.  Across 4D/8D/16D/32D, the ordinary-CFM rewards are \(0.997,0.995,0.991,\) and \(0.910\), whereas the score-calibrated variant degrades to \(0.209\) in 32D.  These observations associate refreshed references with smaller error and successful reward optimization.

\subsection{RQ3: Which mechanisms are associated with stable surrogate optimization?}

\begin{figure}[ht]
\centering
\includegraphics[width=0.92\textwidth]{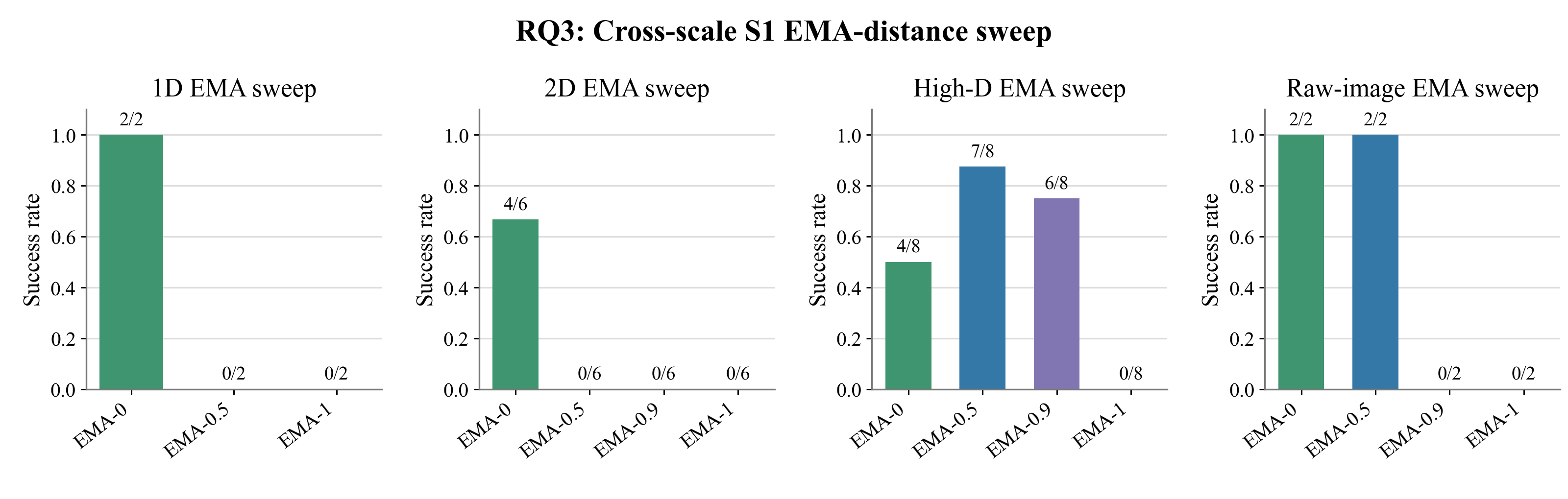}
\caption{
RQ3 cross-scale EMA-decay summary.  All four panels report the fraction of tested configurations passing the common final-reward gate \(R\geq0.90\). These are not repeated-seed success probabilities.  The raw-image panel aggregates MNIST and CIFAR-10.  EMA-0 refreshes the old-policy reference every round, EMA-0.5 and EMA-0.9 use moving-average references, and EMA-1 keeps the source checkpoint fixed. EMA decay controls the reference update rule. Exact counts and source rows are detailed in \appref{app:rq3-four-layer-figure-construction}.
}
\label{fig:rq3-four-layer-mechanisms}
\end{figure}

\figref{fig:rq3-four-layer-mechanisms} summarizes controlled associations between mechanisms and reward stability:
\begin{itemize}[leftmargin=1.35em,itemsep=0.16em,topsep=0.25em]
\item \textbf{Stability depends strongly on reference updates.} Full refresh is best in 1D/2D, whereas moderate EMA is best in High-D under fixed source-KL. EMA-1 never passes the reward gate. EMA decay itself is not a policy-distance or ratio-error measurement.
\item \textbf{Ratio clipping is helpful when the estimation is not exact.} Both clipped and unclipped settings can pass, and tighter clipping is not uniformly better. Exact sweep counts are in \appref{app:rq3-four-layer-figure-construction}.
\item \textbf{Regularization and CFM weighting are not sufficiently effective.} Vector field penalties do not rescue 2D experiments with large EMA decay rates, and score calibration does not guarantee stable high-dimensional on-policy optimization. The High-D and real image experimental results are also conditional on fixed source-KL.
\end{itemize}

\section{Conclusion}
\label{sec:conclusion}
This work characterizes when pointwise CFM quantities can replace clean endpoint likelihoods. For linear Gaussian paths, we exactly decompose endpoint NLL into entropy, a weighted CFM objective, an interior velocity--score residual, and a boundary residual. A CFM-only NLL estimate is exact when its residuals vanish, and a CFM-derived old/new log-ratio is exact when the corresponding relative residuals cancel.

This criterion separates off-policy training from on-policy alignment. At the fixed-target population optimum, ordinary CFM is not generally a pointwise NLL estimator, whereas \(w_{\mathrm{sc}}(t)=(1-t)/t\) removes the interior residual and leaves only the boundary term. During optimization, neither weighting has a general identity guarantee. In on-policy alignment, optimizing a CFM-ratio surrogate does not ensure an exact log-ratio, although the surrogate can improve reward under some mechanisms.

Synthetic experiments from 1D to 32D support the decomposition and the distinction between off-policy and on-policy regimes. Score calibration reduces off-policy CFM-only error when the boundary term is small, whereas successful reward optimization can coexist with large clean-ratio errors. Across synthetic and raw-image settings, EMA-based reference updates and ratio clipping are associated with stable optimization but do not guarantee ratio accuracy. Our analysis is limited to linear Gaussian paths with positive-time smoothing, and on-policy residual estimation remains numerically unstable in 32D. Future work should extend the identity to broader paths, explicitly control relative residuals in on-policy algorithms, and use the decomposition to adapt likelihood-based LLM post-training methods to flow matching models.

\newpage
\bibliography{references}

@article{liu2025improving,
  title={Improving video generation with human feedback},
  author={Liu, Jie and Liu, Gongye and Liang, Jiajun and Yuan, Ziyang and Liu, Xiaokun and Zheng, Mingwu and Wu, Xiele and Wang, Qiulin and Xia, Menghan and Wang, Xintao and others},
  journal={arXiv preprint arXiv:2501.13918},
  year={2025}
}

@article{song2021maximum,
  title={Maximum likelihood training of score-based diffusion models},
  author={Song, Yang and Durkan, Conor and Murray, Iain and Ermon, Stefano},
  journal={Advances in neural information processing systems},
  volume={34},
  pages={1415--1428},
  year={2021}
}

@inproceedings{lu2022maximum,
  title={Maximum Likelihood Training for Score-Based Diffusion ODEs by High Order Denoising Score Matching},
  author={Lu, Cheng and Zheng, Kaiwen and Bao, Fan and Chen, Jianfei and Li, Chongxuan and Zhu, Jun},
  booktitle={Proceedings of the 39th International Conference on Machine Learning},
  volume={162},
  pages={14429--14460},
  year={2022}
}

@inproceedings{kingma2023understanding,
  title={Understanding Diffusion Objectives as the {ELBO} with Simple Data Augmentation},
  author={Kingma, Diederik P. and Gao, Ruiqi},
  booktitle={Advances in Neural Information Processing Systems},
  volume={36},
  year={2023}
}

@inproceedings{zheng2023improved,
  title={Improved Techniques for Maximum Likelihood Estimation for Diffusion {ODE}s},
  author={Zheng, Kaiwen and Lu, Cheng and Chen, Jianfei and Zhu, Jun},
  booktitle={Proceedings of the 40th International Conference on Machine Learning},
  volume={202},
  pages={42363--42389},
  year={2023}
}

@inproceedings{benhamu2022matching,
  title={Matching Normalizing Flows and Probability Paths on Manifolds},
  author={Ben-Hamu, Heli and Cohen, Samuel and Bose, Joey and Amos, Brandon and Nickel, Maximillian and Grover, Aditya and Chen, Ricky T. Q. and Lipman, Yaron},
  booktitle={Proceedings of the 39th International Conference on Machine Learning},
  volume={162},
  pages={1749--1763},
  year={2022}
}

@inproceedings{huang2025divergence,
  title={Improving Flow Matching by Aligning Flow Divergence},
  author={Huang, Yuhao and Transue, Taos and Wang, Shih-Hsin and Feldman, William M. and Zhang, Hong and Wang, Bao},
  booktitle={Proceedings of the 42nd International Conference on Machine Learning},
  volume={267},
  pages={25813--25834},
  year={2025}
}

@inproceedings{zhou2025error,
  title={An Error Analysis of Flow Matching for Deep Generative Modeling},
  author={Zhou, Zhengyu and Liu, Weiwei},
  booktitle={Proceedings of the 42nd International Conference on Machine Learning},
  volume={267},
  pages={78903--78932},
  year={2025}
}

@article{xue2025advantage,
  title={{Advantage Weighted Matching}: Aligning {RL} with Pretraining in Diffusion Models},
  author={Xue, Shuchen and Ge, Chongjian and Zhang, Shilong and Li, Yichen and Ma, Zhi-Ming},
  journal={arXiv preprint arXiv:2509.25050},
  year={2025}
}

@article{choi2026rethinking,
  title={Rethinking the Design Space of Reinforcement Learning for Diffusion Models: On the Importance of Likelihood Estimation Beyond Loss Design},
  author={Choi, Jaemoo and Zhu, Yuchen and Guo, Wei and Molodyk, Petr and Yuan, Bo and Bai, Jinbin and Xin, Yi and Tao, Molei and Chen, Yongxin},
  journal={arXiv preprint arXiv:2602.04663},
  year={2026}
}

@article{tang2026vgrpo,
  title={{V-GRPO}: Online Reinforcement Learning for Denoising Generative Models Is Easier than You Think},
  author={Tang, Bingda and Zhang, Yuhui and Wang, Xiaohan and Mao, Jiayuan and Schmidt, Ludwig and Yeung-Levy, Serena},
  journal={arXiv preprint arXiv:2604.23380},
  year={2026}
}

@article{mcallister2025fmpg,
  title={{Flow Matching Policy Gradients}},
  author={McAllister, David and Ge, Songwei and Yi, Brent and Kim, Chung Min and Weber, Ethan and Choi, Hongsuk and Feng, Haiwen and Kanazawa, Angjoo},
  journal={arXiv preprint arXiv:2507.21053},
  year={2025}
}

@article{black2023training,
  title={Training Diffusion Models with Reinforcement Learning},
  author={Black, Kevin and Janner, Michael and Du, Yilun and Kostrikov, Ilya and Levine, Sergey},
  journal={arXiv preprint arXiv:2305.13301},
  year={2023}
}

@article{zheng2025diffusionnft,
  title={{DiffusionNFT}: Online Diffusion Reinforcement with Forward Process},
  author={Zheng, Kaiwen and Chen, Huayu and Ye, Haotian and Wang, Haoxiang and Zhang, Qinsheng and Jiang, Kai and Su, Hang and Ermon, Stefano and Zhu, Jun and Liu, Ming-Yu},
  journal={arXiv preprint arXiv:2509.16117},
  year={2025}
}

@article{liu2025flow,
  title={{Flow-GRPO}: Training Flow Matching Models via Online {RL}},
  author={Liu, Jie and Liu, Gongye and Liang, Jiajun and Li, Yangguang and Liu, Jiaheng and Wang, Xintao and Wan, Pengfei and Zhang, Di and Ouyang, Wanli},
  journal={arXiv preprint arXiv:2505.05470},
  year={2025}
}

@article{shao2024deepseekmath,
  title={{DeepSeekMath}: Pushing the Limits of Mathematical Reasoning in Open Language Models},
  author={Shao, Zhihong and Wang, Peiyi and Zhu, Qihao and Xu, Runxin and Song, Junxiao and Bi, Xiao and Zhang, Haowei and Zhang, Mingchuan and Li, YK and others},
  journal={arXiv preprint arXiv:2402.03300},
  year={2024}
}

@article{rafailov2023direct,
  title={Direct preference optimization: Your language model is secretly a reward model},
  author={Rafailov, Rafael and Sharma, Archit and Mitchell, Eric and Manning, Christopher D and Ermon, Stefano and Finn, Chelsea},
  journal={Advances in neural information processing systems},
  volume={36},
  pages={53728--53741},
  year={2023}
}

@inproceedings{wallace2024diffusion,
  title={Diffusion model alignment using direct preference optimization},
  author={Wallace, Bram and Dang, Meihua and Rafailov, Rafael and Zhou, Linqi and Lou, Aaron and Purushwalkam, Senthil and Ermon, Stefano and Xiong, Caiming and Joty, Shafiq and Naik, Nikhil},
  booktitle={Proceedings of the IEEE/CVF Conference on Computer Vision and Pattern Recognition},
  pages={8228--8238},
  year={2024}
}

@article{song2020score,
  title={Score-based generative modeling through stochastic differential equations},
  author={Song, Yang and Sohl-Dickstein, Jascha and Kingma, Diederik P and Kumar, Abhishek and Ermon, Stefano and Poole, Ben},
  journal={arXiv preprint arXiv:2011.13456},
  year={2020}
}

@article{sun2025unified,
  title={Unified Continuous Generative Models},
  author={Sun, Peng and Jiang, Yi and Lin, Tao},
  journal={arXiv preprint arXiv:2505.07447},
  year={2025}
}

@article{lipman2022flow,
  title={Flow matching for generative modeling},
  author={Lipman, Yaron and Chen, Ricky TQ and Ben-Hamu, Heli and Nickel, Maximilian and Le, Matt},
  journal={arXiv preprint arXiv:2210.02747},
  year={2022}
}

@article{albergo2022building,
  title={Building normalizing flows with stochastic interpolants},
  author={Albergo, Michael S and Vanden-Eijnden, Eric},
  journal={arXiv preprint arXiv:2209.15571},
  year={2022}
}

@article{schulman2017proximal,
  title={Proximal policy optimization algorithms},
  author={Schulman, John and Wolski, Filip and Dhariwal, Prafulla and Radford, Alec and Klimov, Oleg},
  journal={arXiv preprint arXiv:1707.06347},
  year={2017}
}

@article{liu2022flow,
  title={Flow straight and fast: Learning to generate and transfer data with rectified flow},
  author={Liu, Xingchao and Gong, Chengyue and Liu, Qiang},
  journal={arXiv preprint arXiv:2209.03003},
  year={2022}
}

@article{he2025tempflow,
  title={{TempFlow-GRPO}: When Timing Matters for {GRPO} in Flow Models},
  author={He, Xiaoxuan and Fu, Siming and Zhao, Yuke and Li, Wanli and Yang, Jian and Yin, Dacheng and Rao, Fengyun and Zhang, Bo},
  journal={arXiv preprint arXiv:2508.04324},
  year={2025}
}

@article{chen2018neural,
  title={Neural ordinary differential equations},
  author={Chen, Ricky TQ and Rubanova, Yulia and Bettencourt, Jesse and Duvenaud, David K},
  journal={Advances in neural information processing systems},
  volume={31},
  year={2018}
}

@article{li2025mixgrpo,
  title={{MixGRPO}: Unlocking Flow-Based {GRPO} Efficiency with Mixed {ODE-SDE}},
  author={Li, Junzhe and Cui, Yutao and Huang, Tao and Ma, Yinping and Fan, Chun and Yang, Miles and Zhong, Zhao},
  journal={arXiv preprint arXiv:2507.21802},
  year={2025}
}

@article{bergmeister2026reinforce,
  title={Reinforce Adjoint Matching: Scaling RL Post-Training of Diffusion and Flow-Matching Models},
  author={Bergmeister, Andreas and Jegelka, Stefanie and N{\"u}sken, Nikolas and Domingo-Enrich, Carles and Pidstrigach, Jakiw},
  journal={arXiv preprint arXiv:2605.10759},
  year={2026}
}
\bibliographystyle{plainnat}

\appendix
\newpage

\begingroup
\setlength{\parskip}{0pt}
\hypersetup{linkcolor=black}
\tableofcontents
\endgroup

\newpage

\section{Proofs}
\label{app:proofs}

\subsection{Proof of Theorem~\ref{thm:pw-practical-decomposition}}
\label{sec:proof-of-theorem-pw-practical-decomposition}

\begin{lemma}[KL derivative along the conditional Gaussian path]
\label{lem:pw-kl-derivative}
Under Assumption~\ref{ass:pw-pathwise-regularity},
\[
\frac{d}{dt}\mathrm{KL}(q_t^{x_0}\|p_t^\theta)
=
\E_{q_t^{x_0}}
\bigl[
\langle u_t^{x_0}(X_t)-v_\theta(X_t,t),
\nabla\log q_t^{x_0}(X_t) - \nabla\log p_t^\theta(X_t)\rangle
\bigr].
\]
\end{lemma}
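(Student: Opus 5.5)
The plan is to differentiate $\mathrm{KL}(q_t^{x_0}\|p_t^\theta)=\int q_t\log q_t-\int q_t\log p_t$ (writing $q_t=q_t^{x_0}$, $p_t=p_t^\theta$) under the integral sign. This is licensed by Assumption~\ref{ass:pw-pathwise-regularity}. The two continuity equations $\partial_t q_t=-\nabla\cdot(q_t u_t^{x_0})$ and $\partial_t p_t=-\nabla\cdot(p_t v_\theta)$ then turn every time derivative into a divergence, and integration by parts, with no boundary terms by the same assumption, yields the stated inner product.

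Concretely, the derivative is
\[
\frac{d}{dt}\mathrm{KL}=\int \partial_t q_t\,(\log q_t-\log p_t)\,dx+\int \partial_t q_t\,dx-\int q_t\,\frac{\partial_t p_t}{p_t}\,dx .
\]
The middle term vanishes because $\int q_t\,dx\equiv1$, which is also a divergence integral with no boundary contribution.

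For the first term, I substitute $\partial_t q_t=-\nabla\cdot(q_tu_t)$ and integrate by parts. This gives $\int q_t\langle u_t,\nabla\log q_t-\nabla\log p_t\rangle\,dx$.

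For the last term, I substitute $\partial_t p_t=-\nabla\cdot(p_tv_\theta)$ and integrate by parts against the test function $q_t/p_t$, using
\[
\nabla(q_t/p_t)=(q_t/p_t)(\nabla\log q_t-\nabla\log p_t).
\]
This gives $-\int q_t\langle v_\theta,\nabla\log q_t-\nabla\log p_t\rangle\,dx$. Adding the two pieces gives $\E_{q_t}\langle u_t-v_\theta,\nabla\log q_t-\nabla\log p_t\rangle$, as claimed.

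The only delicate step is justifying the exchange of derivative and integral and the vanishing of boundary terms, particularly in the $q_t/p_t$ integration, where $p_t$ appears in a denominator. Here I rely on the positivity and $C^1$ regularity in Assumption~\ref{ass:pw-pathwise-regularity}, and on its explicit stipulation that these integrations by parts carry no boundary terms. Since $q_t$ is an explicit Gaussian, its Gaussian tails make the $q_t$-weighted integrands decay, so I state these as consequences of the assumption rather than prove them separately. The resulting sign convention, $u-v$ paired with $\nabla\log q-\nabla\log p$, is exactly $\langle\Delta v,\Delta s\rangle$ up to sign. That is the form needed to integrate from $\varepsilon$ to $1$ and obtain \eqref{eq:pw-exact-identity}.
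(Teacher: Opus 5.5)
Your proposal is correct and follows essentially the same route as the paper: differentiate under the integral sign, discard $\int\partial_t q_t\,dx=0$, and convert the two remaining pieces using the conditional and model continuity equations plus integration by parts. The only cosmetic difference is in the model term: the paper first expands $\partial_t\log p_t^\theta=-\nabla\cdot v_\theta-\langle v_\theta,\nabla\log p_t^\theta\rangle$ and then integrates $\int q_t\,\nabla\cdot v_\theta\,dx$ by parts, whereas you integrate $\nabla\cdot(p_t^\theta v_\theta)$ by parts directly against $q_t/p_t^\theta$; both routes give $-\int q_t\langle v_\theta,\nabla\log q_t-\nabla\log p_t^\theta\rangle\,dx$.
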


\begin{proof}
\begin{align*}
    &\frac{d}{dt}\mathrm{KL}(q_t^{x_0}\|p_t^\theta) \\
    =& \frac{d}{dt} \int q_t^{x_0}\log\frac{q_t^{x_0}}{p_t^\theta}\,dx \\
    =& \int (\partial_t q_t^{x_0})\log\frac{q_t^{x_0}}{p_t^\theta}\,dx + \int q_t^{x_0}\partial_t\log\frac{q_t^{x_0}}{p_t^\theta}\,dx \\
    =& \int (\partial_t q_t^{x_0})\log\frac{q_t^{x_0}}{p_t^\theta}\,dx
       - \int q_t^{x_0}\partial_t\log p_t^\theta\,dx,
\end{align*}
where we used
\(\int q_t^{x_0}\partial_t\log q_t^{x_0}\,dx
=\int\partial_tq_t^{x_0}\,dx=0\).

For the first term, the continuity equation
\(\partial_tq_t^{x_0}=-\nabla\cdot(q_t^{x_0}u_t^{x_0})\)
and integration by parts give
\[
\int(\partial_tq_t^{x_0})
\log\frac{q_t^{x_0}}{p_t^\theta}\,dx
=
\int q_t^{x_0}
\left\langle
u_t^{x_0},
\nabla\log\frac{q_t^{x_0}}{p_t^\theta}
\right\rangle dx .
\]
For the second term, the model continuity equation implies
\[
\partial_t\log p_t^\theta
=-\nabla\cdot v_\theta
-\left\langle v_\theta,\nabla\log p_t^\theta\right\rangle.
\]
Therefore, another integration by parts yields
\[
-\int q_t^{x_0}\partial_t\log p_t^\theta\,dx
=
\int q_t^{x_0}\nabla\cdot v_\theta\,dx
+\int q_t^{x_0}\left\langle v_\theta,\nabla\log p_t^\theta\right\rangle dx
=
-\int q_t^{x_0}\left\langle
v_\theta,\nabla\log\frac{q_t^{x_0}}{p_t^\theta}
\right\rangle dx.
\]
Adding the two terms proves the stated derivative identity.
\end{proof}

\begin{proof}[Proof of Theorem~\ref{thm:pw-practical-decomposition}]
The cross-entropy decomposition gives
\[
\ell_\varepsilon(\theta;x_0)
=
H(q_\varepsilon^{x_0})
+
\mathrm{KL}(q_\varepsilon^{x_0}\|p_\varepsilon^\theta).
\]
Integrating Lemma~\ref{lem:pw-kl-derivative} from $\varepsilon$ to $1$ gives
\[
\mathrm{KL}(q_1^{x_0}\|p_1^\theta)
-
\mathrm{KL}(q_\varepsilon^{x_0}\|p_\varepsilon^\theta)
=
\int_\varepsilon^1
\E_{q_t^{x_0}}
\bigl[
\langle u_t^{x_0}-v_\theta, \Delta s_t^\theta(X_t;x_0)\rangle
\bigr]\,dt .
\]
Because $q_1^{x_0}=p_1^\theta=\cN(0,I_d)$, the first KL term is zero.
Rearranging and using $\Delta v_t^\theta(X_t;x_0)=v_\theta-u_t^{x_0}$ proves
\eqref{eq:pw-exact-identity}.  The entropy formula follows from
$q_\varepsilon^{x_0}=\cN((1-\varepsilon)x_0,\varepsilon^2I_d)$.

Subtracting the pointwise CFM objective term from the mixed integrand gives the
definition of $\mathcal G_{\varepsilon,w}$ in
\eqref{eq:pw-practical-decomposition}.
By definition,
\[
-\log p_0^\theta(x_0)
=
\ell_\varepsilon(\theta;x_0)
+
\mathcal B_\varepsilon(\theta;x_0),
\]
which yields \eqref{eq:pw-practical-decomposition}.  It remains to verify the
boundary limit.  Let
$X_\varepsilon=(1-\varepsilon)x_0+\varepsilon Z$ with
$Z\sim\cN(0,I_d)$ and $f_\theta(x,t)=-\log p_t^\theta(x)$.  Then
$X_\varepsilon\to x_0$ almost surely and
$f_\theta(X_\varepsilon,\varepsilon)\to f_\theta(x_0,0)$.
\assumpref{ass:pw-endpoint-regularity} gives an integrable polynomial-growth
dominating function, so dominated convergence theorem yields
\[
\ell_\varepsilon(\theta;x_0)
=
\E[f_\theta(X_\varepsilon,\varepsilon)]
\to
f_\theta(x_0,0)
=
-\log p_0^\theta(x_0).
\]
Thus $\mathcal B_\varepsilon(\theta;x_0)\to0$.
\end{proof}

\subsection{Proof of Proposition~\ref{prop:pw-linear-training-counterexample}}
\label{sec:proof-of-proposition-pw-linear-training-counterexample}

\begin{proof}
For the simple neural network $v_\theta(x,t)=a_\theta(x,t)x$, the velocity gap is
\begin{align}
\Delta v_t^\theta(x;x_0)
&= v_\theta(x,t) - u_t^{x_0}(x) \\
&= a_\theta(x,t)x - \frac{x-x_0}{t}\\
\Delta s_t^\theta(x;x_0) &= \nabla\log q_t^{x_0}(x) - \nabla\log p_t^\theta(x)
\end{align}
Now, we consider the simple case where $a_\theta = 0$. Because the velocity is zero, the distribution is unchanged, i.e.
$p_t^\theta=\cN(0,1)$ for all $t$.  At $x_0=0$,
\begin{align}
q_t^0=\cN(0,t^2),
\qquad
\nabla\log q_t^0(x)=-\frac{x}{t^2},
\qquad
\nabla\log p_t^\theta(x)=-x.
\end{align}
Therefore,
\begin{align}
\Delta v_t^\theta(x;0)=-\frac{x}{t},
\qquad
\Delta s_t^\theta(x;0)
=
-\frac{x}{t^2}+x.
\end{align}
Therefore,
\begin{align}
\Delta s_t^\theta(x;0)-w_{\mathrm{sc}}(t)\Delta v_t^\theta(x;0)
=
-\frac{x}{t^2}+x+\frac{1-t}{t^2}x
=
\left(1-\frac1t\right)x,
\end{align}
which is nonzero when $t\in(0,1)$. Since
$\E_{q_t^0}[X_t^2] = \mathrm{Var}_{q_t^0}(X_t) = t^2$,
\begin{align}
\mathcal G_{\varepsilon,w_{\mathrm{sc}}}(\theta;0)
&= \int_\varepsilon^1\E_{q_t^{0}}\bigl[\langle \Delta v_t^\theta(X_t;0), \Delta s_t^\theta(X_t;0)-w(t)\Delta v_t^\theta(X_t;0)\rangle\bigr]\,dt \\
&=\int_\varepsilon^1
\E_{q_t^0}\left[
-\frac{X_t}{t}\left(1-\frac1t\right)X_t
\right]dt \\
&=\int_\varepsilon^1(1-t)\,dt \\
&=\frac{(1-\varepsilon)^2}{2}>0.
\end{align}

\end{proof}

\subsection{Proof of Proposition~\ref{prop:pw-linear-optimum-calibration}}
\label{sec:proof-of-proposition-pw-linear-optimum-calibration}

\begin{proof}[Proof of \propref{prop:pw-linear-optimum-calibration}]
Write \(U:=X_1-X_0\). For each \(t\in(0,1)\), the conditional decomposition gives
\begin{align}
&\E\!\left[\|v(X_t,t)-U\|^2\right] \\
&=
\E\!\left[\|v(X_t,t)-\E[U\mid X_t]\|^2\right]
+
\E\!\left[\|\E[U\mid X_t] - U\|^2\right] \\
& \quad+2\E\!\left[\langle v(X_t,t)-\E[U\mid X_t], \E[U\mid X_t] - U\rangle\right]
\label{eq:app-pop-opt-regression-decomposition}
\end{align}
In the following, we prove $2\E\!\left[\langle v(X_t,t)-\E[U\mid X_t], \E[U\mid X_t] - U\rangle\right] = 0$.
\begin{align}
&\E\!\left[\langle v(X_t,t)-\E[U\mid X_t], \E[U\mid X_t] - U\rangle\right] \\
&=
\E\!\left[ \E [\langle v(X_t,t)-\E[U\mid X_t], \E[U\mid X_t] - U\rangle \mid X_t]\right] \\
&=
\E\!\left[ \langle v(X_t,t)-\E[U\mid X_t], \E[U\mid X_t] - \E[U\mid X_t]\rangle \right] \quad \text{(due to the linearity of $\E$)} \\
&= 0
\end{align}

Hence every positive time weight has the same pointwise population CFM
minimizer
\[
v_{\theta^\star}(x,t) = \E[U\mid X_t=x] = E[X_1-X_0\mid X_t=x]
\]

Let
\(m_t(x):=\E[X_0\mid X_t=x]\), $X_t = (1-t)X_0 + t X_1 \sim p_t^{\theta^\star}(x)$, $X_1 \sim \mathcal N(0, I)$ and $X_0 \sim p_0(x)$. 
\begin{align}
v_{\theta^\star}(x,t) = \E[X_1-X_0\mid X_t=x] = \E[\frac{X_t-(1-t)X_0}{t} - X_0\mid X_t=x] = \frac{x-m_t(x)}{t}
\end{align}
Because the density of the sum of random variables is the convolution of the densities of the random variables, we have
\begin{align}
p_t^{\theta^\star}(x)
&=\varphi_{t^2I_d} * p_0(x) = 
\int \varphi_{t^2I_d}(x-(1-t)y)p_0(y)\,\mathrm{d}y.
\label{eq:app-pop-opt-convolution}
\end{align}
where $\varphi_{t^2I_d}(x)$ is the density of the Gaussian distribution $\mathcal N(0,t^2I_d)$.

For \(t>0\), differentiating \eqref{eq:app-pop-opt-convolution} gives
\begin{align}
\nabla\log p_t^{\theta^\star}(x)
&=
\E\!\left[
-\frac{x-(1-t)X_0}{t^2}
\,\middle|\,X_t=x
\right]
=
\frac{(1-t)m_t(x)-x}{t^2}.
\label{eq:app-pop-opt-general-score}
\end{align}
For the conditional path fixed at \(x_0\),
\begin{align}
u_t^{x_0}(x)
&=\frac{x-x_0}{t},
&
\nabla\log q_t^{x_0}(x)
&=\frac{(1-t)x_0-x}{t^2}.
\label{eq:app-pop-opt-general-cond}
\end{align}
Therefore, we have
\begin{align}
\Delta v_t^{\theta^\star}(x;x_0)
&=v_{\theta^\star}(x,t)-u_t^{x_0}(x)
=\frac{x_0-m_t(x)}{t},\\
\Delta s_t^{\theta^\star}(x;x_0)
&=\nabla\log q_t^{x_0}(x)-\nabla\log p_t^{\theta^\star}(x)
=\frac{1-t}{t^2}\bigl(x_0-m_t(x)\bigr).
\end{align}
Therefore,
\[
\Delta s_t^{\theta^\star}(x;x_0)
=\frac{1-t}{t}\Delta v_t^{\theta^\star}(x;x_0)
=w_{\mathrm{sc}}(t)\Delta v_t^{\theta^\star}(x;x_0).
\]
Consequently,
\(\mathcal G_{\varepsilon,w_{\mathrm{sc}}}(\theta^\star;x_0)=0\).
For every fixed \(\varepsilon>0\), the Gaussian convolutions above are smooth
and positive on \([\varepsilon,1]\), so the identity in
\thmref{thm:pw-practical-decomposition} applies and gives
\[
\ell_\varepsilon(\theta^\star;x_0)
=
H(q_\varepsilon^{x_0})
+
\mathcal J_{w_{\mathrm{sc}}}^{[\varepsilon,1]}(\theta^\star;x_0).
\]
\end{proof}

\subsection{Proof of Proposition~\ref{prop:onpolicy-cfm-ratio-bias}}
\label{sec:proof-of-proposition-onpolicy-cfm-ratio-bias}

\begin{proof}
Fix $c$ and apply \thmref{thm:pw-practical-decomposition} to $\theta$ and $\theta_{\mathrm{old}}$:
\begin{align}
-\log p_0^\theta(x_0\mid c) &= H(q_\varepsilon^{x_0}) + \mathcal J_w^{[\varepsilon,1]}(\theta;x_0,c) + \mathcal G_{\varepsilon,w}(\theta;x_0,c) + \mathcal B_\varepsilon(\theta;x_0,c) \\
-\log p_0^{\theta_{\mathrm{old}}}(x_0\mid c) &= H(q_\varepsilon^{x_0}) + \mathcal J_w^{[\varepsilon,1]}(\theta_{\mathrm{old}};x_0,c) + \mathcal G_{\varepsilon,w}(\theta_{\mathrm{old}};x_0,c) + \mathcal B_\varepsilon(\theta_{\mathrm{old}};x_0,c)
\end{align}
Then, we have
\begin{align}
\log
\frac{p_0^\theta(x_0\mid c)}{p_0^{\theta_{\mathrm{old}}}(x_0\mid c)}
&=
\mathcal J_w^{[\varepsilon,1]}(\theta_{\mathrm{old}};x_0,c)
-
\mathcal J_w^{[\varepsilon,1]}(\theta;x_0,c)
-
\Delta\mathcal G_{\varepsilon,w}(x_0,c)
-
\Delta\mathcal B_\varepsilon(x_0,c).
\end{align}
The first two terms are
$\widehat\Delta_{\theta,\theta_{\mathrm{old}}}^{\mathrm{CFM}}(x_0,c)$,
which proves \eqref{eq:onpolicy-clean-cfm-ratio-decomp}.  Rearranging and
exponentiating gives \eqref{eq:onpolicy-multiplicative-bias}.
\end{proof}

\subsection{Proof of Proposition~\ref{prop:onpolicy-rotational-mismatch}}
\label{sec:proof-of-proposition-onpolicy-rotational-mismatch}

\begin{proof}
The marginal path is $p_t=\mathcal N(0,s_t^2I_2)$.
$R_0^\top=-R_0$ implies $x^\top R_0x = 0$ for all $x$, $\operatorname{tr}(R_0)=0$ and hence
\[
\nabla\cdot(p_tR_0x)
=
\nabla p_t(x)\cdot R_0x
+
p_t(x)\operatorname{tr}(R_0)
=
-\frac{p_t(x)}{s_t^2}x^\top R_0x
+
p_t(x)\operatorname{tr}(R_0)
=
0 .
\]
Because $\partial_t p^{\theta_{\mathrm{old}}}(x) + \nabla \cdot (p_t^{\theta_{\mathrm{old}}}(x) v_{\mathrm{old}}^\star(x,t)) = 0$, we have
\begin{align}
    &\partial_t p_t^{\theta}(x) + \nabla \cdot (p_t^{\theta}(x) v_a(x,t)) \\
    =& \partial_t p^{\theta_{\mathrm{old}}}(x) + \nabla \cdot (p_t^{\theta_{\mathrm{old}}}(x) v_{\mathrm{old}}^\star(x,t)) + \nabla \cdot (p_t^{\theta}(x) aR_0x) \\
    =& 0
\end{align}
where $p_0^{a}$ is the endpoint density of $v_a$. Therefore, $p_0^{a}=p_0^{\theta_{\mathrm{old}}}$ and
$\log \frac{p_0^a(x_0)}{p_0^{\mathrm{old}}(x_0)}=0$ for all $x_0$.

It remains to compute the CFM objective difference at $x_0=0$.  

For $q_t^0=\mathcal N(0,t^2I_2)$, we have $u_t^0(x)=x/t$.  Then
\begin{align}
\Delta v_t^{\theta_{\mathrm{old}}}(x;0)
&= v_{\mathrm{old}}^\star(x,t)-u_t^0(x) = \frac{2t-1}{s_t^2}x - \frac{x}{t} := c(t) x
\end{align}

Therefore,
\begin{align}
\mathcal J_{w_{\mathrm{sc}}}^{[\varepsilon,1]}(a;0)
&=
\int_\varepsilon^1
w_{\mathrm{sc}}(t)
\E_{q_t^0}\|\Delta v_t^{a}(X_t;0)\|^2_2\,dt \\
&=
\int_\varepsilon^1
w_{\mathrm{sc}}(t)
\E_{q_t^0}\|v_a(X_t,t) - u_t^0(X_t)\|^2_2\,dt \\
&=
\int_\varepsilon^1
w_{\mathrm{sc}}(t)
\E_{q_t^0}\|v_{\mathrm{old}}^\star(X_t,t) + aR_0X_t - u_t^0(X_t)\|^2_2\,dt \\
&= 
\int_\varepsilon^1
w_{\mathrm{sc}}(t)
\E_{q_t^0}\|c(t) X_t + aR_0X_t\|^2_2\,dt \\
&= 
\int_\varepsilon^1
w_{\mathrm{sc}}(t)
\E_{q_t^0} \left[\|c(t) X_t\|^2_2 + 2\langle c(t) X_t, aR_0X_t \rangle + a^2\|R_0X_t\|^2_2\right]\,dt \\
&= 
\mathcal J_{w_{\mathrm{sc}}}^{[\varepsilon,1]}({\mathrm{old}};0) + a^2\int_\varepsilon^1 w_{\mathrm{sc}}(t) \E_{q_t^0} \left[\|R_0X_t\|^2_2\right]\,dt \quad \text{(due to $x^\top R_0x = 0$)}\\
&= 
\mathcal J_{w_{\mathrm{sc}}}^{[\varepsilon,1]}({\mathrm{old}};0) + a^2\int_\varepsilon^1 w_{\mathrm{sc}}(t) \E_{q_t^0} \left[\|X_t\|^2_2\right]\,dt \quad \text{(due to $\|R_0x\|^2_2 = \|x\|^2_2$)}\\
&= 
\mathcal J_{w_{\mathrm{sc}}}^{[\varepsilon,1]}({\mathrm{old}};0) + a^2\int_\varepsilon^1 \frac{1 - t}{t} t^2 \,dt \quad \text{(due to $q_t^0=\mathcal N(0,t^2I_2)$)}\\
&=
\mathcal J_{w_{\mathrm{sc}}}^{[\varepsilon,1]}({\mathrm{old}};0) + 2a^2\int_\varepsilon^1 t(1-t)\,dt \\
&=
\mathcal J_{w_{\mathrm{sc}}}^{[\varepsilon,1]}({\mathrm{old}};0) + 
a^2
\left(
\frac13-\varepsilon^2+\frac23\varepsilon^3
\right).
\end{align}
Thus
\[
\widehat\Delta_{a,\mathrm{old}}^{\mathrm{CFM}}(0)
=
\mathcal J_{w_{\mathrm{sc}}}^{[\varepsilon,1]}({\mathrm{old}};0)
-
\mathcal J_{w_{\mathrm{sc}}}^{[\varepsilon,1]}(a;0)
=
-C_\varepsilon a^2,
\]
where
$C_\varepsilon=\frac13-\varepsilon^2+\frac23\varepsilon^3>0$ for
$\varepsilon\in(0,1)$.
\end{proof}

\subsection{Proof of Proposition~\ref{prop:onpolicy-after-cfm-ratio-gap-nonzero}}
\label{sec:proof-of-proposition-onpolicy-after-cfm-ratio-gap-nonzero}

\begin{proof}
By \propref{prop:onpolicy-rotational-mismatch},
$\widehat\Delta_{a,\mathrm{old}}^{\mathrm{CFM}}(0)=-C_\varepsilon a^2$ and
$\log \frac{p_0^a(0)}{p_0^{\theta_{\mathrm{old}}}(0)}=0$.  Let $y=a^2$.  Then
\[
\mathcal L(y)
=
-r e^{-C_\varepsilon y}
-
\lambda y,
\qquad y\ge0.
\]
Its derivative is
\[
\frac{d\mathcal L}{dy}
=
rC_\varepsilon e^{-C_\varepsilon y}
-
\lambda .
\]
If $0<\lambda<rC_\varepsilon$, the unique maximizer is
\[
y^\star
=
\frac{1}{C_\varepsilon}
\log\frac{rC_\varepsilon}{\lambda}
>0 .
\]
Thus any optimizer $a^\star$ with $(a^\star)^2=y^\star$ is nonzero.  At this
optimizer the log-ratio remains zero, while
$\widehat\Delta_{a^\star,\mathrm{old}}^{\mathrm{CFM}}(0)
=-C_\varepsilon(a^\star)^2\ne0$.
\end{proof}

\clearpage
\section{Experiment Details}
\label{app:experiment-details}

\subsection{Construction of the RQ1 Cross-Scale Summary Figure}
\label{app:rq1-four-layer-figure-construction}

\figref{fig:rq1-four-layer-closure} is a visualization aggregation of
the fixed-target audit tables in this appendix. For each displayed setting \(s\), the gray interval is
\[
\left[\min_{r\in\mathcal R_s} e^{\mathrm{CFM}}_r,\,
      \max_{r\in\mathcal R_s} e^{\mathrm{CFM}}_r\right],
\qquad
e^{\mathrm{CFM}}_r
=\mathrm{MAE}\!\left(\mathrm{NLL},\,\mathcal J_w+H\right),
\]
and the green interval is the smoothed-identity closure error
\[
\left[\min_{r\in\mathcal R_s} e^{\mathrm{close}}_r,\,
      \max_{r\in\mathcal R_s} e^{\mathrm{close}}_r\right],
\qquad
\begin{aligned}
e^{\mathrm{close}}_r
&=\mathrm{MAE}\!\left(\ell_\varepsilon,\,\mathcal J_w+\mathcal G+H\right)\\
&=\mathrm{MAE}\!\left(\mathrm{NLL},\,\mathcal J_w+\mathcal G+\mathcal B+H\right),
\end{aligned}
\]
where the second equality holds sample by sample because
\(\mathcal B_\varepsilon=\mathrm{NLL}-\ell_\varepsilon\), and
\(\mathcal R_s\) denotes the reported configurations for setting \(s\).
Thus the green interval audits \eqref{eq:pw-exact-identity} using independently
estimated \(\ell_\varepsilon\), \(\mathcal J_w\), and \(\mathcal G\); it is
not evidence for the definition or endpoint limit of
\(\mathcal B_\varepsilon\). The circular marker in each interval is the
geometric midpoint \(\sqrt{\mathrm{low}\cdot\mathrm{high}}\). The numeric
label printed next to each interval is its upper endpoint. The exact endpoints
are listed in \tabref{tab:app-rq1-four-layer-range-construction}.

\begin{table}[ht]
\centering
\caption{
Range construction for the RQ1 cross-scale summary in \figref{fig:rq1-four-layer-closure}.  Each range is the minimum and maximum over the specified fixed-target audit entries; lower is better.
}
\label{tab:app-rq1-four-layer-range-construction}
\scriptsize
\setlength{\tabcolsep}{3.6pt}
\renewcommand{\arraystretch}{1.12}
\begin{tabularx}{\textwidth}{lccX}
\toprule
Setting
& CFM-only range
& Smoothed-identity closure
& Source entries \\
\midrule
1D GMM
& \(0.222\)--\(0.803\)
& \(0.198\)--\(0.234\)
& \tabref{tab:app-1d-fixed-high-audit}, all train-weight and estimate-weight combinations. \\
2D GMM
& \(0.411\)--\(1.001\)
& \(0.387\)--\(0.397\)
& \tabref{tab:app-2d-fixed-high-audit}, GMM row, both estimate weights. \\
2D banana
& \(0.342\)--\(1.587\)
& \(0.312\)--\(0.323\)
& \tabref{tab:app-2d-fixed-high-audit}, banana row, both estimate weights. \\
2D two moons
& \(0.580\)--\(0.895\)
& \(0.302\)--\(0.311\)
& \tabref{tab:app-2d-fixed-high-audit}, two-moons row, both estimate weights. \\
GMM-4D
& \(1.081\)--\(2.001\)
& \(1.021\)--\(1.028\)
& \tabref{tab:app-highd-fixed-audit}, GMM-4D row, both estimate weights. \\
GMM-8D
& \(1.387\)--\(3.447\)
& \(1.238\)--\(1.392\)
& \tabref{tab:app-highd-fixed-audit}, GMM-8D row, both estimate weights. \\
GMM-16D
& \(2.000\)--\(8.108\)
& \(1.711\)--\(1.880\)
& \tabref{tab:app-highd-fixed-audit}, GMM-16D row, both estimate weights. \\
GMM-32D
& \(3.012\)--\(19.402\)
& \(2.290\)--\(2.617\)
& \tabref{tab:app-highd-fixed-audit}, GMM-32D row, both estimate weights. \\
\bottomrule
\end{tabularx}
\end{table}

\subsection{Construction of the RQ2 Cross-Scale Summary Figure}
\label{app:rq2-four-layer-figure-construction}

\figref{fig:rq2-four-layer-table-claims} is a visualization aggregation of the RQ2 experiment results in this appendix. Unlike \figref{fig:rq1-four-layer-closure}, which plots ranges, RQ2 plots representative values for the two claim types in \tabref{tab:cfm-nll-answer}: off-policy fixed-target NLL replacement in panel (a), and on-policy CFM-ratio surrogate behavior in panel (b). For panel (a), each displayed pair is a CFM-only NLL MAE comparison
\[
e^{(1)}_s
=\mathrm{MAE}\!\left(\mathrm{NLL},\,\mathcal J_{w=1}+H\right),
\qquad
e^{(\mathrm{sc})}_s
=\mathrm{MAE}\!\left(\mathrm{NLL},\,\mathcal J_{w_{\mathrm{sc}}}+H\right),
\]
using the source rows listed in \tabref{tab:app-rq2-four-layer-construction}. The 1D values average over the two train weights for a fixed estimate weight.  The 2D value averages the GMM and banana rows because their boundary terms $B$ are small, and two moons is excluded from the plotted average due to its large boundary term $B$. The high-dimensional value uses the 32D GMM row.  Raw images are excluded because the corresponding experiments do not provide exact raw-pixel NLL or decomposition audits.

For panel (b), each displayed point is a pair \((R_s,\rho_s)\), where \(R_s\) is the final reward and \(\rho_s\) is the MAE between the clean ratio and the CFM-only ratio with $w=1$.  The reward bars use the left axis, the ratio-MAE curve uses the log-scaled right axis, and the dashed horizontal line marks the \(0.90\) reward gate.  All three ratio errors are plotted at their reported values without clipping or off-scale substitution.  The 32D point comes from the 1,000-round ordinary-CFM run in \tabref{tab:app-highd-onpolicy-baseline}; its source-relative MAE is \(1382.955\), while the corresponding round-old MAE is \(2.617\).  These entries show the table's on-policy message directly: reward success can hold while source-relative clean-ratio error remains large.  Raw images are omitted because no exact raw-pixel clean-ratio audit is available.

\begin{table}[ht]
\centering
\caption{
Construction for the RQ2 cross-scale summary in \figref{fig:rq2-four-layer-table-claims}.  Panel (a) reports CFM-only NLL MAE, where lower is better.  Panel (b) reports final reward and clean-ratio MAE for the same on-policy run.
}
\label{tab:app-rq2-four-layer-construction}
\scriptsize
\setlength{\tabcolsep}{3.2pt}
\renewcommand{\arraystretch}{1.12}
\begin{tabularx}{\textwidth}{
>{\raggedright\arraybackslash}p{0.18\textwidth}
>{\raggedright\arraybackslash}p{0.15\textwidth}
>{\raggedright\arraybackslash}p{0.27\textwidth}
>{\raggedright\arraybackslash}X
}
\toprule
Data Types
& Quantity Types
& Quantity Values
& Sources \\
\midrule
(a) 1D
& \(e^{(1)}_s\) / \(e^{(\mathrm{sc})}_s\)
& \(0.800/0.223\)
& Means over the \(w=1\) estimate rows and over the \(w=w_{\mathrm{sc}}\) estimate rows in \tabref{tab:app-1d-fixed-high-audit}. \\
(a) 2D GMM+banana
& \(e^{(1)}_s\) / \(e^{(\mathrm{sc})}_s\)
& \(1.294/0.377\)
& Means of the GMM and banana rows in \tabref{tab:app-2d-fixed-high-audit}; two moons is excluded because its boundary term is the intended caveat. \\
(a) 32D GMM
& \(e^{(1)}_s\) / \(e^{(\mathrm{sc})}_s\)
& \(19.402/3.012\)
& GMM-32D row in \tabref{tab:app-highd-fixed-audit}. \\
\midrule
(b) 1D
& \(R_s\) / \(\rho_s\)
& \(0.995/7.900\)
& CFM-only ratio row in \tabref{tab:app-1d-onpolicy-baseline}. \\
(b) 2D GMM
& \(R_s\) / \(\rho_s\)
& \(0.981/1.570\)
& GMM ordinary-CFM row in \tabref{tab:app-2d-onpolicy-baseline}. \\
(b) 32D GMM
& \(R_s\) / \(\rho_s\)
& \(0.910/1382.955\)
& GMM-32D ordinary-CFM row in \tabref{tab:app-highd-onpolicy-baseline}; \(\rho_s\) is source-relative, and the round-old MAE is \(2.617\). \\
\bottomrule
\end{tabularx}
\end{table}

\subsection{Construction of the RQ3 Cross-Scale Summary Figure}
\label{app:rq3-four-layer-figure-construction}

\figref{fig:rq3-four-layer-mechanisms} aggregates the controlled sweeps used
to answer RQ3. The figure reports associations between the reference-update
rule and reward stability, not a causal mechanism or evidence that the
surrogate becomes an exact clean likelihood ratio. EMA-0 refreshes the
old-policy reference every round, EMA-0.5 and EMA-0.9 use moving-average
references, and EMA-1 keeps the source checkpoint fixed. EMA decay is not a
measured distance between policies.

All four panels visualize the S1 EMA-decay sweep. A configuration passes when
its final sampled reward reaches the common reward gate \(0.90\). The plotted
bar height is \(s/N\), where \(s\) is the number of passing configurations and
\(N\) is the number tested at that EMA value. It is not a repeated-seed success
probability. The text above each bar gives the exact count \(s/N\). The
raw-image panel aggregates MNIST and CIFAR-10. The High-D and raw-image S1
sweeps use the fixed source-vector-field coefficient
\(\beta_{\mathrm{src}}=0.1\), so their EMA comparison is conditional on that
regularization setting.

\begin{table}[ht]
\centering
\caption{
Construction for the RQ3 cross-scale summary in \figref{fig:rq3-four-layer-mechanisms}. Every panel reports the fraction of tested S1 configurations passing the common final sampled reward gate \(R\geq0.90\); higher is better.
}
\label{tab:app-rq3-four-layer-construction}
\scriptsize
\setlength{\tabcolsep}{3.0pt}
\renewcommand{\arraystretch}{1.12}
\begin{tabularx}{\textwidth}{
>{\RaggedRight\arraybackslash}p{0.23\textwidth}
>{\RaggedRight\arraybackslash}p{0.29\textwidth}
>{\RaggedRight\arraybackslash}X
}
\toprule
Panel / layer
& Plotted value
& Source entries \\
\midrule
1D EMA sweep
& Success rates: EMA-0 (\(2/2\)), EMA-0.5 (\(0/2\)), EMA-1 (\(0/2\))
& EMA-decay row in \tabref{tab:app-1d-mechanism-sweeps}; passing means final sampled reward at least \(0.90\). \\
2D EMA sweep
& Success rates: EMA-0 (\(4/6\)), EMA-0.5 (\(0/6\)), EMA-0.9 (\(0/6\)), EMA-1 (\(0/6\))
& EMA-decay row in \tabref{tab:app-2d-mechanism-sweeps}, aggregated over GMM, banana, and two moons. \\
High-D EMA sweep
& Success rates: EMA-0 (\(4/8\)), EMA-0.5 (\(7/8\)), EMA-0.9 (\(6/8\)), EMA-1 (\(0/8\))
& S1 row in \tabref{tab:app-highd-mechanism-sweep}, aggregated over GMM-4D, 8D, 16D, and 32D. \\
Raw-image EMA sweep
& Success rates: EMA-0 (\(2/2\)), EMA-0.5 (\(2/2\)), EMA-0.9 (\(0/2\)), EMA-1 (\(0/2\))
& S1 row in \tabref{tab:app-image-raw-mechanism-summary}, aggregated over MNIST and CIFAR-10. \\
\bottomrule
\end{tabularx}
\end{table}

\subsection{1D Toy Experiments}
\label{app:1d-toy-experiments}

The 1D GMM experiments test the three research questions behind \tabref{tab:cfm-nll-answer}:
\begin{enumerate}
    \item[(RQ1)] whether the pointwise decomposition in \thmref{thm:pw-practical-decomposition} is numerically closed;
    \item[(RQ2)] whether the off-policy and on-policy conclusions in \tabref{tab:cfm-nll-answer} hold;
    \item[(RQ3)] why CFM-only ratios can be useful in on-policy training.
\end{enumerate}

\subsubsection{Experiment Setup}
The 1D target distribution is the three-component Gaussian mixture
\[
p_{\mathrm{data}}(x)
=0.35\,\mathcal N(-2.2,0.45^2)
+0.40\,\mathcal N(0,0.75^2)
+0.25\,\mathcal N(2.0,0.35^2).
\]
All CFM models use the linear Gaussian path
\(X_t=(1-t)x_0+t z\), \(z\sim\mathcal N(0,1)\), with
\(t\in[\varepsilon,1]\) and \(\varepsilon=0.10\).  The ordinary estimate uses
\(w(t)=1\), while the score-calibrated estimate uses
\(w_{\mathrm{sc}}(t)=(1-t)/t\).  The reward used by the on-policy experiments
is a bounded Gaussian bump centered at \(2.0\),
\[
R(x)=\exp\!\left(-\frac{1}{2}\frac{(x-2.0)^2}{0.45^2}\right).
\]

\tabref{tab:app-1d-rq12-fixed-setup} and \tabref{tab:app-1d-rq12-onpolicy-setup} collect the off-policy and on-policy configurations used for RQ1 and RQ2.

\begin{table}[H]
\centering
\caption{\textbf{1D off-policy settings for RQ1 and RQ2.}}
\label{tab:app-1d-rq12-fixed-setup}
\scriptsize
\setlength{\tabcolsep}{3.4pt}
\renewcommand{\arraystretch}{1.06}
\begin{tabularx}{\textwidth}{
>{\raggedright\arraybackslash}p{0.28\textwidth}
>{\centering\arraybackslash}p{0.19\textwidth}
>{\centering\arraybackslash}p{0.21\textwidth}
>{\centering\arraybackslash}X
}
\toprule
\textbf{SETTING}
& \textbf{TRAINING}
& \textbf{FORMAL AUDIT}
& \textbf{HIGH AUDIT} \\
\midrule
\multicolumn{4}{c}{\textbf{Protocol}} \\
\midrule
Target / seed & \multicolumn{3}{c}{Three-component 1D GMM / 13} \\
Path / \(\varepsilon\) & \multicolumn{3}{c}{Linear Gaussian / 0.10} \\
Train weights & \(\{1,w_{\mathrm{sc}}\}\) & -- & -- \\
Readout weights & -- & \(\{1,w_{\mathrm{sc}}\}\) & \(\{1,w_{\mathrm{sc}}\}\) \\
Checkpoint scope & 10 fractional & All saved & Final only \\
\midrule
\multicolumn{4}{c}{\textbf{Optimization}} \\
\midrule
Hidden width & 32 & -- & -- \\
Optimizer & Adam & -- & -- \\
Learning rate & \(10^{-3}\) & -- & -- \\
Batch size & 1,024 & -- & -- \\
Optimization steps & 500,000 & -- & -- \\
\midrule
\multicolumn{4}{c}{\textbf{Audit Configuration}} \\
\midrule
Evaluation samples & -- & 128 & 256 \\
Monte Carlo samples / point & -- & 32 & 128 \\
Time-grid points & -- & 21 & 96 \\
ODE / sampling steps & -- & 160 / 160 & 160 / 160 \\
Generated samples & -- & 500,000 & 500,000 \\
Density-grid points & -- & 1,024 & 1,024 \\
\bottomrule
\end{tabularx}
\end{table}

\begin{table}[H]
\centering
\caption{\textbf{1D on-policy settings for RQ1 and RQ2.}}
\label{tab:app-1d-rq12-onpolicy-setup}
\scriptsize
\setlength{\tabcolsep}{4.0pt}
\renewcommand{\arraystretch}{1.06}
\begin{tabularx}{\textwidth}{
>{\raggedright\arraybackslash}p{0.34\textwidth}
>{\centering\arraybackslash}X
>{\centering\arraybackslash}X
}
\toprule
\textbf{SETTING}
& \textbf{ORDINARY}
& \textbf{SCORE-CALIBRATED} \\
\midrule
\multicolumn{3}{c}{\textbf{Protocol}} \\
\midrule
Target / seed & \multicolumn{2}{c}{Three-component 1D GMM / 13} \\
Source checkpoint & \multicolumn{2}{c}{Fixed-target \(w=1\), final} \\
CFM-ratio weight & \(w=1\) & \(w=w_{\mathrm{sc}}\) \\
Audit readout weights & \multicolumn{2}{c}{\(\{1,w_{\mathrm{sc}}\}\)} \\
\midrule
\multicolumn{3}{c}{\textbf{On-policy Optimization}} \\
\midrule
Hidden width & \multicolumn{2}{c}{32} \\
Rounds & \multicolumn{2}{c}{48} \\
Rollout samples / group size & \multicolumn{2}{c}{128 / 8} \\
Update steps / learning rate & \multicolumn{2}{c}{8 / \(3\times10^{-3}\)} \\
\midrule
\multicolumn{3}{c}{\textbf{Stabilization}} \\
\midrule
Advantage / ratio clip & \multicolumn{2}{c}{5 / \(10^{-5}\)} \\
Max. log-ratio / gradient norm & \multicolumn{2}{c}{5 / 10} \\
EMA decay & \multicolumn{2}{c}{0} \\
Source / EMA KL beta & \multicolumn{2}{c}{0 / 0} \\
\midrule
\multicolumn{3}{c}{\textbf{Audit Configuration}} \\
\midrule
Checkpoints / audit samples & \multicolumn{2}{c}{10 fractional / 128} \\
MC samples / time-grid points & \multicolumn{2}{c}{128 / 48} \\
ODE / sampling steps & \multicolumn{2}{c}{160 / 160} \\
Ratio references & \multicolumn{2}{c}{Source and round-old} \\
Reward samples / reward gate & \multicolumn{2}{c}{4,096 / 0.90} \\
\bottomrule
\end{tabularx}
\end{table}

The RQ3 mechanism sweeps use the same fixed-target \(w=1\) final checkpoint and vary EMA distance, ratio weight, ratio clipping, advantage clipping, source KL beta, and EMA KL beta; their controlled settings are reported separately in the RQ3 tables.

\subsubsection{RQ1: Does the smoothed identity close numerically?}
The independently estimable identity underlying
\thmref{thm:pw-practical-decomposition} is
\[
    \ell_\varepsilon(\theta;x_0)
    =
    H(q_\varepsilon^{x_0})
    +
    \mathcal J_w^{[\varepsilon,1]}(\theta;x_0)
    +
    \mathcal G_{\varepsilon,w}(\theta;x_0).
\]
\textbf{Off-policy results audit the numerical closure of this identity.}
As shown in \tabref{tab:app-1d-fixed-high-audit}, adding the independently
estimated interior term yields a small smoothed-identity closure error.  The
remaining error measures the numerical implementation---including score, ODE,
Monte Carlo, and time-quadrature approximations---rather than independently
validating the analytic theorem.
\begin{table*}[ht]
    \centering
    \caption{
    Empirical results of off-policy training on the 1D GMM.  Lower is better for all metrics.  Density \(L_1\) is the grid-normalized \(L_1\) distance \(\sum_g |\hat p_\theta(g)-p_{\mathrm{data}}(g)|\Delta g\) between the CNF endpoint density and the target density on the evaluation grid.  The \(J+H\) column is the CFM estimate used for the off-policy table
    claim.  The closure column is
    \(\mathrm{MAE}(\ell_\varepsilon,\mathcal J_w+\mathcal G+H)\); the
    displayed endpoint form is algebraically equal because
    \(\mathcal B_\varepsilon=\mathrm{NLL}-\ell_\varepsilon\).
    }
    \label{tab:app-1d-fixed-high-audit}
    \scriptsize
    \setlength{\tabcolsep}{4.5pt}
    \renewcommand{\arraystretch}{1.12}
    \begin{tabular}{llccc}
    \toprule
    Train weight & Estimate weight
    & \(\mathrm{MAE}(\mathrm{NLL},\mathcal J_w+H)\downarrow\)
    & \(\mathrm{MAE}(\ell_\varepsilon,\mathcal J_w+\mathcal G+H)\downarrow\)
    & Density \(L_1\downarrow\) \\
    \midrule
    \(w=1\) & \(w=1\) & 0.803 & 0.234 & 0.036 \\
    \(w=1\) & \(w=w_{\mathrm{sc}}\) & 0.224 & 0.199 & 0.036 \\
    \(w=w_{\mathrm{sc}}\) & \(w=1\) & 0.797 & 0.234 & 0.035 \\
    \(w=w_{\mathrm{sc}}\) & \(w=w_{\mathrm{sc}}\) & 0.222 & 0.198 & 0.035 \\
    \bottomrule
    \end{tabular}
\end{table*}
    
\textbf{On-policy results provide the same numerical audit.}
For on-policy training, the model is fine-tuned from the pretrained off-policy
checkpoint trained with $w=1$. \tabref{tab:app-1d-onpolicy-decomposition}
reports the smoothed-identity closure error separately from the CFM-only
endpoint-estimation error.
\begin{table*}[ht]
\centering
\caption{
Numerical audit on the 1D GMM at the final on-policy checkpoints.  The
\(J+H\) column reports CFM-only endpoint-estimation error. The closure column
reports \(\mathrm{MAE}(\ell_\varepsilon,J+G+H)\), written in its algebraically
equivalent endpoint form in the header. Lower is better.
}
\label{tab:app-1d-onpolicy-decomposition}
\scriptsize
\setlength{\tabcolsep}{2.6pt}
\renewcommand{\arraystretch}{1.12}
\begin{tabular}{llcccc}
\toprule
Train Variants & Estimates
& \(\mathrm{MAE}(\mathrm{NLL},\mathcal J_w+H)\downarrow\)
& \(\mathrm{MAE}(\mathrm{NLL},\mathcal J_w+\mathcal G+\mathcal B+H)\downarrow\)
& \(|G|\)
& \(|B|\) \\
\midrule
CFM-only ratio ($w=1$) & \(w=1\) & 1.928 & 0.117 & 0.418 & 2.210 \\
CFM-only ratio ($w=1$) & \(w=w_{\mathrm{sc}}\) & 2.015 & 0.112 & 0.251 & 2.232 \\
Score-calibrated ratio ($w=w_{\mathrm{sc}}$) & \(w=1\) & 2.020 & 0.118 & 0.370 & 2.270 \\
Score-calibrated ratio ($w=w_{\mathrm{sc}}$) & \(w=w_{\mathrm{sc}}\) & 2.045 & 0.117 & 0.303 & 2.276 \\
\bottomrule
\end{tabular}
\end{table*}

\begin{figure*}[ht]
\centering
\begin{minipage}{0.49\textwidth}
\centering
{\scriptsize\textbf{(a) Endpoint estimate versus smoothed closure}}\\
\includegraphics[width=\linewidth]{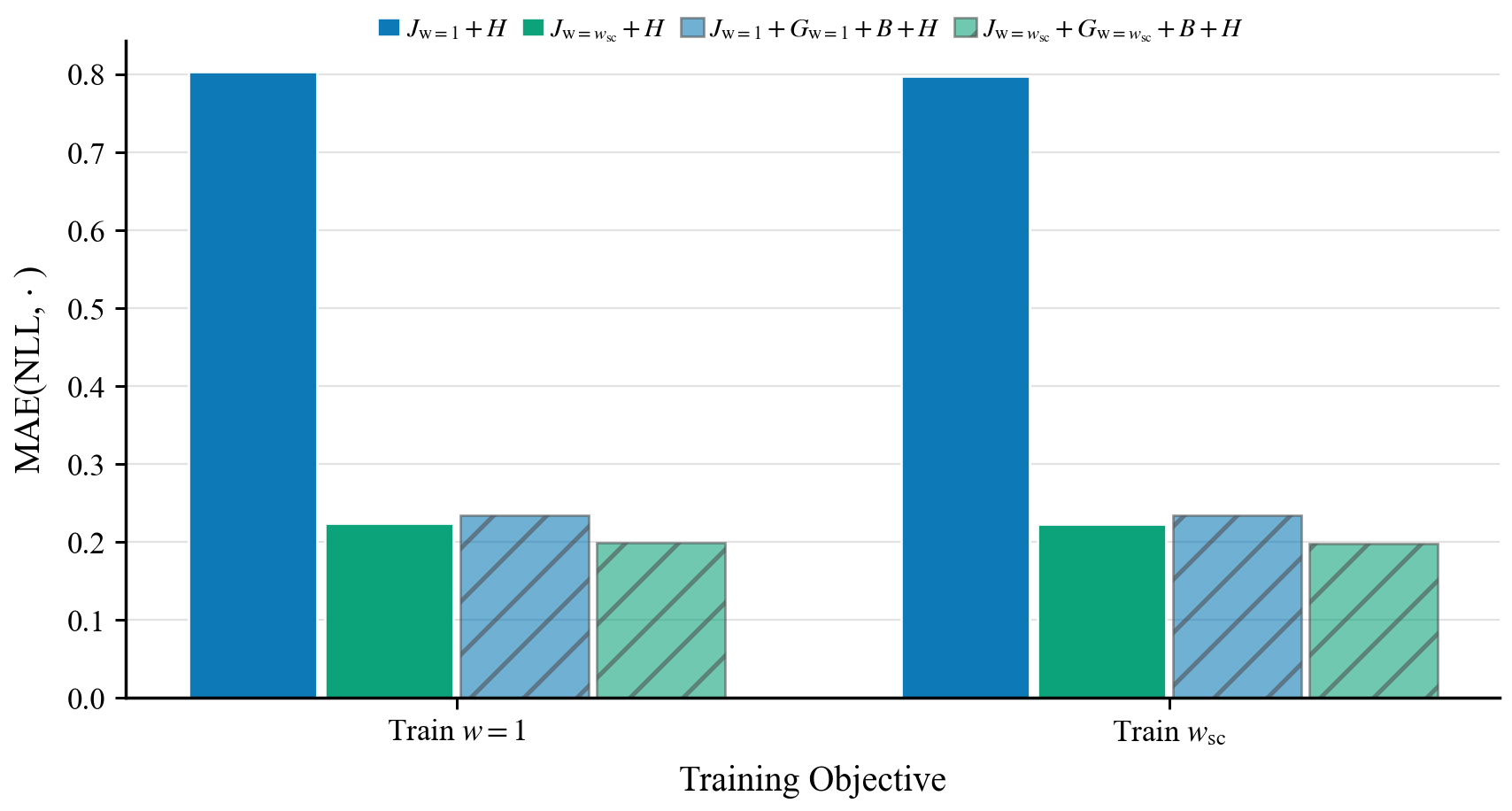}
\end{minipage}
\hfill
\begin{minipage}{0.49\textwidth}
\centering
{\scriptsize\textbf{(b) During-optimization checkpoint trajectory}}\\
\includegraphics[width=\linewidth]{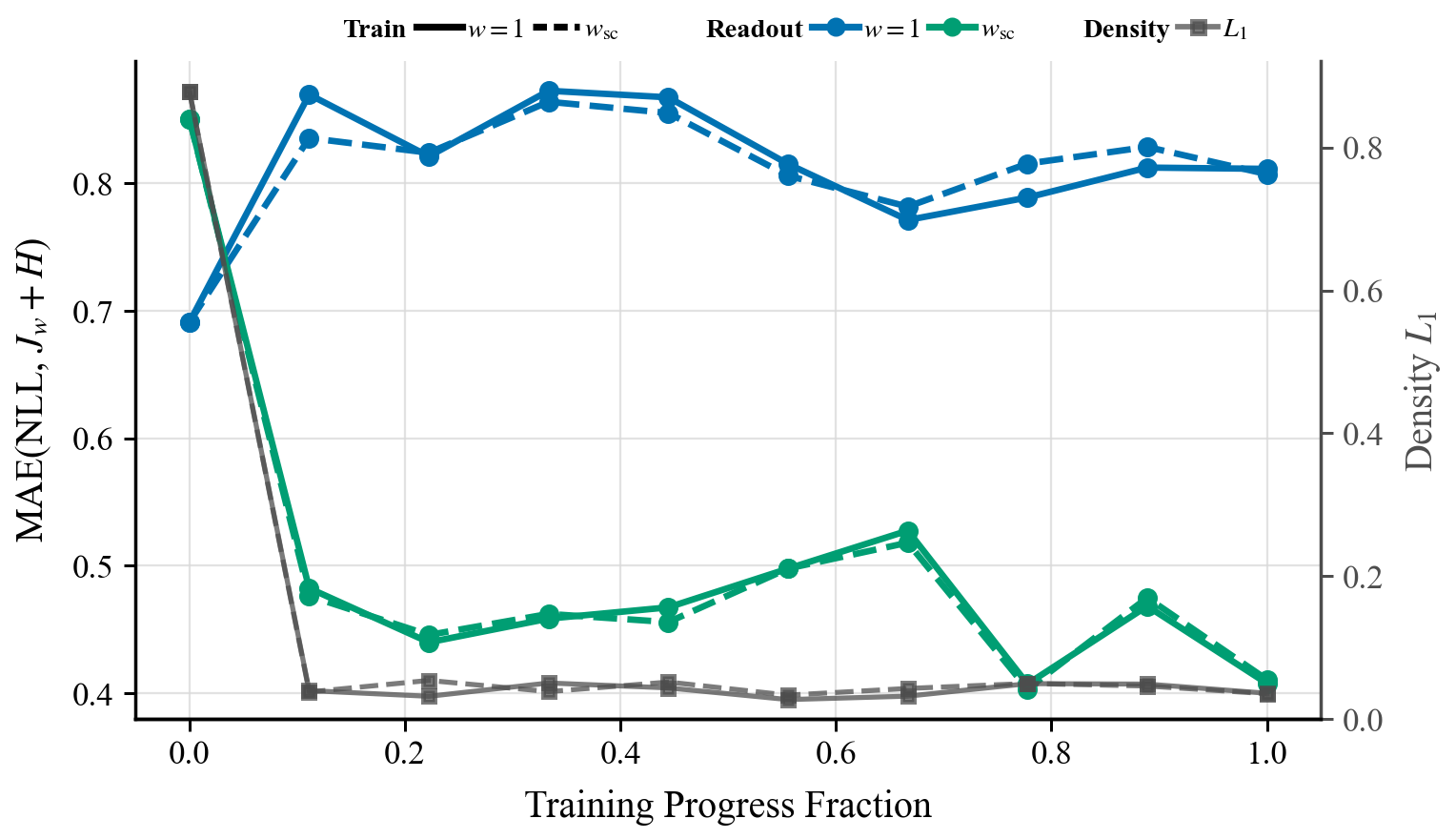}
\end{minipage}
\caption{
Off-policy 1D evaluation summary.  Panel (a) compares the final CFM endpoint
estimate \(\mathcal J_w+H\) with smoothed-identity closure
\(\mathcal J_w+\mathcal G_{\varepsilon,w}+H\) against
\(\ell_\varepsilon\).  Panel
(b) tracks \(\mathrm{MAE}(\mathrm{NLL},\mathcal J_w+H)\) across saved
checkpoints, with density \(L_1\) on the secondary axis.
}
\label{fig:app-1d-fixed-checkpoint-mae}
\end{figure*}

\subsubsection{RQ2: Do the off-policy and on-policy table conclusions hold?}
\textbf{Off-policy results support the off-policy conclusions in \tabref{tab:cfm-nll-answer}.}
\begin{itemize}[leftmargin=*]
    \item \textbf{During optimization:} In \figref{fig:app-1d-fixed-checkpoint-mae} (b), both \(w=1\) and \(w=w_{\mathrm{sc}}\) keep a non-negligible \(\mathrm{MAE}(\mathrm{NLL},\mathcal J_w+H)\) before convergence.
    \item \textbf{After optimization:} In \tabref{tab:app-1d-fixed-high-audit}, the direct CFM-only endpoint MAE is about \(0.80\) for the ordinary readout and \(0.22\) for the score-calibrated readout. Thus ordinary CFM reaches the target-distribution optimum without making its pointwise ordinary-CFM value an NLL, whereas score calibration is substantially closer when the boundary term is controlled.
\end{itemize}

\textbf{On-policy results support the on-policy conclusions in \tabref{tab:cfm-nll-answer}.}
\begin{itemize}[leftmargin=*]
    \item \textbf{During optimization:} \tabref{tab:app-1d-onpolicy-baseline} shows that the on-policy training with CFM-only ratio can achieve high reward with CFM-only ratio as the surrogate.
    \item \textbf{After optimization:} In \tabref{tab:app-1d-onpolicy-baseline}, the source-relative MAE is nonzero, so the CFM-only ratio is not exact on the evaluated samples. The round-old MAE is smaller in these runs, but this association does not prove that parameter closeness guarantees a small relative residual.
\end{itemize}


\begin{table}[ht]
    \centering
    \caption{
    Formal 1D on-policy baseline evaluation.  Ratio MAE is reported with the score-calibrated estimate.  For the rollout sample set \(S\), the reported ratio error is
    \(\mathrm{MAE}(\theta_{\mathrm{ref}})=|S|^{-1}\sum_{x_0\in S}
    |\Delta\ell_{\mathrm{clean}}^{\theta/\theta_{\mathrm{ref}}}(x_0)
    -\widehat\Delta_{\theta,\theta_{\mathrm{ref}}}^{\mathrm{CFM}}(x_0)|\).
    Source MAE sets \(\theta_{\mathrm{ref}}=\theta_{\mathrm{src}}\), the fixed-target source checkpoint; round-old MAE sets \(\theta_{\mathrm{ref}}=\theta_{\mathrm{old}}^{\mathrm{round}}\), the per-round EMA old-policy checkpoint.
    }
    \label{tab:app-1d-onpolicy-baseline}
    \scriptsize
    \setlength{\tabcolsep}{5.0pt}
    \renewcommand{\arraystretch}{1.12}
    \begin{tabular}{lcccc}
    \toprule
    Variant & Final reward \(\uparrow\) & Best reward \(\uparrow\)
    & Source MAE \(\downarrow\) & Round-old MAE \(\downarrow\) \\
    \midrule
    CFM-only ratio & 0.995 & 1.000 & 7.900 & 0.468 \\
    Score-calibrated ratio & 0.998 & 1.000 & 65.269 & 0.102 \\
    \bottomrule
    \end{tabular}
    \end{table}


\subsubsection{RQ3: Why do on-policy CFM-ratio surrogates work?}
RQ2 showed that CFM-ratio GRPO can reach high reward while its CFM-only ratios
remain different from clean likelihood ratios. Here we call a tested
configuration \emph{stable for optimization} when the nonzero ratio error does
not prevent reward improvement. The controlled experiments summarized in
\tabref{tab:app-1d-mechanism-sweeps} and detailed in
\tabref{tab:app-1d-s1-ema-distance-detail}--\tabref{tab:app-1d-s6-ema-kl-beta-detail}
vary the EMA decay, CFM estimate weight, ratio clipping, advantage clipping,
source penalty, and EMA penalty. They measure control--stability associations;
because EMA decay is not a measured policy distance, they do not directly test
a residual-versus-distance theorem.

\begin{table}[!htbp]
\centering
\caption{
Controlled 1D sweep for on-policy CFM-ratio training. Each cell reports ``best final reward; passing configurations / tested configurations'', where a configuration passes when its final sampled reward reaches the pre-specified gate \(0.90\). These fractions are not repeated-seed success probabilities. Each row states the controlled settings explicitly; all non-controlled settings use the baseline values: ratio clip \(10^{-5}\), advantage clip \(a=5\), and zero KL/EMA-KL.
}
\label{tab:app-1d-mechanism-sweeps}
\scriptsize
\setlength{\tabcolsep}{2.6pt}
\renewcommand{\arraystretch}{1.14}
\begin{tabularx}{\textwidth}{
>{\raggedright\arraybackslash}p{0.12\textwidth}
>{\raggedright\arraybackslash}p{0.28\textwidth}
>{\centering\arraybackslash}p{0.095\textwidth}
>{\centering\arraybackslash}p{0.1\textwidth}
>{\centering\arraybackslash}p{0.095\textwidth}
>{\raggedright\arraybackslash}X
}
\toprule
Sweep & Settings
& \(\mathrm{EMA}=0\)
& \(\mathrm{EMA}=0.5\)
& \(\mathrm{EMA}=1\)
& Mechanism conclusion \\
\midrule
S1: EMA distance
& \(w\in\{1,w_{\mathrm{sc}}\}\)
& 0.994; 2/2
& 0.761; 0/2
& 0.275; 0/2
& EMA $=0$ perform best; EMA $>0$ fail. \\
S2: ratio weight
& \(w\in\{1,w_{\mathrm{sc}}\}\)
& 0.994; 2/2
& 0.769; 0/2
& 0.269; 0/2
& Different train weights cannot be helpful in EMA $>0$. \\
S3: ratio clip $\delta$
& \(\delta\in\{10^{-5},0.05,0.2,\mathrm{none}\}\) \(\times\) \(w \in\{1,w_{\mathrm{sc}}\}\)
& 0.995; 7/8
& 0.998; 5/8
& 0.976; 2/8
& The ratio clipping rescues some successful EMA \(>0\) experiments, but the effect is non-monotonic. \\
S4: advantage clip $a$
& \(a\in\{0.5,1,2,5\}\) \(\times\) \(w \in\{1,w_{\mathrm{sc}}\}\)
& 0.996; 8/8
& 0.850; 0/8
& 0.290; 0/8
& Advantage clipping can perform well only with EMA \(=0\). \\
S5: KL beta $\beta$
& \(\beta\in\{0,10^{-4},10^{-3},10^{-2}\}\) \(\times\) \(w \in\{1,w_{\mathrm{sc}}\}\)
& 0.996; 8/8
& 0.800; 0/8
& 0.282; 0/8
& Source-KL regularization can perform well only with EMA \(=0\). \\
S6: EMA-KL beta $\beta_{\mathrm{ema}}$
& \(\beta_{\mathrm{ema}}\in\{0,10^{-4},10^{-3},10^{-2}\}\) \(\times\) \(w \in\{1,w_{\mathrm{sc}}\}\)
& 0.995; 8/8
& 0.881; 0/8
& 0.282; 0/8
& EMA-KL regularization can perform well only with EMA \(=0\). \\
\bottomrule
\end{tabularx}
\end{table}

The following tables report the final-checkpoint sampled reward mean for every individual sweep setting.  All results use seed 13, and bold values meet the pre-specified reward gate of \(0.90\).  ``Ordinary'' denotes the CFM-ratio variant with $w = 1$, and ``score-cal.'' denotes the CFM-ratio variant with $w = w_{sc}$.

\begin{table*}[!htbp]
\centering
\caption{
Detailed S1 results for old-policy EMA distance.  Final reward is reported for each CFM-ratio variant; higher is better.
}
\label{tab:app-1d-s1-ema-distance-detail}
\scriptsize
\setlength{\tabcolsep}{8.0pt}
\renewcommand{\arraystretch}{1.12}
\begin{tabular}{ccc}
\toprule
EMA decay & Ordinary \((w=1)\) reward \(\uparrow\) & Score-cal. \((w=w_{\mathrm{sc}})\) reward \(\uparrow\) \\
\midrule
0   & \textbf{0.994} & \textbf{0.949} \\
0.5 & 0.377 & 0.761 \\
1   & 0.238 & 0.275 \\
\bottomrule
\end{tabular}
\par\vspace{2pt}
\parbox{0.90\textwidth}{\footnotesize\textbf{Conclusion.} Only the EMA \(=0\) have rewards over 0.9 for both variants; performance drops below for EMA \(=0.5\) and EMA \(=1\).}
\end{table*}

\begin{table*}[!htbp]
\centering
\caption{
Detailed S2 results for the CFM ratio weight.  Final reward is reported for each CFM-ratio variant; higher is better.
}
\label{tab:app-1d-s2-ratio-weight-detail}
\scriptsize
\setlength{\tabcolsep}{8.0pt}
\renewcommand{\arraystretch}{1.12}
\begin{tabular}{ccc}
\toprule
EMA decay & Ordinary \((w=1)\) reward \(\uparrow\) & Score-cal. \((w=w_{\mathrm{sc}})\) reward \(\uparrow\) \\
\midrule
0   & \textbf{0.994} & \textbf{0.994} \\
0.5 & 0.352 & 0.769 \\
1   & 0.263 & 0.269 \\
\bottomrule
\end{tabular}
\par\vspace{2pt}
\parbox{0.90\textwidth}{\footnotesize\textbf{Conclusion.} Changing the ratio weight does not repair the failure with large EMA decay rate: both weights have rewards over 0.9 at EMA \(=0\), and neither does at EMA \(=0.5\) or EMA \(=1\).}
\end{table*}

\begin{table*}[!htbp]
\centering
\caption{
Detailed S3 ratio-clip sweep.  Entries are final rewards for ordinary (Ord.) and score-calibrated (Score) CFM-ratio variants; higher is better.
}
\label{tab:app-1d-s3-ratio-clip-detail}
\scriptsize
\setlength{\tabcolsep}{5.0pt}
\renewcommand{\arraystretch}{1.12}
\begin{tabular}{ccccccc}
\toprule
& \multicolumn{2}{c}{EMA \(=0\)}
& \multicolumn{2}{c}{EMA \(=0.5\)}
& \multicolumn{2}{c}{EMA \(=1\)} \\
\cmidrule(lr){2-3}\cmidrule(lr){4-5}\cmidrule(lr){6-7}
Ratio clip \(\delta\) & Ord. \(\uparrow\) & Score \(\uparrow\) & Ord. \(\uparrow\) & Score \(\uparrow\) & Ord. \(\uparrow\) & Score \(\uparrow\) \\
\midrule
\(10^{-5}\) & \textbf{0.994} & \textbf{0.949} & 0.400 & 0.776 & 0.247 & 0.282 \\
0.05          & \textbf{0.989} & \textbf{0.980} & \textbf{0.951} & \textbf{0.976} & 0.284 & 0.283 \\
0.2           & \textbf{0.995} & \textbf{0.971} & \textbf{0.986} & \textbf{0.998} & 0.393 & 0.416 \\
None          & 0.769 & \textbf{0.979} & 0.733 & \textbf{0.984} & \textbf{0.976} & \textbf{0.914} \\
\bottomrule
\end{tabular}
\par\vspace{2pt}
\parbox{0.90\textwidth}{\footnotesize\textbf{Conclusion.} The ratio-clip sweep is the only mechanism sweep with successful EMA \(>0\) settings (\(5/8\) at EMA \(=0.5\) and \(2/8\) at EMA \(=1\)), but its effect is non-monotonic across clipping widths and variants.}
\end{table*}

\begin{table*}[!htbp]
\centering
\caption{
Detailed S4 advantage-clip sweep.  Entries are final rewards for ordinary (Ord., i.e. $w = 1$) and score-calibrated (Score, i.e. $w = w_{sc}$) CFM-ratio variants; higher is better.
}
\label{tab:app-1d-s4-adv-clip-detail}
\scriptsize
\setlength{\tabcolsep}{5.0pt}
\renewcommand{\arraystretch}{1.12}
\begin{tabular}{ccccccc}
\toprule
& \multicolumn{2}{c}{EMA \(=0\)}
& \multicolumn{2}{c}{EMA \(=0.5\)}
& \multicolumn{2}{c}{EMA \(=1\)} \\
\cmidrule(lr){2-3}\cmidrule(lr){4-5}\cmidrule(lr){6-7}
Adv. clip \(a\) & Ord. \(\uparrow\) & Score \(\uparrow\) & Ord. \(\uparrow\) & Score \(\uparrow\) & Ord. \(\uparrow\) & Score \(\uparrow\) \\
\midrule
0.5 & \textbf{0.990} & \textbf{0.993} & 0.380 & 0.788 & 0.252 & 0.268 \\
1   & \textbf{0.987} & \textbf{0.995} & 0.364 & 0.850 & 0.263 & 0.260 \\
2   & \textbf{0.992} & \textbf{0.994} & 0.390 & 0.843 & 0.243 & 0.262 \\
5   & \textbf{0.988} & \textbf{0.996} & 0.360 & 0.744 & 0.290 & 0.267 \\
\bottomrule
\end{tabular}
\par\vspace{2pt}
\parbox{0.90\textwidth}{\footnotesize\textbf{Conclusion.} Advantage clipping succeeds for all eight EMA \(=0\) settings but does not produce a successful EMA \(=0.5\) or EMA \(=1\) setting.}
\end{table*}

\begin{table*}[!htbp]
\centering
\caption{
Detailed S5 source-KL sweep.  Entries are final rewards for ordinary (Ord., i.e. $w = 1$) and score-calibrated (Score, $w = w_{sc}$) CFM-ratio variants; higher is better.
}
\label{tab:app-1d-s5-kl-beta-detail}
\scriptsize
\setlength{\tabcolsep}{5.0pt}
\renewcommand{\arraystretch}{1.12}
\begin{tabular}{ccccccc}
\toprule
& \multicolumn{2}{c}{EMA \(=0\)}
& \multicolumn{2}{c}{EMA \(=0.5\)}
& \multicolumn{2}{c}{EMA \(=1\)} \\
\cmidrule(lr){2-3}\cmidrule(lr){4-5}\cmidrule(lr){6-7}
Source-KL \(\beta\) & Ord. \(\uparrow\) & Score \(\uparrow\) & Ord. \(\uparrow\) & Score \(\uparrow\) & Ord. \(\uparrow\) & Score \(\uparrow\) \\
\midrule
0           & \textbf{0.994} & \textbf{0.949} & 0.400 & 0.776 & 0.247 & 0.282 \\
\(10^{-4}\) & \textbf{0.991} & \textbf{0.996} & 0.364 & 0.632 & 0.262 & 0.250 \\
\(10^{-3}\) & \textbf{0.985} & \textbf{0.995} & 0.386 & 0.800 & 0.235 & 0.252 \\
\(10^{-2}\) & \textbf{0.965} & \textbf{0.986} & 0.357 & 0.535 & 0.250 & 0.252 \\
\bottomrule
\end{tabular}
\par\vspace{2pt}
\parbox{0.90\textwidth}{\footnotesize\textbf{Conclusion.} Source-KL regularization succeeds for all eight EMA \(=0\) settings but cannot by itself rescue experiments with EMA \(=0.5\) and EMA \(=1\).}
\end{table*}

\begin{table*}[!htbp]
\centering
\caption{
Detailed S6 EMA-KL sweep.  Entries are final rewards for ordinary (Ord., i.e. $w = 1$) and score-calibrated (Score, i.e. $w = w_{sc}$) CFM-ratio variants; higher is better.
}
\label{tab:app-1d-s6-ema-kl-beta-detail}
\scriptsize
\setlength{\tabcolsep}{5.0pt}
\renewcommand{\arraystretch}{1.12}
\begin{tabular}{ccccccc}
\toprule
& \multicolumn{2}{c}{EMA \(=0\)}
& \multicolumn{2}{c}{EMA \(=0.5\)}
& \multicolumn{2}{c}{EMA \(=1\)} \\
\cmidrule(lr){2-3}\cmidrule(lr){4-5}\cmidrule(lr){6-7}
EMA-KL \(\beta_{\mathrm{ema}}\) & Ord. \(\uparrow\) & Score \(\uparrow\) & Ord. \(\uparrow\) & Score \(\uparrow\) & Ord. \(\uparrow\) & Score \(\uparrow\) \\
\midrule
0           & \textbf{0.994} & \textbf{0.949} & 0.400 & 0.776 & 0.247 & 0.282 \\
\(10^{-4}\) & \textbf{0.991} & \textbf{0.995} & 0.355 & 0.777 & 0.262 & 0.250 \\
\(10^{-3}\) & \textbf{0.989} & \textbf{0.995} & 0.392 & 0.881 & 0.235 & 0.252 \\
\(10^{-2}\) & \textbf{0.980} & \textbf{0.987} & 0.349 & 0.716 & 0.250 & 0.252 \\
\bottomrule
\end{tabular}
\par\vspace{2pt}
\parbox{0.90\textwidth}{\footnotesize\textbf{Conclusion.} EMA-KL regularization also succeeds only with EMA $= 0$ and does not independently repair experiments with EMA $=0.5$ and EMA $=1$.}
\end{table*}

Together, \tabref{tab:app-1d-mechanism-sweeps} and the six detailed tables
\tabref{tab:app-1d-mechanism-sweeps}--\tabref{tab:app-1d-s6-ema-kl-beta-detail}
show an EMA-dependent ordering in the tested configurations: EMA \(=0\) is
most consistently associated with passing the reward gate, while the ratio-clip
sweep contains the most passing configurations beyond EMA \(=0\). This is an
empirical association at one seed, not a causal or local-accuracy result.

\subsection{2D Toy Experiments}
\label{app:2d-toy-experiments}

The 2D toy experiments use the same audit logic as the 1D experiments, but
stress the claims under known distributions with different visible geometries.
They answer three research questions:
\begin{itemize}
    \item[(RQ1)] Whether the pointwise decomposition still closes when \(x_0\in\mathbb R^2\);
    \item[(RQ2)] Whether the off-policy and on-policy conclusions in \tabref{tab:cfm-nll-answer} remain empirically supported;
    \item[(RQ3)] Whether the effective mechanisms in 1D settings can also be effective in 2D settings.
\end{itemize}

\subsubsection{Experiment Setup}
The 2D targets are a five-component diagonal Gaussian mixture, a banana distribution, and a two-moons distribution.  The GMM target is
\[
p_{\mathrm{data}}(x)
=\sum_{k=1}^5 \pi_k\,\mathcal N(x;\mu_k,\mathrm{diag}(\sigma_k^2)),
\]
with
\[
\begin{gathered}
\pi=(0.20,0.18,0.24,0.20,0.18),\\
\mu=\{(-2.0,-1.2),(-1.8,1.3),(0.2,0.0),(2.0,-0.8),(2.1,1.2)\},\\
\sigma=\{(0.34,0.42),(0.38,0.34),(0.55,0.48),(0.36,0.44),(0.32,0.38)\}.
\end{gathered}
\]
The banana target samples \(u\sim\mathcal N(0,1)\) and \(\eta\sim\mathcal N(0,0.35^2)\), then sets \((x_1,x_2)=(u,0.35(u^2-1)+\eta)\).  The two-moons target samples \(\theta\sim\mathrm{Unif}[0,\pi]\), chooses one of the two arcs uniformly, uses the arc centers \((\cos\theta,\sin\theta)\) and \((1-\cos\theta,-\sin\theta-0.45)\), subtracts the center \((0.5,-0.225)\), and adds isotropic Gaussian noise with standard deviation \(0.07\).  The on-policy reward is the shared 2D bump
\[
R(x)=\exp\!\left(-\frac{\|x-c\|_2^2}{2\cdot 0.65^2}\right),
\]
with \(c=(2.1,1.2)\) for GMM, \(c=(1.2,0.154)\) for banana, and
\(c=(-0.5,1.05)\) for two moons.

All 2D CFM runs use the linear Gaussian path
\(X_t=(1-t)x_0+t z\), \(z\sim\mathcal N(0,I_2)\), with
\(\varepsilon=0.10\). \tabref{tab:app-2d-rq12-fixed-setup}
and \tabref{tab:app-2d-rq12-onpolicy-setup} collect the off-policy and on-policy configurations used for RQ1 and RQ2.

\begin{table}[H]
\centering
\caption{\textbf{2D off-policy settings for RQ1 and RQ2.}  The formal audit covers the saved-checkpoint trajectory, whereas the high-budget audit evaluates only the final checkpoint.}
\label{tab:app-2d-rq12-fixed-setup}
\scriptsize
\setlength{\tabcolsep}{3.4pt}
\renewcommand{\arraystretch}{1.06}
\begin{tabularx}{\textwidth}{
>{\raggedright\arraybackslash}p{0.28\textwidth}
>{\centering\arraybackslash}p{0.19\textwidth}
>{\centering\arraybackslash}p{0.21\textwidth}
>{\centering\arraybackslash}X
}
\toprule
\textbf{SETTING}
& \textbf{TRAINING}
& \textbf{FORMAL AUDIT}
& \textbf{HIGH AUDIT} \\
\midrule
\multicolumn{4}{c}{\textbf{Protocol}} \\
\midrule
Targets & \multicolumn{3}{c}{GMM, banana, and two moons} \\
Seeds & \multicolumn{3}{c}{13, 17, and 23} \\
Path / \(\varepsilon\) & \multicolumn{3}{c}{Linear Gaussian / 0.10} \\
Train weights & \(\{1,w_{\mathrm{sc}}\}\) & -- & -- \\
Readout weights & -- & \(\{1,w_{\mathrm{sc}}\}\) & \(\{1,w_{\mathrm{sc}}\}\) \\
Checkpoint scope & 10 fractional & All saved & Final only \\
\midrule
\multicolumn{4}{c}{\textbf{Optimization}} \\
\midrule
Hidden width & 64 & -- & -- \\
Optimizer & Adam & -- & -- \\
Learning rate & \(10^{-3}\) & -- & -- \\
Batch size & 2,048 & -- & -- \\
Optimization steps & 500,000 & -- & -- \\
\midrule
\multicolumn{4}{c}{\textbf{Audit Configuration}} \\
\midrule
Evaluation samples & -- & 192 & 256 \\
Monte Carlo samples / point & -- & 32 & 64 \\
Time-grid points & -- & 21 & 33 \\
ODE / sampling steps & -- & 160 / 160 & 200 / 200 \\
Generated samples & -- & 500,000 & 500,000 \\
Density grid & -- & \(160\times160\) & \(192\times192\) \\
\bottomrule
\end{tabularx}
\end{table}

\begin{table}[H]
\centering
\caption{\textbf{2D on-policy settings for RQ1 and RQ2.}  Ordinary and score-calibrated denote the CFM-ratio readout used by GRPO.}
\label{tab:app-2d-rq12-onpolicy-setup}
\scriptsize
\setlength{\tabcolsep}{4.0pt}
\renewcommand{\arraystretch}{1.06}
\begin{tabularx}{\textwidth}{
>{\raggedright\arraybackslash}p{0.34\textwidth}
>{\centering\arraybackslash}X
>{\centering\arraybackslash}X
}
\toprule
\textbf{SETTING}
& \textbf{ORDINARY}
& \textbf{SCORE-CALIBRATED} \\
\midrule
\multicolumn{3}{c}{\textbf{Protocol}} \\
\midrule
Targets / seed & \multicolumn{2}{c}{GMM, banana, and two moons / 13} \\
Source checkpoint & \multicolumn{2}{c}{Corresponding fixed-target \(w=1\), final} \\
CFM-ratio weight & \(w=1\) & \(w=w_{\mathrm{sc}}\) \\
Audit readout weights & \multicolumn{2}{c}{\(\{1,w_{\mathrm{sc}}\}\)} \\
\midrule
\multicolumn{3}{c}{\textbf{On-policy Optimization}} \\
\midrule
Hidden width & \multicolumn{2}{c}{64} \\
Rounds & \multicolumn{2}{c}{100} \\
Rollout samples / group size & \multicolumn{2}{c}{256 / 8} \\
Update steps / learning rate & \multicolumn{2}{c}{4 / \(5\times10^{-4}\)} \\
\midrule
\multicolumn{3}{c}{\textbf{Stabilization}} \\
\midrule
Advantage / ratio clip & \multicolumn{2}{c}{5 / \(10^{-5}\)} \\
Max. log-ratio / gradient norm & \multicolumn{2}{c}{2 / 1} \\
EMA decay & \multicolumn{2}{c}{0} \\
Source / EMA KL beta & \multicolumn{2}{c}{\(10^{-2}\) / 0} \\
\midrule
\multicolumn{3}{c}{\textbf{Audit Configuration}} \\
\midrule
Checkpoints / audit samples & \multicolumn{2}{c}{10 fractional / 256} \\
MC samples / time-grid points & \multicolumn{2}{c}{32 / 21} \\
ODE / sampling steps & \multicolumn{2}{c}{160 / 160} \\
Ratio references & \multicolumn{2}{c}{Source and round-old} \\
Reward samples / reward gate & \multicolumn{2}{c}{8,192 / 0.90} \\

\bottomrule
\end{tabularx}
\end{table}

The RQ3 mechanism sweeps use the off-policy checkpoints with train weight $w=1$ and reward gate 0.90, and separately vary EMA distance, estimate weight, ratio clipping, advantage clipping, source KL, and EMA KL; their controlled settings are reported separately in the RQ3 tables.

\subsubsection{RQ1: Does the smoothed identity close numerically in 2D?}
\textbf{Off-policy 2D results audit the numerical closure of the smoothed identity.}
\tabref{tab:app-2d-fixed-high-audit} shows the CFM-only endpoint error and
the separate smoothed-identity closure error.  The latter is
\(\mathrm{MAE}(\ell_\varepsilon,\mathcal J_w+\mathcal G+H)\); it is not an
independent empirical proof of the theorem or of the boundary-term limit.

\begin{table}[ht]
\centering
\caption{
High-budget off-policy 2D evaluation.  Entries are means over seeds 13, 17, and 23 and over the two train weights.  Lower is better.  Density \(L_1\) is the grid-normalized \(L_1\) distance between the CNF endpoint density and the target density.  The \(J+H\) columns report endpoint-estimation error, while the closure columns report \(\mathrm{MAE}(\ell_\varepsilon,\mathcal J_w+\mathcal G+H)\).
}
\label{tab:app-2d-fixed-high-audit}
\scriptsize
\setlength{\tabcolsep}{4.0pt}
\renewcommand{\arraystretch}{1.12}
\begin{tabular}{lccccc}
\toprule
Dataset
& Density \(L_1\downarrow\)
& \multicolumn{2}{c}{\(\mathrm{MAE}(\mathrm{NLL},\mathcal J_w+H)\downarrow\)}
& \multicolumn{2}{c}{\(\mathrm{MAE}(\ell_\varepsilon,\mathcal J_w+\mathcal G+H)\downarrow\)} \\
\cmidrule(lr){3-4}\cmidrule(lr){5-6}
& & \(w=1\) Estimate & \(w=w_{\mathrm{sc}}\) Estimate
& \(w=1\) Estimate & \(w=w_{\mathrm{sc}}\) Estimate \\
\midrule
GMM & 0.076 & 1.001 & 0.411 & 0.397 & 0.387 \\
Banana & 0.043 & 1.587 & 0.342 & 0.323 & 0.312 \\
Two moons & 0.326 & 0.580 & 0.895 & 0.311 & 0.302 \\
\bottomrule
\end{tabular}
\end{table}

\begin{figure*}[ht]
\centering
\includegraphics[width=0.99\textwidth]{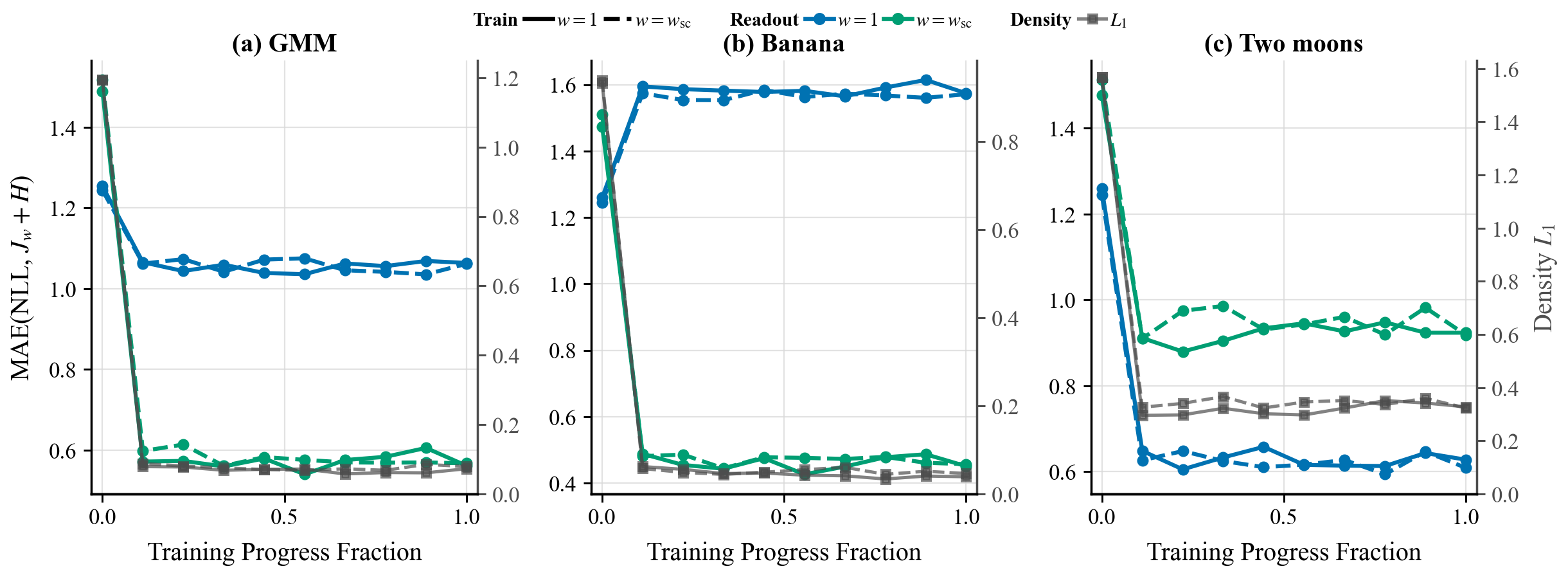}
\caption{
Off-policy 2D checkpoint trajectories, averaged over seeds 13, 17, and 23. The left axis reports \(\mathrm{MAE}(\mathrm{NLL},\mathcal J_w+H)\); color denotes the readout weight and line style denotes the fixed-target training weight.  Gray square curves report density \(L_1\) on the secondary axes, which are scaled separately for each target family.
}
\label{fig:app-2d-fixed-checkpoint-mae}
\end{figure*}

\textbf{On-policy 2D results provide the same numerical audit.}
For on-policy training, the model is fine-tuned from the pretrained off-policy
checkpoint trained with $w=1$. \tabref{tab:app-2d-onpolicy-decomposition}
reports both endpoint-estimation error and smoothed-identity closure error.

\begin{table*}[ht]
\centering
\caption{
Final 2D numerical audit for the on-policy experiments. $\mathcal J_w + H$
denotes the CFM-only endpoint estimate. The closure columns report
\(\mathrm{MAE}(\ell_\varepsilon,\mathcal J_w+\mathcal G+H)\).
}
\label{tab:app-2d-onpolicy-decomposition}
\scriptsize
\setlength{\tabcolsep}{3.2pt}
\renewcommand{\arraystretch}{1.12}
\begin{tabular}{llcccc}
\toprule
Dataset & Train weight
& \multicolumn{2}{c}{\(\mathrm{MAE}(\mathrm{NLL},\mathcal J_w+H)\downarrow\)}
& \multicolumn{2}{c}{\(\mathrm{MAE}(\ell_\varepsilon,\mathcal J_w+\mathcal G+H)\downarrow\)} \\
\cmidrule(lr){3-4}\cmidrule(lr){5-6}
& & \(w=1\) Estimate & \(w=w_{\mathrm{sc}}\) Estimate
& \(w=1\) Estimate & \(w=w_{\mathrm{sc}}\) Estimate \\
\midrule
GMM & $w = 1$
& 2.069 & 2.057 & 0.250 & 0.231\\
GMM & $w = w_{\mathrm{sc}}$
& 49.054 & 206.343 & 0.841 & 0.836 \\
Banana & $w = 1$
& 1.374 & 1.415 & 0.214 & 0.230 \\
Banana & $w = w_{\mathrm{sc}}$
& 2.331 & 2.396 & 0.378 & 0.389 \\
Two moons & $w = 1$
& 3.132 & 3.183 & 0.362 & 0.349 \\
Two moons & $w = w_{\mathrm{sc}}$
& 1.909 & 2.347 & 0.286 & 0.257 \\
\bottomrule
\end{tabular}
\end{table*}

\textbf{Remark:} Without the source KL regularization, some on-policy runs become non-finite during training, so their numerical closure audits are also non-finite.

\subsubsection{RQ2: Do the table conclusions hold beyond 1D?}
\textbf{The off-policy conclusions in \tabref{tab:cfm-nll-answer} hold for 2D settings.}
\begin{itemize}[leftmargin=*]
    \item \textbf{During optimization:} In \figref{fig:app-2d-fixed-checkpoint-mae}, all three target families retain a non-negligible \(\mathrm{MAE}(\mathrm{NLL},\mathcal J_w+H)\) under both training and readout weights before and at convergence.
    \item \textbf{After optimization:} In \tabref{tab:app-2d-fixed-high-audit}, the direct ordinary-CFM endpoint MAE is \(1.001\) on GMM and \(1.587\) on banana, while the score-calibrated readout reduces it to \(0.411\) and \(0.342\). The two-moons result shows that score calibration alone is insufficient when the finite-\(\varepsilon\) boundary term is large.
\end{itemize}

\textbf{The on-policy conclusions in \tabref{tab:cfm-nll-answer} hold for 2D settings.}
\begin{itemize}[leftmargin=*]
    \item \textbf{During optimization:} \tabref{tab:app-2d-onpolicy-baseline} shows that the on-policy training with CFM-only ratio can achieve high reward with CFM-only ratio as the surrogate.
    \item \textbf{After optimization:} In \tabref{tab:app-2d-onpolicy-baseline}, the source-relative MAEs are nonzero, so the CFM-only ratios are not exact on the evaluated samples. The round-old MAEs are smaller in these runs, but this association is not a theorem that reference closeness forces residual cancellation.
\end{itemize}

\begin{table*}[ht]
\centering
\caption{
Verification of the on-policy and after-optimization conclusions in \tabref{tab:cfm-nll-answer} for 2D settings. "Oridinary" means $w = 1$, "Score-calibrated" means $w = w_{\mathrm{sc}}$. "Source" means the ratio is estimated on the source, "Round-old" means the ratio is estimated on the old policy.
}
\label{tab:app-2d-onpolicy-baseline}
\scriptsize
\setlength{\tabcolsep}{4.0pt}
\renewcommand{\arraystretch}{1.12}
\begin{tabular}{lcccccc}
\toprule
Dataset
& Ordinary reward \(\uparrow\)
& Score-calibrated reward \(\uparrow\)
& \multicolumn{2}{c}{Ordinary ratio MAE \(\downarrow\)}
& \multicolumn{2}{c}{Score-calibrated ratio MAE \(\downarrow\)} \\
\cmidrule(lr){4-5}\cmidrule(lr){6-7}
& & & Source & Round-old & Source & Round-old \\
\midrule
GMM & 0.981 & 0.713 & 1.570 & 0.046 & 138.296 & 1.209 \\
Banana & 0.967 & 0.986 & 2.068 & 0.059 & 2.444 & 0.082 \\
Two moons & 0.979 & 0.934 & 12.197 & 0.666 & 3.015 & 0.054 \\
\bottomrule
\end{tabular}
\end{table*}

\subsubsection{RQ3: Which controls are associated with stable 2D CFM-ratio updates?}
As in 1D, a tested configuration passes when its final sampled reward reaches
at least \(0.90\). \tabref{tab:app-2d-mechanism-sweeps} shows that
\textbf{EMA $=0$ is most consistently associated with passing the reward
gate}. The ratio-clip sweep is the only controlled sweep with passing
EMA \(>0\) configurations, but its effect is non-monotonic. These
single-seed configuration fractions do not establish causality or a local
ratio-error guarantee.

\begin{table}[!htbp]
\centering
\caption{Controlled 2D sweep for on-policy CFM-ratio training. Each cell reports ``best final reward; passing configurations / tested configurations'' aggregated over the three 2D datasets at seed 13. These fractions are not repeated-seed success probabilities.}
\label{tab:app-2d-mechanism-sweeps}
\scriptsize
\setlength{\tabcolsep}{2.0pt}
\renewcommand{\arraystretch}{1.14}
\begin{tabularx}{\textwidth}{
>{\raggedright\arraybackslash}p{0.12\textwidth}
>{\raggedright\arraybackslash}p{0.2\textwidth}
>{\centering\arraybackslash}p{0.095\textwidth}
>{\centering\arraybackslash}p{0.1\textwidth}
>{\centering\arraybackslash}p{0.1\textwidth}
>{\centering\arraybackslash}p{0.095\textwidth}
>{\raggedright\arraybackslash}X
}
\toprule
Sweep & Settings
& \(\mathrm{EMA}=0\)
& \(\mathrm{EMA}=0.5\)
& \(\mathrm{EMA}=0.9\)
& \(\mathrm{EMA}=1\)
& Mechanism conclusion \\
\midrule
S1: EMA distance
& \(w\in\{1,w_{\mathrm{sc}}\}\)
& 0.984; 4/6
& 0.388; 0/6
& 0.327; 0/6
& 0.289; 0/6
& Only EMA \(=0\) reaches the reward gate 0.9; small EMA decay rate is the primary condition. \\
S2: ratio weight
& \(w\in\{1,w_{\mathrm{sc}}\}\)
& 0.984; 5/6
& 0.381; 0/6
& 0.326; 0/6
& 0.294; 0/6
& Train weight choice changes one EMA-0 outcome but does not repair experiments with large EMA decay rate. \\
S3: ratio clip $\delta$
& \(\delta\in\{10^{-5},0.05,0.2,\mathrm{none}\}\) \(\times\) \(w \in\{1,w_{\mathrm{sc}}\}\)
& 0.999; 21/24
& 0.998; 17/24
& 0.998; 11/24
& 0.985; 4/24
& This is the only sweep with EMA \(>0\) successes, but the effect is non-monotonic and EMA-1 successes use no clip. \\
S4: advantage clip $a$
& \(a\in\{0.5,1,2,5\}\) \(\times\) \(w \in\{1,w_{\mathrm{sc}}\}\)
& 0.985; 20/24
& 0.410; 0/24
& 0.328; 0/24
& 0.295; 0/24
& Advantage clipping succeeds only with EMA \(=0\) and cannot rescue experiments with large EMA decay rate. \\
S5: source KL $\beta$
& \(\beta\in\{0,10^{-4},10^{-3},10^{-2}\}\) \(\times\) \(w \in\{1,w_{\mathrm{sc}}\}\)
& 0.993; 17/24
& 0.468; 0/24
& 0.346; 0/24
& 0.302; 0/24
& Positive source KL penalties avoid the three \(\beta=0\) non-finite failures, but do not rescue experiments with large EMA decay rate. \\
S6: EMA KL $\beta_{\mathrm{old}}$
& \(\beta_{\mathrm{old}}\in\{0,10^{-4},10^{-3},10^{-2}\}\) \(\times\) \(w \in\{1,w_{\mathrm{sc}}\}\)
& 0.985; 19/24
& 0.408; 0/24
& 0.328; 0/24
& 0.297; 0/24
& EMA-KL regularization also succeeds only with EMA \(=0\) in this sweep. \\
\bottomrule
\end{tabularx}
\end{table}

The following tables report the final-checkpoint sampled reward mean for every individual sweep setting.  All results use seed 13.  Each paired entry is ``ordinary / score-calibrated'' training weight. Bold values means it meets the final reward gate of \(0.90\). \textsc{NF} denotes a run with status \texttt{failed\_nonfinite}, rather than a measured zero reward.

\begin{table}[H]
\centering
\caption{
Detailed S1 old-policy EMA-decay sweep. Each entry reports ordinary / score-calibrated train weight final reward; higher is better. EMA decay specifies the update rule and is not a measured policy distance.
}
\label{tab:app-2d-s1-ema-distance-detail}
\scriptsize
\setlength{\tabcolsep}{7.0pt}
\renewcommand{\arraystretch}{1.12}
\begin{tabular}{lcccc}
\toprule
Dataset & EMA \(=0\) & EMA \(=0.5\) & EMA \(=0.9\) & EMA \(=1\) \\
\midrule
GMM & \textbf{0.982} / 0.804 & 0.229 / 0.292 & 0.187 / 0.211 & 0.161 / 0.170 \\
Banana & \textbf{0.970} / \textbf{0.984} & 0.350 / 0.359 & 0.287 / 0.300 & 0.265 / 0.271 \\
Two moons & \textbf{0.983} / 0.899 & 0.349 / 0.388 & 0.305 / 0.327 & 0.277 / 0.289 \\
\bottomrule
\end{tabular}
\par\vspace{2pt}
\parbox{0.94\textwidth}{\footnotesize\textbf{Conclusion.} Only the EMA \(=0\) produces successful settings (\(4/6\)); none of the EMA \(=0.5,0.9,1\) settings reaches the reward gate 0.9.}
\end{table}

\begin{table}[H]
\centering
\caption{
Detailed S2 ratio-weight sweep.  Each entry reports ordinary / score-calibrated train weight final reward; higher is better.
}
\label{tab:app-2d-s2-ratio-weight-detail}
\scriptsize
\setlength{\tabcolsep}{7.0pt}
\renewcommand{\arraystretch}{1.12}
\begin{tabular}{lcccc}
\toprule
Dataset & EMA \(=0\) & EMA \(=0.5\) & EMA \(=0.9\) & EMA \(=1\) \\
\midrule
GMM & \textbf{0.982} / 0.779 & 0.238 / 0.288 & 0.182 / 0.195 & 0.168 / 0.164 \\
Banana & \textbf{0.970} / \textbf{0.984} & 0.348 / 0.366 & 0.285 / 0.299 & 0.267 / 0.262 \\
Two moons & \textbf{0.983} / \textbf{0.941} & 0.371 / 0.381 & 0.307 / 0.326 & 0.283 / 0.294 \\
\bottomrule
\end{tabular}
\par\vspace{2pt}
\parbox{0.94\textwidth}{\footnotesize\textbf{Conclusion.} The score-calibrated weight changes one EMA-0 outcome, giving \(5/6\) successes, but neither weight produces a success with large EMA decay rate.}
\end{table}

\begin{table}[H]
\centering
\caption{
Detailed S3 ratio-clip sweep.  Each entry reports ordinary / score-calibrated train weight final reward; higher is better.
}
\label{tab:app-2d-s3-ratio-clip-detail}
\scriptsize
\setlength{\tabcolsep}{5.0pt}
\renewcommand{\arraystretch}{1.10}
\begin{tabular}{llcccc}
\toprule
Dataset & Ratio clip \(\delta\) & EMA \(=0\) & EMA \(=0.5\) & EMA \(=0.9\) & EMA \(=1\) \\
\midrule
\multirow{4}{*}{GMM} & \(10^{-5}\) & \textbf{0.982} / 0.804 & 0.224 / 0.284 & 0.197 / 0.202 & 0.168 / 0.165 \\
& 0.05 & \textbf{0.995} / \textbf{0.945} & \textbf{0.985} / \textbf{0.976} & 0.339 / 0.434 & 0.187 / 0.194 \\
& 0.2 & \textbf{0.995} / \textbf{0.948} & \textbf{0.997} / \textbf{0.976} & 0.882 / \textbf{0.929} & 0.254 / 0.267 \\
& None & \textbf{0.995} / \textbf{0.997} & \textbf{0.997} / \textbf{0.998} & \textbf{0.995} / \textbf{0.972} & 0.712 / 0.861 \\
\midrule
\multirow{4}{*}{Banana} & \(10^{-5}\) & \textbf{0.970} / \textbf{0.984} & 0.346 / 0.358 & 0.292 / 0.291 & 0.269 / 0.270 \\
& 0.05 & \textbf{0.995} / \textbf{0.997} & \textbf{0.991} / \textbf{0.992} & 0.726 / 0.636 & 0.316 / 0.317 \\
& 0.2 & \textbf{0.997} / \textbf{0.999} & \textbf{0.996} / \textbf{0.998} & \textbf{0.987} / \textbf{0.973} & 0.479 / 0.405 \\
& None & \textbf{0.997} / \textbf{0.998} & \textbf{0.993} / \textbf{0.998} & \textbf{0.997} / \textbf{0.995} & \textbf{0.985} / \textbf{0.984} \\
\midrule
\multirow{4}{*}{Two moons} & \(10^{-5}\) & \textbf{0.983} / 0.899 & 0.369 / 0.408 & 0.307 / 0.320 & 0.288 / 0.292 \\
& 0.05 & \textbf{0.994} / \textbf{0.991} & \textbf{0.994} / \textbf{0.989} & 0.513 / 0.588 & 0.299 / 0.314 \\
& 0.2 & \textbf{0.997} / \textbf{0.996} & \textbf{0.998} / \textbf{0.987} & \textbf{0.983} / \textbf{0.938} & 0.370 / 0.369 \\
& None & 0.567 / \textbf{0.990} & 0.784 / \textbf{0.995} & \textbf{0.998} / \textbf{0.989} & \textbf{0.971} / \textbf{0.912} \\
\bottomrule
\end{tabular}
\par\vspace{2pt}
\parbox{0.94\textwidth}{\footnotesize\textbf{Conclusion.} S3 is the only sweep with EMA \(>0\) successes, but the effect is not monotonic in clip width.  At EMA \(=0.9\), \(\delta=0.2\) and no clipping pass \(5/6\) and \(6/6\) settings, respectively; at EMA \(=1\), all four successes occur with no clipping.}
\end{table}

\begin{table}[H]
\centering
\caption{
Detailed S4 advantage-clip sweep.  Each entry reports ordinary / score-calibrated train weight final reward; higher is better.
}
\label{tab:app-2d-s4-adv-clip-detail}
\scriptsize
\setlength{\tabcolsep}{5.0pt}
\renewcommand{\arraystretch}{1.10}
\begin{tabular}{llcccc}
\toprule
Dataset & Advantage clip \(a\) & EMA \(=0\) & EMA \(=0.5\) & EMA \(=0.9\) & EMA \(=1\) \\
\midrule
\multirow{4}{*}{GMM} & 0.5 & \textbf{0.962} / 0.806 & 0.238 / 0.278 & 0.200 / 0.195 & 0.165 / 0.165 \\
& 1 & \textbf{0.976} / 0.766 & 0.234 / 0.275 & 0.189 / 0.204 & 0.163 / 0.166 \\
& 2 & \textbf{0.981} / 0.764 & 0.237 / 0.273 & 0.191 / 0.203 & 0.167 / 0.169 \\
& 5 & \textbf{0.981} / 0.754 & 0.215 / 0.285 & 0.181 / 0.217 & 0.165 / 0.165 \\
\midrule
\multirow{4}{*}{Banana} & 0.5 & \textbf{0.939} / \textbf{0.981} & 0.346 / 0.336 & 0.292 / 0.282 & 0.268 / 0.268 \\
& 1 & \textbf{0.959} / \textbf{0.983} & 0.342 / 0.351 & 0.283 / 0.295 & 0.266 / 0.268 \\
& 2 & \textbf{0.970} / \textbf{0.983} & 0.339 / 0.359 & 0.291 / 0.298 & 0.267 / 0.269 \\
& 5 & \textbf{0.972} / \textbf{0.985} & 0.339 / 0.359 & 0.282 / 0.301 & 0.269 / 0.267 \\
\midrule
\multirow{4}{*}{Two moons} & 0.5 & \textbf{0.959} / \textbf{0.910} & 0.347 / 0.388 & 0.300 / 0.312 & 0.287 / 0.286 \\
& 1 & \textbf{0.984} / \textbf{0.935} & 0.378 / 0.387 & 0.309 / 0.326 & 0.284 / 0.295 \\
& 2 & \textbf{0.983} / \textbf{0.922} & 0.380 / 0.410 & 0.317 / 0.318 & 0.285 / 0.288 \\
& 5 & \textbf{0.978} / \textbf{0.937} & 0.373 / 0.383 & 0.318 / 0.328 & 0.281 / 0.288 \\
\bottomrule
\end{tabular}
\par\vspace{2pt}
\parbox{0.94\textwidth}{\footnotesize\textbf{Conclusion.} Advantage clipping produces \(20/24\) successes at EMA \(=0\), but no setting reaches the reward gate at EMA \(=0.5,0.9,1\).}
\end{table}

\begin{table}[H]
\centering
\caption{
Detailed S5 source KL regularization sweep. Each entry reports ordinary / score-calibrated train weight final reward; higher is better.
}
\label{tab:app-2d-s5-source-vf-detail}
\scriptsize
\setlength{\tabcolsep}{5.0pt}
\renewcommand{\arraystretch}{1.10}
\begin{tabular}{llcccc}
\toprule
Dataset & Source VF \(\beta\) & EMA \(=0\) & EMA \(=0.5\) & EMA \(=0.9\) & EMA \(=1\) \\
\midrule
\multirow{4}{*}{GMM} & 0 & \textbf{0.986} / \textsc{NF} & 0.219 / 0.311 & 0.198 / 0.212 & 0.168 / 0.168 \\
& \(10^{-4}\) & \textbf{0.986} / 0.793 & 0.215 / 0.322 & 0.192 / 0.216 & 0.164 / 0.169 \\
& \(10^{-3}\) & \textbf{0.986} / 0.779 & 0.235 / 0.290 & 0.195 / 0.214 & 0.169 / 0.174 \\
& \(10^{-2}\) & \textbf{0.981} / 0.754 & 0.215 / 0.285 & 0.181 / 0.217 & 0.165 / 0.165 \\
\midrule
\multirow{4}{*}{Banana} & 0 & \textbf{0.978} / \textbf{0.993} & 0.337 / 0.385 & 0.295 / 0.297 & 0.270 / 0.270 \\
& \(10^{-4}\) & \textbf{0.976} / \textbf{0.993} & 0.342 / 0.379 & 0.288 / 0.299 & 0.267 / 0.270 \\
& \(10^{-3}\) & \textbf{0.977} / \textbf{0.992} & 0.350 / 0.371 & 0.290 / 0.308 & 0.268 / 0.269 \\
& \(10^{-2}\) & \textbf{0.972} / \textbf{0.985} & 0.339 / 0.359 & 0.282 / 0.301 & 0.269 / 0.267 \\
\midrule
\multirow{4}{*}{Two moons} & 0 & \textsc{NF} / \textsc{NF} & 0.352 / 0.468 & 0.304 / 0.329 & 0.290 / 0.300 \\
& \(10^{-4}\) & \textbf{0.987} / \textbf{0.926} & 0.361 / 0.434 & 0.314 / 0.346 & 0.283 / 0.302 \\
& \(10^{-3}\) & \textbf{0.983} / 0.840 & 0.361 / 0.445 & 0.315 / 0.330 & 0.286 / 0.289 \\
& \(10^{-2}\) & \textbf{0.978} / \textbf{0.937} & 0.373 / 0.383 & 0.318 / 0.328 & 0.281 / 0.288 \\
\bottomrule
\end{tabular}
\par\vspace{2pt}
\parbox{0.94\textwidth}{\footnotesize\textbf{Conclusion.} The unregularized EMA-0 slice contains three non-finite failures.  Positive source KL penalties eliminate these non-finite outcomes and yield \(14/18\) successes at EMA \(=0\), but no experiments with large EMA decay rate succeeds.}
\end{table}

\begin{table}[H]
\centering
\caption{
Detailed S6 EMA-KL regularization sweep. Each entry reports ordinary / score-calibrated train weight final reward; higher is better.
}
\label{tab:app-2d-s6-round-old-vf-detail}
\scriptsize
\setlength{\tabcolsep}{5.0pt}
\renewcommand{\arraystretch}{1.10}
\begin{tabular}{llcccc}
\toprule
Dataset & Round-old VF \(\beta_{\mathrm{old}}\) & EMA \(=0\) & EMA \(=0.5\) & EMA \(=0.9\) & EMA \(=1\) \\
\midrule
\multirow{4}{*}{GMM} & 0 & \textbf{0.982} / 0.804 & 0.224 / 0.284 & 0.197 / 0.202 & 0.168 / 0.165 \\
& \(10^{-4}\) & \textbf{0.980} / 0.774 & 0.225 / 0.282 & 0.191 / 0.206 & 0.163 / 0.167 \\
& \(10^{-3}\) & \textbf{0.981} / 0.775 & 0.236 / 0.274 & 0.192 / 0.204 & 0.167 / 0.170 \\
& \(10^{-2}\) & \textbf{0.981} / 0.733 & 0.215 / 0.278 & 0.183 / 0.214 & 0.164 / 0.163 \\
\midrule
\multirow{4}{*}{Banana} & 0 & \textbf{0.970} / \textbf{0.984} & 0.346 / 0.358 & 0.292 / 0.291 & 0.269 / 0.270 \\
& \(10^{-4}\) & \textbf{0.969} / \textbf{0.984} & 0.351 / 0.364 & 0.284 / 0.290 & 0.267 / 0.270 \\
& \(10^{-3}\) & \textbf{0.970} / \textbf{0.985} & 0.349 / 0.352 & 0.293 / 0.299 & 0.267 / 0.267 \\
& \(10^{-2}\) & \textbf{0.971} / \textbf{0.984} & 0.330 / 0.365 & 0.280 / 0.300 & 0.269 / 0.265 \\
\midrule
\multirow{4}{*}{Two moons} & 0 & \textbf{0.983} / 0.899 & 0.369 / 0.408 & 0.307 / 0.320 & 0.288 / 0.292 \\
& \(10^{-4}\) & \textbf{0.985} / \textbf{0.934} & 0.350 / 0.398 & 0.310 / 0.328 & 0.283 / 0.297 \\
& \(10^{-3}\) & \textbf{0.980} / \textbf{0.924} & 0.372 / 0.402 & 0.317 / 0.317 & 0.288 / 0.287 \\
& \(10^{-2}\) & \textbf{0.978} / \textbf{0.935} & 0.364 / 0.384 & 0.317 / 0.328 & 0.281 / 0.288 \\
\bottomrule
\end{tabular}
\par\vspace{2pt}
\parbox{0.94\textwidth}{\footnotesize\textbf{Conclusion.} EMA-KL regularization yields \(19/24\) successes at EMA \(=0\), but none of the EMA \(=0.5,0.9,1\) settings reaches the reward gate 0.9.}
\end{table}

Together, \tabref{tab:app-2d-mechanism-sweeps} and
\tabref{tab:app-2d-s1-ema-distance-detail}--\tabref{tab:app-2d-s6-round-old-vf-detail}
show an EMA-dependent ordering among the tested configurations. A small EMA
decay is most consistently associated with passing the reward gate. The
ratio-clip sweep contains the only passing large-EMA configurations, but the
direction is non-monotonic. These single-seed results establish neither a
causal mechanism nor local or global clean-ratio calibration.

\subsection{High-D Toy Experiments}
\label{app:high-d-toy-experiments}

The high-dimensional toy experiments use the same audit logic as the 1D/2D
experiments, but stress the claims for known distributions whose geometry is
hard to inspect directly. They test whether the conclusions persist as the
dimension scales over \(d\in\{4,8,16,32\}\), following the same three RQs:
\begin{enumerate}
    \item[(RQ1)] Whether the pointwise decomposition holds.
    \item[(RQ2)] Whether the off-policy and on-policy conclusions in \tabref{tab:cfm-nll-answer} still hold.
    \item[(RQ3)] Whether the effective mechanisms in 1D and 2D toy experiments are still effective.
\end{enumerate}

\subsubsection{Experiment Setup}
The high-dimensional targets are 8-component Gaussian mixtures in \(\mathbb R^d\), with \(d\in\{4,8,16,32\}\).  We denote the datasets by GMM-4D, GMM-8D, GMM-16D, and GMM-32D.  For each dimension \(d\), the target is
\[
p_{\mathrm{data}}(x)
=\frac{1}{8}\sum_{k=0}^{7}
\mathcal N\!\left(x;\mu_k,\mathrm{diag}(\sigma_k^2)\right).
\]
For coordinate \(j\in\{0,\ldots,d-1\}\), the component means and diagonal standard deviations are generated deterministically as
\[
\mu_{k,j}=1.25\left(2\,\mathrm{bit}_{j\bmod 3}(k)-1\right),
\qquad
\sigma_{k,j}=0.55+0.15\left(((k+j)\bmod 3)-1\right),
\]
where \(\mathrm{bit}_{r}(k)\in\{0,1\}\) is the \(r\)-th binary bit of the component index \(k\).  Thus \(\mu_{k,j}\in\{-1.25,1.25\}\) and \(\sigma_{k,j}\in\{0.40,0.55,0.70\}\).  The reward is a bounded Gaussian bump centered at the first GMM component, \(c_{\mathrm{rew}}=\mu_0\),
\[
R(x)=\exp\!\left(-\frac{\|x-c_{\mathrm{rew}}\|_2^2}
{2\sigma_{\mathrm{rew}}^2}\right),
\]
where the reward-width parameter \(\sigma_{\mathrm{rew}}\) is set to \(0.92,1.30,1.84,2.60\) for dimensions 4, 8, 16, and 32, respectively.  A larger \(\sigma_{\mathrm{rew}}\) makes the high-reward region wider.

All High-D CFM runs use the same linear Gaussian path as the 1D/2D experiments,
with \(\varepsilon=0.10\).  \tabref{tab:app-highd-rq12-fixed-setup} and
\tabref{tab:app-highd-rq12-onpolicy-setup} collect the fixed-target and
on-policy configurations used for RQ1 and RQ2.

\begin{table}[H]
\centering
\caption{\textbf{High-D off-policy settings for RQ1 and RQ2.}  The formal audit covers the saved-checkpoint trajectory, whereas the high-budget audit evaluates only the final checkpoint.}
\label{tab:app-highd-rq12-fixed-setup}
\scriptsize
\setlength{\tabcolsep}{3.4pt}
\renewcommand{\arraystretch}{1.06}
\begin{tabularx}{\textwidth}{
>{\raggedright\arraybackslash}p{0.28\textwidth}
>{\centering\arraybackslash}p{0.19\textwidth}
>{\centering\arraybackslash}p{0.21\textwidth}
>{\centering\arraybackslash}X
}
\toprule
\textbf{SETTING}
& \textbf{TRAINING}
& \textbf{FORMAL AUDIT}
& \textbf{HIGH AUDIT} \\
\midrule
\multicolumn{4}{c}{\textbf{Protocol}} \\
\midrule
Targets / seed & \multicolumn{3}{c}{8-component GMMs in 4D, 8D, 16D, and 32D / 13} \\
Path / \(\varepsilon\) & \multicolumn{3}{c}{Linear Gaussian / 0.10} \\
Train weights & \(\{1,w_{\mathrm{sc}}\}\) & -- & -- \\
Readout weights & -- & \(\{1,w_{\mathrm{sc}}\}\) & \(\{1,w_{\mathrm{sc}}\}\) \\
Checkpoint scope & 10 fractional & All saved & Final only \\
\midrule
\multicolumn{4}{c}{\textbf{Optimization}} \\
\midrule
Hidden width & 96 & -- & -- \\
Optimizer & Adam & -- & -- \\
Learning rate & \(10^{-3}\) & -- & -- \\
Batch size & 2,048 & -- & -- \\
Optimization steps & 500,000 & -- & -- \\
Gradient-norm clip & 10 & -- & -- \\
\midrule
\multicolumn{4}{c}{\textbf{Audit Configuration}} \\
\midrule
Evaluation samples & -- & 64 & 128 \\
Monte Carlo samples / point & -- & 8 & 16 \\
Time-grid points & -- & 11 & 21 \\
ODE / sampling steps & -- & 100 / 100 & 160 / 160 \\
Generated samples & -- & 4,096 & 20,000 \\
Sample-quality metric & -- & RBF-MMD & RBF-MMD \\
\bottomrule
\end{tabularx}
\end{table}

\begin{table}[H]
\centering
\caption{\textbf{High-D on-policy settings for RQ1 and RQ2.}  Ordinary and score-calibrated denote the CFM-ratio readout used by GRPO.}
\label{tab:app-highd-rq12-onpolicy-setup}
\scriptsize
\setlength{\tabcolsep}{4.0pt}
\renewcommand{\arraystretch}{1.06}
\begin{tabularx}{\textwidth}{
>{\raggedright\arraybackslash}p{0.34\textwidth}
>{\centering\arraybackslash}X
>{\centering\arraybackslash}X
}
\toprule
\textbf{SETTING}
& \textbf{ORDINARY}
& \textbf{SCORE-CALIBRATED} \\
\midrule
\multicolumn{3}{c}{\textbf{Protocol}} \\
\midrule
Targets / seed & \multicolumn{2}{c}{GMM-4D, GMM-8D, GMM-16D, and GMM-32D / 13} \\
Source checkpoint & \multicolumn{2}{c}{Corresponding fixed-target \(w=1\), final} \\
CFM-ratio weight & \(w=1\) & \(w=w_{\mathrm{sc}}\) \\
Audit readout weights & \multicolumn{2}{c}{\(\{1,w_{\mathrm{sc}}\}\)} \\
\midrule
\multicolumn{3}{c}{\textbf{On-policy Optimization}} \\
\midrule
Hidden width & \multicolumn{2}{c}{96} \\
Rounds & \multicolumn{2}{c}{1,000} \\
Rollout samples / group size & \multicolumn{2}{c}{256 / 16} \\
Update steps / learning rate & \multicolumn{2}{c}{2 / \(2.5\times10^{-4}\)} \\
\midrule
\multicolumn{3}{c}{\textbf{Stabilization}} \\
\midrule
Advantage / ratio clip & \multicolumn{2}{c}{5 / 0.05} \\
Max. log-ratio / gradient norm & \multicolumn{2}{c}{2 / 1} \\
EMA decay & \multicolumn{2}{c}{0} \\
Fixed-source / round-old VF-MSE \(\beta\) & \multicolumn{2}{c}{0.1 / 0} \\
\midrule
\multicolumn{3}{c}{\textbf{Audit Configuration}} \\
\midrule
Checkpoints / audit samples & \multicolumn{2}{c}{20 fractional / 256} \\
MC samples / time-grid points & \multicolumn{2}{c}{8 / 11} \\
ODE / sampling steps & \multicolumn{2}{c}{100 / 100} \\
Ratio references & \multicolumn{2}{c}{Source and round-old} \\
Reward samples / reward gate & \multicolumn{2}{c}{4,096 / 0.90} \\
\bottomrule
\end{tabularx}
\end{table}

Grid-based density integration is unavailable in high dimension, so RQ1/RQ2 use RBF-MMD as the sample-quality proxy rather than density \(L_1\).  RQ3 separately studies the old-policy EMA decay (S1) and its interaction with the PPO ratio-clip width (S3).

\subsubsection{RQ1: Does the smoothed identity close numerically in high dimension?}
\textbf{The off-policy results audit numerical closure from 4D through 32D.}
\tabref{tab:app-highd-fixed-audit} reports both training weights rather than
only the \(w=1\) source checkpoints used by on-policy training. Across all
eight dataset/train-weight pairs, the independently estimated smoothed-identity
closure error is \(1.000\)--\(2.637\) for the ordinary readout and
\(1.028\)--\(2.296\) for the score-calibrated readout. The closure error grows
with dimension and measures the available numerical estimator, not independent
empirical truth of \thmref{thm:pw-practical-decomposition}.

\begin{table}[H]
\centering
\caption{
High-dimensional off-policy results at seed 13.  Each checkpoint is
evaluated with 128 points, 16 Monte Carlo samples per point, a 21-point time grid, and 160 ODE steps.  The sample-quality column is an RBF-MMD proxy.  Lower is better for all metrics.
The final two columns report smoothed-identity closure
\(\mathrm{MAE}(\ell_\varepsilon,\mathcal J_w+\mathcal G+H)\).
}
\label{tab:app-highd-fixed-audit}
\scriptsize
\setlength{\tabcolsep}{3.6pt}
\renewcommand{\arraystretch}{1.12}
\begin{tabular}{llccccc}
\toprule
Dataset
& Train weight
& MMD \(\downarrow\)
& \multicolumn{2}{c}{\(\mathrm{MAE}(\mathrm{NLL},\mathcal J_w+H)\downarrow\)}
& \multicolumn{2}{c}{\(\mathrm{MAE}(\ell_\varepsilon,\mathcal J_w+\mathcal G+H)\downarrow\)} \\
\cmidrule(lr){4-5}\cmidrule(lr){6-7}
& & & \(w=1\) & \(w=w_{\mathrm{sc}}\)
& \(w=1\) & \(w=w_{\mathrm{sc}}\) \\
\midrule
GMM-4D  & \(w=1\)               & 0.028 & 2.001  & 1.081 & 1.021 & 1.028 \\
        & \(w=w_{\mathrm{sc}}\) & 0.026 & 1.967  & 1.088 & 1.000 & 1.033 \\
GMM-8D  & \(w=1\)               & 0.033 & 3.447  & 1.387 & 1.392 & 1.238 \\
        & \(w=w_{\mathrm{sc}}\) & 0.037 & 3.485  & 1.386 & 1.392 & 1.226 \\
GMM-16D & \(w=1\)               & 0.023 & 8.108  & 2.000 & 1.880 & 1.711 \\
        & \(w=w_{\mathrm{sc}}\) & 0.029 & 8.156  & 2.009 & 1.895 & 1.743 \\
GMM-32D & \(w=1\)               & 0.035 & 19.402 & 3.012 & 2.617 & 2.290 \\
        & \(w=w_{\mathrm{sc}}\) & 0.027 & 19.201 & 2.912 & 2.637 & 2.296 \\
\bottomrule
\end{tabular}
\end{table}

\textbf{The on-policy closure estimator works at moderate dimension but fails numerically at 32D.}  \tabref{tab:app-highd-onpolicy-closure} shows moderate closure errors for several 4D--16D runs, but the estimated residual terms grow to the \(10^9\) scale for the ordinary checkpoint in 32D. We report these values solely as a failure of the available estimator; they neither support nor refute the analytic identity.

\begin{table}[H]
\centering
\caption{
High-dimensional on-policy numerical audit at seed 13.  \(\mathcal J_w+H\)
is the CFM-only endpoint MAE; ``Closure'' is
\(\mathrm{MAE}(\ell_\varepsilon,\mathcal J_w+\mathcal G+H)\). Lower is better.
}
\label{tab:app-highd-onpolicy-closure}
\scriptsize
\setlength{\tabcolsep}{3.6pt}
\renewcommand{\arraystretch}{1.12}
\begin{tabular}{llcccc}
\toprule
Dataset
& Train weight
& \multicolumn{2}{c}{\(w=1\) estimate}
& \multicolumn{2}{c}{\(w=w_{\mathrm{sc}}\) estimate} \\
\cmidrule(lr){3-4}\cmidrule(lr){5-6}
& & \(\mathcal J_w+H\) & Closure & \(\mathcal J_w+H\) & Closure \\
\midrule
GMM-4D  & $w = 1$ & 8.586 & 0.758 & 8.578 & 2.548 \\
        & $w = w_{\mathrm{sc}}$ & 9.946 & 2.944 & 9.461 & 2.191 \\
GMM-8D  & $w = 1$ & 11.641 & 1.011 & 10.968 & 0.913 \\
        & $w = w_{\mathrm{sc}}$ & \(1.056{\times}10^3\) & 8.535 & \(4.324{\times}10^3\) & 6.843 \\
GMM-16D & $w = 1$ & 13.643 & 2.480 & 14.900 & 2.187 \\
        & $w = w_{\mathrm{sc}}$ & \(1.862{\times}10^3\) & \(4.632{\times}10^3\) & \(1.065{\times}10^4\) & \(4.815{\times}10^3\) \\
GMM-32D & $w = 1$ & \(3.149{\times}10^3\) & \(6.233{\times}10^8\) & \(1.634{\times}10^4\) & \(3.224{\times}10^9\) \\
        & $w = w_{\mathrm{sc}}$ & \(1.227{\times}10^4\) & \(2.702{\times}10^7\) & \(7.166{\times}10^4\) & \(2.261{\times}10^7\) \\
\bottomrule
\end{tabular}
\end{table}

\subsubsection{RQ2: Do the off-policy and on-policy table conclusions scale?}
\textbf{The off-policy conclusions in \tabref{tab:cfm-nll-answer} hold across dimensions and training weights.}
\begin{itemize}[leftmargin=*]
    \item \textbf{During optimization:} In
    \figref{fig:app-highd-fixed-checkpoint-mae}, all four dimensions retain a non-negligible \(\mathrm{MAE}(\mathrm{NLL},\mathcal J_w+H)\) under both training weights. After convergence, the score-calibrated CFM-only MAE is more accurate than the ordinary CFM-only MAE.
    \item \textbf{After optimization:} In \tabref{tab:app-highd-fixed-audit}, the ordinary CFM-only endpoint MAE grows from \(1.967\) to \(19.402\), whereas the score-calibrated readout has MAE \(1.081\)--\(3.012\). Thus the population optimum learned by ordinary CFM does not make its pointwise objective a calibrated NLL, while the score-calibrated readout is substantially closer in these boundary-controlled GMM settings.
\end{itemize}

\begin{figure}[H]
\centering
\includegraphics[width=0.99\textwidth]{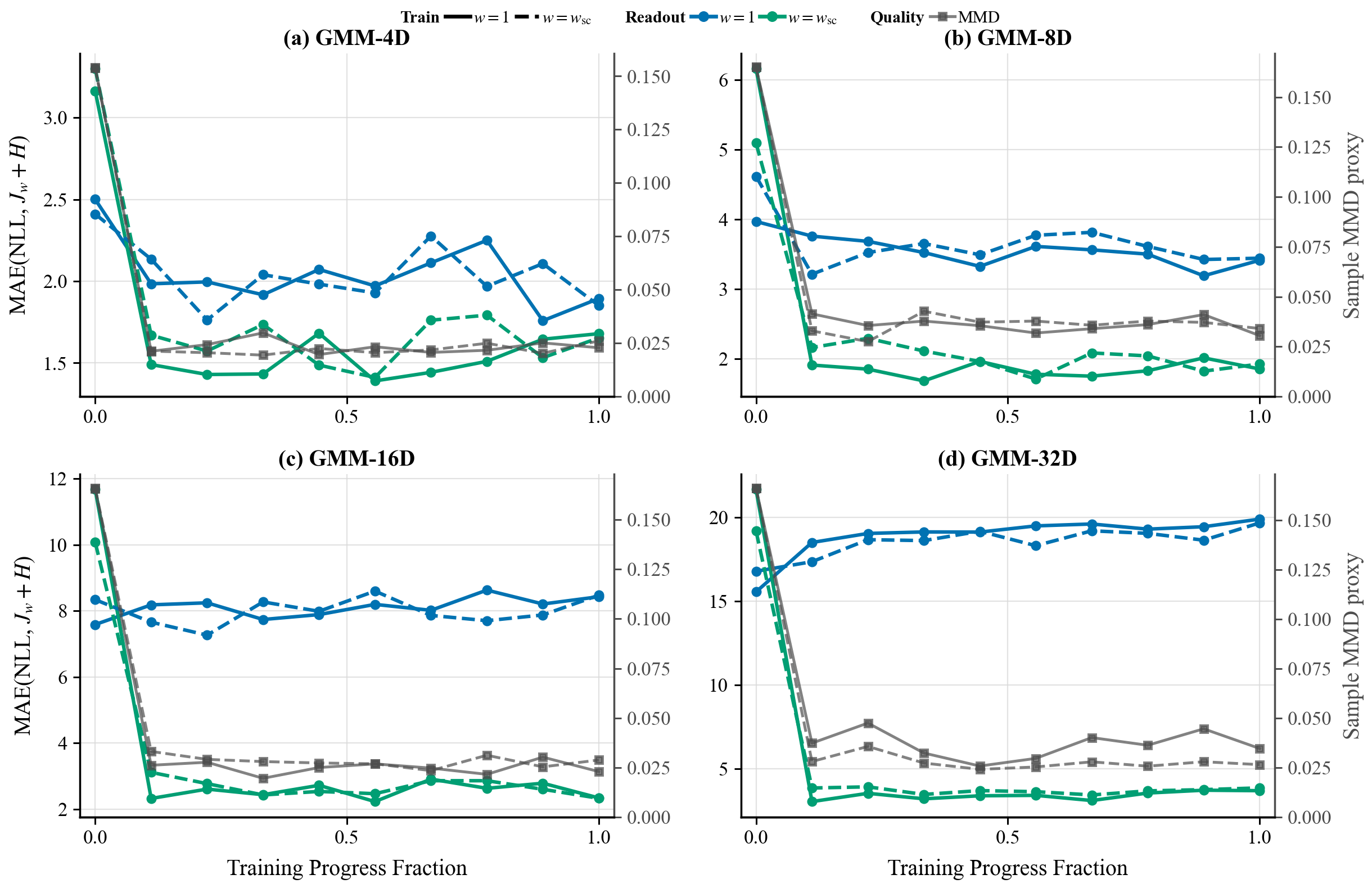}
\caption{
High-D off-policy checkpoint trajectories at seed 13.  The left axis reports the \(\mathrm{MAE}(\mathrm{NLL},\mathcal J_w+H)\); color denotes the readout weight and line style denotes the off-policy training weight.  Gray square curves report the RBF-MMD sample-quality proxy on the secondary axes, whose ranges are scaled separately by dimension.  Each checkpoint is evaluated on samples generated by that checkpoint, so the curves are self-sample audits rather than fixed-test-set learning curves.
}
\label{fig:app-highd-fixed-checkpoint-mae}
\end{figure}

\textbf{The on-policy conclusions are supported for ordinary CFM-ratio, with a scaling limitation for the score-calibrated variant.}
\begin{itemize}[leftmargin=*]
    \item \textbf{During optimization:} In \tabref{tab:app-highd-onpolicy-baseline}, ordinary CFM-ratio reaches final reward \(0.997,0.995,0.991,\) and \(0.910\) from 4D to 32D, passing the pre-specified 0.90 gate in every dimension.  Score-calibrated CFM-ratio passes only in 4D \((0.998)\) and falls to \(0.643,0.481,\) and \(0.209\), so the reward result supports a local optimization claim for the ordinary surrogate. We think this is because the weight $w = w_{\mathrm{sc}}$ will enlarge the noise scale of the CFM-ratio estimate, making the update signal unclear.
    \item \textbf{After optimization:} In \tabref{tab:app-highd-onpolicy-baseline}, for the ordinary variant with \(w=1\) readout, source-relative ratio MAE is larger than round-old-relative MAE in every dimension.  The separation grows from \(9.454\) versus \(0.815\) in 4D to \(1382.955\) versus \(2.617\) in 32D.  Therefore the CFM-only ratio is not a globally calibrated clean likelihood ratio. The association between refreshed references, smaller round-old error, and reward success does not by itself establish a local error guarantee.
\end{itemize}

\begin{table}[H]
\centering
\caption{
High-dimensional on-policy RQ2 results at seed 13 after 1,000 rounds.  Final
reward uses 4,096 generated samples.  Ratio MAE uses the ordinary CFM-ratio
variant with \(w=1\) readout at the final checkpoint; ``source'' compares with
the fixed-target source checkpoint and ``round-old'' with the current round's
old policy on the same rollout samples.  Bold reward values pass the 0.90 gate.
}
\label{tab:app-highd-onpolicy-baseline}
\scriptsize
\setlength{\tabcolsep}{4.5pt}
\renewcommand{\arraystretch}{1.12}
\begin{tabular}{lcccc}
\toprule
Dataset
& CFM final reward \(\uparrow\)
& SC-CFM final reward \(\uparrow\)
& CFM source MAE \(\downarrow\)
& CFM round-old MAE \(\downarrow\) \\
\midrule
GMM-4D  & \textbf{0.997} & \textbf{0.998} & 9.454 & 0.815 \\
GMM-8D  & \textbf{0.995} & 0.643 & 13.911 & 0.746 \\
GMM-16D & \textbf{0.991} & 0.481 & 18.298 & 1.574 \\
GMM-32D & \textbf{0.910} & 0.209 & 1382.955 & 2.617 \\
\bottomrule
\end{tabular}
\end{table}

\subsubsection{RQ3: Which control--stability associations persist in high dimension?}
As in the 1D and 2D studies, a setting is successful when its final sampled reward reaches at least \(0.90\).  The formal High-D E3 study contains two mechanism sweeps: S1 varies the old-policy EMA decay and S3 jointly varies the EMA decay and ratio clipping value.  Both sweeps use seed 13, run for 1,000 rounds, compare ordinary and score-calibrated CFM-ratio weights, and retain the source KL beta \(\beta = 0.1\). The ratio clipping value set is \(\{10^{-5},0.05,0.2,\mathrm{none}\}\).

\begin{table}[!htbp]
\centering
\caption{
Controlled High-D sweep for on-policy CFM-ratio training. Each cell reports ``best final reward; passing configurations / tested configurations'' at seed 13. Passing means final sampled reward at least \(0.90\); the fractions are not repeated-seed success probabilities.
}
\label{tab:app-highd-mechanism-sweep}
\scriptsize
\setlength{\tabcolsep}{2.0pt}
\renewcommand{\arraystretch}{1.14}
\begin{tabularx}{\textwidth}{
>{\raggedright\arraybackslash}p{0.15\textwidth}
>{\raggedright\arraybackslash}p{0.25\textwidth}
>{\centering\arraybackslash}p{0.12\textwidth}
>{\centering\arraybackslash}p{0.12\textwidth}
>{\centering\arraybackslash}p{0.12\textwidth}
>{\raggedright\arraybackslash}X
}
\toprule
Sweep & Settings
& \(\mathrm{EMA}=0\)
& \(\mathrm{EMA}=0.5\)
& \(\mathrm{EMA}=0.9\)
& \(\mathrm{EMA}=1\)
\\
\midrule
S1: EMA distance
& \(w\in\{1,w_{\mathrm{sc}}\}\)
& 0.998; 4/8
& 0.995; 7/8
& 0.985; 6/8
& 0.113; 0/8 \\
S3: 32D ratio clip \(\delta\)
& \(\delta\in\{10^{-5},0.05,0.2,\mathrm{none}\}\) \(\times\) \(w\in\{1,w_{\mathrm{sc}}\}\)
& 0.910; 1/8
& 0.982; 3/8
& 0.977; 2/8
& 0.459; 0/8 \\
\bottomrule
\end{tabularx}
\end{table}

The following tables report the detailed results.  Each paired entry is ``ordinary / score-calibrated''; bold values meet the \(0.90\) reward gate.
\begin{table}[H]
\centering
\caption{
Detailed High-D S1 old-policy EMA-decay sweep. Each entry reports ordinary
/ score-calibrated train-weight final reward at seed 13; higher is better.
Cells use the setting-dependent random streams of the screening run.
}
\label{tab:app-highd-s1-ema-distance-detail}
\scriptsize
\setlength{\tabcolsep}{7.0pt}
\renewcommand{\arraystretch}{1.12}
\begin{tabular}{lcccc}
\toprule
Dataset & EMA \(=0\) & EMA \(=0.5\) & EMA \(=0.9\) & EMA \(=1\) \\
\midrule
GMM-4D
& \textbf{0.997} / \textbf{0.998}
& \textbf{0.986} / \textbf{0.967}
& \textbf{0.985} / \textbf{0.926}
& 0.098 / 0.094 \\
GMM-8D
& \textbf{0.996} / 0.464
& \textbf{0.995} / \textbf{0.969}
& \textbf{0.979} / \textbf{0.914}
& 0.080 / 0.090 \\
GMM-16D
& \textbf{0.992} / 0.377
& \textbf{0.991} / \textbf{0.952}
& \textbf{0.959} / 0.892
& 0.074 / 0.093 \\
GMM-32D
& 0.590 / 0.188
& \textbf{0.971} / 0.083
& \textbf{0.951} / 0.211
& 0.078 / 0.113 \\
\bottomrule
\end{tabular}
\par\vspace{2pt}
\parbox{0.94\textwidth}{\footnotesize\textbf{Conclusion.} EMA \(=0.5\) has the largest fraction of passing S1 configurations (\(7/8\)), followed by EMA \(=0.9\) (\(6/8\)) and EMA \(=0\) (\(4/8\)); all EMA \(=1\) configurations fail. The score-calibrated weight becomes increasingly fragile with dimension. These single-seed associations do not establish a causal stabilization mechanism.}
\end{table}

\begin{table}[H]
\centering
\caption{
Detailed High-D S3 ratio-clip sweep.  Each entry reports ordinary /
score-calibrated train-weight final reward at seed 13; higher is better.
}
\label{tab:app-highd-s3-ratio-clip-detail}
\scriptsize
\setlength{\tabcolsep}{4.0pt}
\renewcommand{\arraystretch}{1.06}
\begin{tabular}{llcccc}
\toprule
Dataset & Ratio clip \(\delta\) & EMA \(=0\) & EMA \(=0.5\) & EMA \(=0.9\) & EMA \(=1\) \\
\midrule
\multirow{4}{*}{GMM-4D}
& \(10^{-5}\) & \textbf{0.905} / 0.649 & 0.480 / 0.506 & 0.168 / 0.238 & 0.082 / 0.085 \\
& 0.05 & \textbf{0.997} / \textbf{0.999} & \textbf{0.987} / \textbf{0.969} & \textbf{0.980} / \textbf{0.930} & 0.101 / 0.092 \\
& 0.2 & \textbf{0.996} / \textbf{0.998} & \textbf{0.996} / \textbf{0.999} & \textbf{0.990} / \textbf{0.962} & 0.118 / 0.100 \\
& None & \textbf{0.996} / \textbf{0.998} & \textbf{0.996} / \textbf{0.998} & \textbf{0.997} / \textbf{0.995} & 0.379 / 0.645 \\
\midrule
\multirow{4}{*}{GMM-8D}
& \(10^{-5}\) & \textbf{0.993} / 0.752 & 0.701 / 0.647 & 0.216 / 0.385 & 0.078 / 0.091 \\
& 0.05 & \textbf{0.996} / 0.486 & \textbf{0.996} / \textbf{0.964} & \textbf{0.973} / \textbf{0.917} & 0.076 / 0.089 \\
& 0.2 & \textbf{0.996} / 0.864 & \textbf{0.996} / \textbf{0.905} & \textbf{0.994} / \textbf{0.912} & 0.089 / 0.105 \\
& None & \textbf{0.995} / 0.899 & \textbf{0.997} / 0.850 & \textbf{0.993} / \textbf{0.986} & 0.216 / 0.543 \\
\midrule
\multirow{4}{*}{GMM-16D}
& \(10^{-5}\) & \textbf{0.957} / 0.452 & \textbf{0.926} / 0.738 & 0.352 / 0.529 & 0.073 / 0.092 \\
& 0.05 & \textbf{0.990} / 0.371 & \textbf{0.992} / \textbf{0.950} & \textbf{0.959} / 0.896 & 0.080 / 0.091 \\
& 0.2 & \textbf{0.974} / 0.271 & \textbf{0.995} / \textbf{0.957} & \textbf{0.987} / \textbf{0.932} & 0.086 / 0.104 \\
& None & \textbf{0.995} / 0.192 & \textbf{0.990} / \textbf{0.944} & \textbf{0.953} / 0.009 & 0.122 / 0.499 \\
\midrule
\multirow{4}{*}{GMM-32D}
& \(10^{-5}\) & 0.531 / 0.166 & \textbf{0.950} / 0.117 & 0.748 / 0.227 & 0.070 / 0.104 \\
& 0.05 & \textbf{0.910} / 0.199 & \textbf{0.982} / 0.116 & \textbf{0.948} / 0.196 & 0.072 / 0.111 \\
& 0.2 & 0.185 / 0.162 & \textbf{0.980} / 0.022 & \textbf{0.977} / 0.235 & 0.074 / 0.120 \\
& None & 0.225 / 0.157 & 0.056 / 0.558 & 0.889 / 0.064 & 0.095 / 0.459 \\
\bottomrule
\end{tabular}
\par\vspace{2pt}
\parbox{0.96\textwidth}{\footnotesize\textbf{Conclusion.} The ratio-clip effect is EMA-dependent and non-monotonic. As the dimension increases, EMA$ = 0$ performs worse than EMA$ = 0.5$ and EMA$ = 0.9$, and EMA$ = 1$ fails in all dimensions. The training stability is because of the training objective rather than the ratio estimation.}
\end{table}

Together, \tabref{tab:app-highd-mechanism-sweep},
\tabref{tab:app-highd-s1-ema-distance-detail}, and
\tabref{tab:app-highd-s3-ratio-clip-detail} support the following conclusions:
\begin{itemize}[leftmargin=*]
    \item In high dimension, training becomes more unstable, EMA$ = 0.5$ or EMA$ = 0.9$ is more stable than EMA$ = 0$ or EMA $= 1$. The instability is because of the training objective rather than the ratio estimation.
    \item The score-calibrated weight (i.e. $w = w_{\mathrm{sc}}$) becomes increasingly fragile with dimension and does not perform well in 32D at any EMA. This is because the score-calibrated weight can enlarge the noise scale.
    \item In 32D, after stabilizing the training, the ratio clipping value can be helpful to the update signal.
\end{itemize}

\subsection{Real Image Experiments}
\label{app:real-image-experiments}

\subsubsection{Experiment Setup}
The experiments in this section operate directly on raw MNIST and CIFAR-10
pixels.  Raw-pixel CNF likelihood and pointwise decomposition audits are inaccurate and prohibitively expensive at dimensions 784 and 3,072. We therefore do not use these runs as evidence for RQ1 or RQ2. The experiments' purpose is to test whether the two most useful on-policy controls from the synthetic experiments---EMA and ratio clipping---continue to affect reward optimization with image-valued velocity fields.
For each dataset, we first train an unconditional U-Net velocity field with ordinary fixed-target CFM and then freeze its EMA-weight checkpoint at step 50,000 as the initialization policy. The reward is the frozen classifier probability assigned to class 6. Thus the reported reward is bounded in \([0,1]\), but it is a task reward rather than a likelihood or an image-quality metric. \tabref{tab:app-image-raw-setup} gives the complete source and on-policy configurations used in the reported experiments.

\begin{table}[H]
\centering
\caption{\textbf{Raw-image settings for RQ3.}  ``Source checkpoint'' denotes the off-policy checkpoint loaded by every on-policy run.  Rollout and final evaluation sample counts are global across eight GPUs.}
\label{tab:app-image-raw-setup}
\scriptsize
\setlength{\tabcolsep}{3.2pt}
\renewcommand{\arraystretch}{1.06}
\begin{tabularx}{\textwidth}{
>{\raggedright\arraybackslash}p{0.27\textwidth}
>{\centering\arraybackslash}p{0.20\textwidth}
>{\centering\arraybackslash}p{0.22\textwidth}
>{\centering\arraybackslash}X
}
\toprule
\textbf{SETTING} & \textbf{SHARED} & \textbf{MNIST} & \textbf{CIFAR-10} \\
\midrule
\multicolumn{4}{c}{\textbf{Data and source model}} \\
\midrule
Input / dimension & Raw pixels & \(1\!\times\!28\!\times\!28\) / 784
& \(3\!\times\!32\!\times\!32\) / 3,072 \\
Data used for source CFM & Full class mixture & Train and test splits & Train split \\
Velocity field & U-Net & Two resolution levels & Four resolution levels with attention \\
Model range & -- & \([0,1]\) & \([-1,1]\) \\
Path / \(\varepsilon\) & Linear Gaussian / \(10^{-4}\) & -- & -- \\
Source objective & Ordinary CFM, \(w(t)=1\) & -- & -- \\
Source optimizer / LR & AdamW / \(10^{-4}\), cosine schedule & -- & -- \\
Source batch per GPU & -- & 256 & 64 \\
Configured source horizon & -- & 50,000 steps & 50,000 steps \\
Source checkpoint & Step 50,000, EMA weights & \multicolumn{2}{c}{Weight EMA decay 0.999} \\
Reward & Frozen class-6 probability & MNIST CNN & CIFAR-10 ResNet \\
\midrule
\multicolumn{4}{c}{\textbf{On-policy optimization}} \\
\midrule
Seed / hardware & \multicolumn{3}{c}{13 / 8 GPUs} \\
Ratio surrogate & \multicolumn{3}{c}{Ordinary CFM-only ratio} \\
Rounds / update steps & \multicolumn{3}{c}{100 / 1 per round} \\
Update LR / rollout samples & \multicolumn{3}{c}{\(2\!\times\!10^{-5}\) / 64} \\
Group size / advantage clip & \multicolumn{3}{c}{16 / 2} \\
Max log ratio / gradient clip & \multicolumn{3}{c}{2 / 0.1} \\
Source-VF MSE \(\beta_{\mathrm{src}}\) & \multicolumn{3}{c}{0.1 (configured through \texttt{kl\_beta})} \\
Round-old VF MSE \(\beta_{\mathrm{old}}\) & \multicolumn{3}{c}{0} \\
ODE steps / final samples & \multicolumn{3}{c}{100 / 128} \\
Sample projection & Final clamp & \([0,1]\) & \([-1,1]\) \\
Reward success gate & \multicolumn{3}{c}{Final sampled reward mean \(\geq 0.90\)} \\
\bottomrule
\end{tabularx}
\end{table}

We run two mechanism sweeps.  S1 varies the EMA decay \(\rho\in\{0,0.5,0.9,1\}\) while fixing the GRPO ratio clip to \(\delta=0.05\).  Here \(\rho=0\) refreshes the reference every round and \(\rho=1\) keeps the source checkpoint fixed.  S3 crosses the same four EMA values with ratio clipping value \(\delta\in\{10^{-5},0.05,0.2,\mathrm{None}\}\).  Every other setting is held fixed as in \tabref{tab:app-image-raw-setup}.

\subsubsection{Do EMA and ratio clipping remain useful on raw images?}

\paragraph{Summary.}
\tabref{tab:app-image-raw-mechanism-summary} reports the best final reward
and the number of settings passing the \(0.90\) reward gate for each EMA value. Both S1 and S3 results show that $EMA = 0$ perform best and ratio clipping is helpful to the update signal.

\begin{table}[H]
\centering
\caption{Raw-image RQ3 mechanism summary at seed 13.  Each cell is
``best final reward; successful settings / tested settings.''  The success
gate is final sampled reward mean \(\geq0.90\).}
\label{tab:app-image-raw-mechanism-summary}
\scriptsize
\setlength{\tabcolsep}{7.0pt}
\renewcommand{\arraystretch}{1.08}
\begin{tabular}{lcccc}
\toprule
Sweep & EMA \(=0\) & EMA \(=0.5\) & EMA \(=0.9\) & EMA \(=1\) \\
\midrule
S1: EMA distance & \textbf{1.000}; 2/2 & \textbf{0.984}; 2/2 & 0.818; 0/2 & 0.541; 0/2 \\
S3: Ratio clip & \textbf{1.000}; 6/8 & \textbf{1.000}; 6/8 & \textbf{1.000}; 3/8 & \textbf{0.992}; 1/8 \\
\bottomrule
\end{tabular}
\end{table}

\paragraph{S1: EMA distance.}
\tabref{tab:app-image-raw-s1-ema-detail} gives the individual S1 rewards. $EMA = 0$ performs best and $EMA = 1$ fails in all datasets.

\begin{table}[H]
\centering
\caption{Detailed raw-image S1 EMA-decay sweep. Entries are final-checkpoint
sampled reward means for ordinary CFM-ratio training at seed 13; higher is
better.  Bold values meet the \(0.90\) reward gate.}
\label{tab:app-image-raw-s1-ema-detail}
\scriptsize
\setlength{\tabcolsep}{9.0pt}
\renewcommand{\arraystretch}{1.08}
\begin{tabular}{lcccc}
\toprule
Dataset & EMA \(=0\) & EMA \(=0.5\) & EMA \(=0.9\) & EMA \(=1\) \\
\midrule
MNIST & \textbf{1.000} & \textbf{0.984} & 0.818 & 0.541 \\
CIFAR-10 & \textbf{1.000} & \textbf{0.942} & 0.799 & 0.272 \\
\bottomrule
\end{tabular}
\end{table}

\paragraph{S3: Ratio clipping.}
The detailed results in \tabref{tab:app-image-raw-s3-clip-detail} show that EMA $=0$ performs well and ratio clipping can help the training for larger EMA.

\begin{table}[H]
\centering
\caption{Detailed raw-image S3 ratio-clip sweep.  Entries are final-checkpoint
sampled reward means for ordinary CFM-ratio training at seed 13; higher is
better.  Bold values meet the \(0.90\) reward gate based on the unrounded
reward.}
\label{tab:app-image-raw-s3-clip-detail}
\scriptsize
\setlength{\tabcolsep}{7.0pt}
\renewcommand{\arraystretch}{1.08}
\begin{tabular}{llcccc}
\toprule
Dataset & Ratio clip \(\delta\) & EMA \(=0\) & EMA \(=0.5\) & EMA \(=0.9\) & EMA \(=1\) \\
\midrule
\multirow{4}{*}{MNIST}
& \(10^{-5}\) & \textbf{1.000} & \textbf{0.933} & 0.738 & 0.702 \\
& 0.05 & \textbf{1.000} & \textbf{0.984} & 0.742 & 0.571 \\
& 0.2 & \textbf{1.000} & \textbf{1.000} & 0.847 & 0.812 \\
& None & \textbf{1.000} & \textbf{1.000} & \textbf{1.000} & \textbf{0.992} \\
\midrule
\multirow{4}{*}{CIFAR-10}
& \(10^{-5}\) & \textbf{1.000} & 0.815 & \textbf{0.918} & 0.673 \\
& 0.05 & \textbf{0.952} & 0.799 & 0.775 & 0.769 \\
& 0.2 & 0.731 & \textbf{0.942} & 0.870 & 0.614 \\
& None & 0.647 & \textbf{1.000} & \textbf{0.953} & 0.899 \\
\bottomrule
\end{tabular}
\end{table}

\end{document}